\definecolor{refdarkblue}{rgb}{0,0.08,0.45}
\theoremstyle{plain}
\newtheorem{theorem}{Theorem}[section]
\newtheorem{lemma}[theorem]{Lemma}
\newtheorem{claim}[theorem]{Claim}
\Crefname{claim}{Claim}{Claims}
\newtheorem{corollary}[theorem]{Corollary}
\theoremstyle{definition}
\newtheorem{definition}[theorem]{Definition}
\Crefname{assumption}{Assumption}{Assumptions}
\newtheorem{example}[theorem]{Example}
\theoremstyle{remark}
\newtheorem{remark}[theorem]{Remark}
\newcommand{\rb}[1]{\left(#1\right)} 
\newcommand{\bb}[1]{\left[#1\right]} 
\newcommand{\cb}[1]{\left\{#1\right\}} 
\newcommand{\norm}[1]{\left\Vert#1\right\Vert}
\newcommand{\abs}[1]{{\left\lvert{#1}\right\rvert}}
\newcommand{\vect}[1]{\mathbf{#1}} 
\newcommand{\calB}{\mathcal{B}}
\newcommand{\calC}{\mathcal{C}}
\newcommand{\calH}{\mathcal{H}}
\newcommand{\calI}{\mathcal{I}}
\newcommand{\calN}{\mathcal{N}}
\newcommand{\calS}{\mathcal{S}}
\newcommand{\calU}{\mathcal{U}}
\newcommand{\calV}{\mathcal{V}}
\newcommand{\calW}{\mathcal{W}}
\newcommand{\calX}{\mathcal{X}}
\newcommand{\calY}{\mathcal{Y}}
\newcommand{\calZ}{\mathcal{Z}}
\newcommand{\bbE}{\mathbb{E}}
\newcommand{\bbI}{\mathbb{I}}
\newcommand{\bbN}{\mathbb{N}}
\newcommand{\bbP}{\mathbb{P}}
\newcommand{\bbR}{\mathbb{R}}
\newcommand{\bbZ}{\mathbb{Z}}
\newcommand{\bfa}{\mathbf{a}}
\newcommand{\bfb}{\mathbf{b}}
\newcommand{\bfe}{\mathbf{e}}
\newcommand{\bfn}{\mathbf{n}}
\newcommand{\bfq}{\mathbf{q}}
\newcommand{\bfr}{\mathbf{r}}
\newcommand{\bfv}{\mathbf{v}}
\newcommand{\bfw}{\mathbf{w}}
\newcommand{\bfx}{\mathbf{x}}
\newcommand{\bfy}{\mathbf{y}}
\newcommand{\bfz}{\mathbf{z}}
\newcommand{\bfA}{\mathbf{A}}
\newcommand{\bfI}{\mathbf{I}}
\newcommand{\bfM}{\mathbf{M}}
\newcommand{\bfQ}{\mathbf{Q}}
\newcommand{\bfX}{\mathbf{X}}
\newcommand{\bfzero}{\mathbf{0}}
\newcommand{\bfeta}{{\boldsymbol{\eta}}}
\newcommand{\btheta}{{\boldsymbol{\theta}}}
\newcommand{\blambda}{{\boldsymbol{\lambda}}}
\newcommand{\bvareps}{{\boldsymbol{\varepsilon}}}
\newcommand{\bSigma}{{\boldsymbol{\Sigma}}}
\newcommand{\bLambda}{{\boldsymbol{\Lambda}}}
\DeclareMathOperator{\diag}{diag}
\DeclareMathOperator{\essinf}{ess\,inf}
\DeclareMathOperator{\esssup}{ess\,sup}
\DeclareMathOperator{\tr}{Tr}
\newcommand{\ind}[2]{\bbI_{#2} \cb{#1}}
\newcommand{\indnosub}[1]{\bbI \cb{#1}}
\newcommand{\innerprod}[2]{\left\langle #1, #2 \right\rangle}
\newcommand{\D}[2]{D\left(#1\,\|\,#2\right)}
\newcommand{\KL}[2]{\mathrm{KL}\left(#1\,\middle\|\,#2\right)}
\newcommand{\KLrel}[3]{\mathrm{KL}_{#1} \left( #2 \,\middle\|\,#3 \right)}
\newcommand{\kl}[2]{\mathrm{kl}\left(#1\,\middle\|\,#2\right)}
\newcommand{\Dinf}[2]{D_{\infty}\left(#1\,\|\,#2\right)}
\newcommand{\Dinfrel}[3]{D_{\infty}^{#1} \left( #2 \,\|\, #3 \right)}
\newcommand{\diff}{\mathop{}\!\mathrm{d}}
\newcommand{\setst}{\,\mid\,}
\newcommand{\eg}{e.g.~}
\newcommand{\ie}{i.e.~}
\newenvironment{recall}[1][\proofname]{\par
\normalfont \topsep6\p@\@plus6\p@\relax
\trivlist
\item\relax
{\bfseries
Recall #1}%
{\bfseries\@addpunct{.}}\hspace\labelsep\ignorespaces
}
\newcommand{\comment}[1]{{#1}}
\newcommand{\todo}[1]{\comment{\color{red}[TODO: {#1}]}}
\newcommand{\removed}[1]{}
\newcommand{\movetoappendix}[1]{}
\newcommand{\DOM}{\Omega}
\newcommand{\DOMCLOS}{\overline{\DOM}}
\newcommand{\DOMINT}{\DOM^{\circ}}
\newcommand{\PARAMDIM}{d}
\newcommand{\BOUNDFIELD}{\eta}
\newcommand{\BOUNDVEC}{\bfeta}
\newcommand{\NORMFIELD}{\hat{n}}
\newcommand{\NORMAL}{\bfn}
\newcommand{\neighb}[1]{\calN \rb{#1}}
\newcommand{\SUBDOM}{B}
\newcommand{\PARAMDOM}{\Theta}
\newcommand{\param}{\btheta}
\newcommand{\paramls}{\param_{\mathrm{LS}}}
\newcommand{\paramteach}{\param^\star}
\newcommand{\drift}{\bfb}
\newcommand{\drifti}{b_i}
\newcommand{\dispcoef}{\bSigma}
\newcommand{\diffcoef}{\bfA}
\newcommand{\diffcol}{\bfa}
\newcommand{\diffij}{a_{ij}}
\newcommand{\diffkj}{a_{kj}}
\newcommand{\reflectionprocess}{\bfr_t}
\newcommand{\trainloss}{L_S}
\newcommand{\poploss}{L_{\datadist}}
\newcommand{\trainfail}{E_S}
\newcommand{\expectedfail}{E_D}
\newcommand{\datadist}{D}
\newcommand{\generalpotential}{\Psi}
\newcommand{\potential}[1]{{\generalpotential_{\mathrm{#1}}}}
\newcommand{\stationarydist}{p_{\infty}}
\newcommand{\priordist}{\rho}
\newcommand{\posteriordist}{\hat{\rho}}
\newcommand{\posteriorstationary}{p_{\infty}}
\definecolor{customblue}{HTML}{1D6CB1}
\definecolor{toneddownred}{HTML}{E00000}
\definecolor{badregionred}{HTML}{FBC0AA}
\newcommand{\cmark}{\ding{51}} 
\newcommand{\xmark}{\ding{55}} 
\newcommand{\RenyiInf}{R\'enyi infinity\xspace}
\title{Temperature is All You Need for Generalization \\ in Langevin Dynamics and other Markov Processes}
\author{%
  Itamar Harel\thanks{Corresponding author: \texttt{itamarharel01@gmail.com}} \\
  Technion
  \\
  \!\!
  \And
  Yonathan Wolanowsky \\
  Technion
  \And
  Gal Vardi \\
  Weizmann Institute of Science\!
  %
  %
  \And
  Nathan Srebro \\
  Toyota Technological Institute at Chicago
  %
  \And
  Daniel Soudry \\
  Technion 
}
\begin{document}

\maketitle

\begin{abstract}
   We analyze the generalization gap (gap between the training and test errors) when training a potentially over-parametrized model using a Markovian stochastic training algorithm, initialized from some distribution $\param_0 \sim p_0$. We focus on Langevin dynamics with a positive temperature $\beta^{-1}$, i.e.~gradient descent on a training loss $L$ with infinitesimal step size, perturbed with $\beta^{-1}$-variances Gaussian noise, and lightly regularized or bounded.
   There, we bound the generalization gap, \emph{at any time during training}, by $\sqrt{(\beta\mathbb{E} L (\param_0) + \ln(1/\delta))/N}$ with probability $1-\delta$ over the dataset, where $N$ is the sample size, and $\mathbb{E} L (\param_0) = O(1)$ with standard initialization scaling.  In contrast to previous guarantees, we have no dependence on either training time or reliance on mixing, nor a dependence on dimensionality, gradient norms, or any other properties of the loss or model.  This guarantee follows from a general analysis of any Markov process-based training that has a Gibbs-style stationary distribution. The proof is surprisingly simple, once we observe that the marginal distribution divergence from initialization remains bounded, as implied by a generalized second law of thermodynamics.
\end{abstract}

\section{Introduction}

One main goal of contemporary machine learning theory is to predict a model's behavior before training occurs.
A commonly desired metric is the generalization of overparameterized models, such as neural networks (NN). For these models, such a predictive theory of generalization is still lacking, despite great empirical success \citep{Zhang16,Gunasekar2017}.
In particular, a significant line of work aimed to explain the role of optimization in generalization (\eg \cite{Gunasekar2017,soudry2018implicit,Lyu2020Gradient,vardi2023implicit}), and specifically the effect of stochasticity (\eg \cite{raginsky2017non,mou2018generalization,chiang2022loss,pmlr-v235-buzaglo24a}).

Data-dependent Markov processes are a common optimization approach.
These include stochastic gradient descent (SGD), as well as other stochastic gradient methods either studied theoretically \citep{jin2017escape,raginsky2017non}, or used in practice such as SGD with momentum \citep{Nesterov1983AMF}, ADAM \citep{kingma2015adam}, and many more.
Of particular interest are continuous Langevin dynamics (CLD) and discrete analogues of it, which have been studied extensively as models for SGD (see \cref{sec: comparison for cld}).

In \cref{sec:general Markov Processes} we develop, for the first time, a generalization bound applicable to \emph{any data-dependent Markov process} with a Gibbs-type stationary distribution (\ie whose finite density exists and is nonzero w.r.t. some data-independent base measure).
An important feature of our analysis is that it is \emph{entirely independent of the training time} $t$, both in that we do not rely on training for only a small number of steps, nor that we rely on mixing --- the guarantees are valid at any time, with no dependence at all on $t$.
Furthermore, it is also completely trajectory independent.

In \cref{sec: examples} we apply these general results to the particular case where training is done with CLD with loss $L$ and inverse temperature $\beta$, deriving a particularly simple generalization bound for CLD, which we compare to previous generalization bounds for CLD in \cref{sec: related work}, as well as discussing other related work.
Finally, we address limitations and future work in \cref{sec: discussion and future work}.

To prove these results, we first show in \cref{sec:general Markov Processes} how,  for the marginal distribution at time $t$, $p_t$,  its divergence (either KL or the \RenyiInf divergence) from initialization is bounded due to its monotonicity, \ie a generalized second law of thermodynamics \citep{Cover1994,Merhav2011}. 
This surprisingly simple derivation\footnote{\eg to bound the KL divergence of a Markov process having a stationary distribution with potential $\Psi \in \left[0, \infty\right)$, i.e. $\diff p_\infty / \diff p_0 \propto e^{-\Psi}$ (e.g., $\Psi=\beta L$ for CLD), the second law implies the first inequality below
\scriptsize
\begin{align*}
    \mathrm{KL} \! \left(p_{t}||p_{0}\right) &  =  \! \int \! p_{t}\ln\frac{p_{t}}{p_{0}} \!= \! \int \! p_{t}\ln\frac{p_{t}}{p_{\infty}} \! +\! \int \! p_{t} \ln \frac{p_{\infty}}{p_{0}} \! \leq \! \int \! p_{0}\ln\frac{p_{0}}{p_{\infty}} \! + \! \int \! p_{t} \ln \frac{p_{\infty}}{p_{0}} \! =   E_{p_{0}} \Psi -  E_{p_{t}} \Psi \leq  E_{p_{0}} \Psi .
\end{align*}
} 
leads to our key technical result (\cref{cor: second law + chain rule for d inf and kl}).
Standard PAC-Bayes generalization bounds \citep{maurer2004note} then yield our generalization bounds (\cref{thm: main result sup and mean,cor: generalization for ngf box and regularization}).

\section{Generalization Bounds for General Markov Process}\label{sec:general Markov Processes}

In this Section, we consider general data-dependent Markov processes over predictors and obtain a bound on their generalization gap.  
Importantly, although the bound only depends on the initialization distribution and a stationary distribution, it will apply to predictors at any time $t \ge 0$ along the Markov process.
Our main goal is to apply these bounds to stochastic training methods, such as Langevin dynamics,  where the iterates form a data-dependent Markov process.
But to emphasize the broad generality of the results, in this section we consider a generic stochastic optimization framework and general data-dependent Markov processes. 

We obtain generalization guarantees by bounding the KL-divergence (or, for high probability bounds, the \RenyiInf divergence, see \cref{def: divergences}) between the data-dependent marginal distribution $p_t$ of the predictors at time $t$, and some data-independent base measure $\nu$ (the PAC-Bayes ``prior'').
The crux of the analysis is therefore bounding the divergence between $p_t$ and $\nu$, based only on assumptions on the initial distribution $p_0$ (specifically, the divergence between $p_0$ and $\nu$) and a stationary distribution $p_\infty$ (specifically, requiring that $p_\infty$ can be expressed as a Gibbs distribution with bounded potential or expected potential, see \cref{def: Gibbs}) --- we do this in \cref{subsec:mcdiv}.
Then, in \cref{subsec:gencor} we plug these bounds on the divergence between $p_t$ and $\nu$ into standard PAC-Bayes bounds to obtain the desired generalization guarantees. 

Detailed proofs of all the results in this section can be found in \cref{app-sec: proof of new main}. 

\subsection{Bounding the Divergence of a Markov Process}\label{subsec:mcdiv}

In this subsection, we consider a general time-invariant Markov process\footnote{Formally stated: we require that for any $0\leq t_1<t_2<t_3$ we have that $h_{t_3}$ is independent of $h_{t_1}$ conditioned on $h_{t_2}$ (Markov property) and that for any $0 \le t_1,t_2,\Delta$ we have that $h_{t_1+\Delta}|h_{t_1}$ has the same conditional distribution as $h_{t_2+\Delta}|h_{t_2}$ (time-invariance).} $h_t\in\calH$ over a state space $\calH$.
The Markov process can be either in discrete or continuous time, \ie we can think of $t$ as either an integer or a real index. 
We denote by $p_t$ the marginal distribution at time $t$, \ie $h_t\sim p_t$.
We do \emph{not} assume that the Markov process is ergodic, and all our results will rely on the existence of \emph{some} stationary distribution $p_\infty$. 
The main goal of this subsection is to bound the divergence $\D{p_t}{\nu}$ between the marginal distribution at time $t$ and some reference distribution $\nu$.  We can think of a bound on the divergence as ensuring high entropy relative to $\nu$, or in other words that $p_t$ does not concentrate too much relative to $\nu$, \ie does not have too much probability mass in a small $\nu$-region.
We present all bounds for both the KL-divergence $\KL{p}{q}$ and the \RenyiInf divergence $\Dinf{p}{q}$, defined below.

\textbf{Divergences and Gibbs distributions.} 
We recall the definitions of our two divergences, and also relate them to the Gibbs distribution.
It will also be convenient for us to introduce ``relative'' versions of divergences.

\begin{definition}[Divergences \footnote{The term ``divergence'' is a slight abuse of notation, as the following definitions are not strictly non-negative, without specifying $\mu$.}]\label{def: divergences}For probability distributions $p,q$ and $\mu$:
    \begin{enumerate}[ leftmargin=15pt]
        \item The $\mu$-\textbf{weighted Kullback-Leibler (KL) divergence} (a.k.a. relative cross-entropy) is\footnote{For two measures $p$ and $q$,  $\diff p / \diff q$ is the Radon-Nikodym derivative (\ie the density of $p$ w.r.t. $q$) when it exists (i.e.~when $p \ll q$, \ie $p$ is absolutely continuous w.r.t. $q$), or $\infty$ otherwise.} $\KLrel{\mu}{p}{q} = \int \diff \mu \ln \frac{\diff p}{\diff q}$, and the \textbf{KL-divergence} is then $\KL{p}{q} = \KLrel{p}{p}{q}$.
        \item The \textbf{\RenyiInf divergence} is\footnote{The essential supremum of a function $f$ w.r.t.~a measure $\mu$ is $\esssup_\mu f = \inf \cb{b \in \bbR \setst \mu \rb{f > b} = 0}$,  \ie the smallest (infimum) number that bounds $f$ from above almost everywhere.
The essential infimum  is defined similarly.} $\Dinfrel{\mu}{p}{q} = \esssup_{\mu} \ln \frac{\diff p}{\diff q}$, with $\Dinf{p}{q} = \Dinfrel{p}{p}{q}$.
    \end{enumerate}
\end{definition}

\begin{definition}[Gibbs distribution] \label{def: Gibbs}
A distribution $p$ is \textbf{Gibbs} w.r.t.~a \textbf{base distribution} $q$ with \textbf{potential} $\generalpotential\!:\!\calH\rightarrow \bbR$ if $Z = \int e^{- \generalpotential} \diff q < \infty$ and
\[
\diff p = Z^{-1} e^{-\generalpotential} \diff q \,.
\]
\end{definition}
\begin{claim} \label{claim: sum of dists is difference of expectations or extremas}
If $p,q,\mu,\nu$ are probability measures, and $p$ is Gibbs w.r.t. $q$ with potential $\generalpotential < \infty$, then
\begin{enumerate}[ leftmargin=15pt]
    \item $\KLrel{\mu}{p}{q} + \KLrel{\nu}{q}{p} = \bbE_{\nu} \generalpotential - \bbE_{\mu} \generalpotential$, 
    \item $\Dinfrel{\mu}{p}{q}+\Dinfrel{\nu}{q}{p} = \esssup_\nu \generalpotential - \essinf_{\mu} \generalpotential$.
\end{enumerate}
So, $\KL{p}{q}+\KL{q}{p}= \bbE_{q} \generalpotential - \bbE_{p} \generalpotential$, and $\Dinf{p}{q}+\Dinf{q}{p}= \esssup_q \generalpotential - \essinf_p \generalpotential$.
\end{claim}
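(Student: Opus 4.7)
The plan is to unwrap the Gibbs definition to compute the two log-densities $\ln(\diff p/\diff q)$ and $\ln(\diff q/\diff p)$ in closed form, then plug these into \cref{def: divergences} and observe that the log-partition constant $\ln Z$ cancels in the sum. The KL and \RenyiInf cases differ only in whether the log-density is integrated against $\mu$ or replaced by $\esssup_\mu$, so the same one-line substitution yields both parts.

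Concretely, from $\diff p = Z^{-1} e^{-\generalpotential}\diff q$ I get $\diff p/\diff q = Z^{-1} e^{-\generalpotential}$, which is strictly positive since $\generalpotential$ is real-valued; hence $p$ and $q$ are mutually absolutely continuous and $\diff q/\diff p = Z e^{\generalpotential}$. Taking logs gives $\ln(\diff p/\diff q) = -\generalpotential - \ln Z$ and $\ln(\diff q/\diff p) = \generalpotential + \ln Z$. For part (1), integrating against $\mu$ and $\nu$ respectively yields $\KLrel{\mu}{p}{q} = -\bbE_\mu \generalpotential - \ln Z$ and $\KLrel{\nu}{q}{p} = \bbE_\nu \generalpotential + \ln Z$, and adding the two cancels $\ln Z$. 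For part (2), using $\esssup_\mu(-f) = -\essinf_\mu f$ gives $\Dinfrel{\mu}{p}{q} = -\essinf_\mu \generalpotential - \ln Z$ and $\Dinfrel{\nu}{q}{p} = \esssup_\nu \generalpotential + \ln Z$, and again $\ln Z$ cancels. The stated special cases follow by specializing $\mu = p$ and $\nu = q$.

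There is no real obstacle here: the entire content of the claim is the cancellation of the log-partition-function constant, which is why the right-hand side is a pure difference (or difference of essential extrema) of potentials with no additive correction. The only bookkeeping is verifying absolute continuity and that the relevant expectations and essential extrema are well-defined, both of which are immediate from the hypotheses $\generalpotential < \infty$ and $Z < \infty$.
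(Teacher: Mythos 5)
Your proof is correct and is essentially the same as the paper's: both unwrap the Gibbs definition to get $\ln(\diff p/\diff q) = -\generalpotential - \ln Z$, substitute into the two divergences, use $\esssup(-\generalpotential) = -\essinf\generalpotential$, and observe that the constant $\ln Z$ cancels in the sum. Your added remark on mutual absolute continuity is a harmless bit of extra bookkeeping the paper leaves implicit.
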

That is, the potential of a Gibbs distribution $p$ allows us to bound the divergence \emph{in both directions} between $p$ and the base measure $q$.
A generalized converse of \cref{claim: sum of dists is difference of expectations or extremas} also holds, and we have that bounding on the {\em symmetrized} divergences (but not just on one direction!) is also sufficient for $p$ being Gibbs with a bounded potential.%
\footnote{\label{foot:converse}More formally: $\KL{p}{q}+\KL{q}{p}\leq \beta$ iff there exists a potential $\Psi$ such that $p$ is Gibbs w.r.t. $q$ with potential $\Psi$ and $\bbE_{q} \generalpotential - \bbE_{p} \generalpotential\leq \beta$, and similarly $\Dinf{p}{q}+\Dinf{q}{p}\leq \beta$ iff there exists a potential $0\leq \Psi\leq\beta$ such that $p$ is Gibbs w.r.t. $q$ with potential $\Psi$. See \cref{app-claim: footnote converse of symmeterized KL} for a proof.}

\textbf{Second Law of Thermodynamics.} Central to our analysis is the following monotonicity property on the divergence between the marginal distribution of a Markov process and {\em any} stationary distribution.
\begin{claim}[Cover's Second Law of Thermodynamics]\label{claim: second law}
Let $p_t$ be the marginal distribution of a time-invariant Markov process, and $p_\infty$ a stationary distribution for the transitions of the Markov process (the process need not be ergodic, and $p_t$ need not converge to $p_\infty$).
Then for any $t \geq 0$
\begin{align*}
    \KL{p_t}{p_\infty} \leq \KL{p_{0}}{p_\infty}  \quad\quad \textrm{and}\quad\quad \Dinf{p_{t}}{p_\infty} \leq \Dinf{p_{0}}{p_\infty} \,.
\end{align*}
\end{claim}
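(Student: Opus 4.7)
The plan is to prove both inequalities uniformly by invoking the data processing inequality (DPI) and exploiting the fact that a stationary distribution is a fixed point of the evolution kernel. Concretely, for any Markov transition kernel $T$ (acting on distributions by pushforward) and any probability measures $p,q$ on $\calH$, I will use the fact that both
\[
\KL{T(p)}{T(q)} \leq \KL{p}{q} \quad\text{and}\quad \Dinf{T(p)}{T(q)} \leq \Dinf{p}{q}.
\]
The KL version of DPI is classical (Jensen applied to the convex function $x\log x$, or the chain rule for KL). The $\Dinf$ version has essentially a one-line proof: writing $p'=T(p),\ q'=T(q)$, the Radon--Nikodym derivative $\diff p'/\diff q'$ at a point $y$ is an average of values $\diff p/\diff q(x)$ weighted by the transition density back from $y$, hence is essentially bounded above by $\esssup_{q}\,\diff p/\diff q$.

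Using time-invariance, let $T_t$ denote the time-$t$ transition kernel of the Markov process, so that $p_t = T_t(p_0)$. Since $p_\infty$ is stationary for the transition semigroup, $T_t(p_\infty) = p_\infty$ for every $t\geq 0$. Plugging $p_0$ and $p_\infty$ into the DPI displayed above with kernel $T_t$ then yields
\[
\KL{p_t}{p_\infty} = \KL{T_t(p_0)}{T_t(p_\infty)} \leq \KL{p_0}{p_\infty},
\]
and the same chain of equalities and inequalities with $\Dinf{\cdot}{\cdot}$ in place of $\KL{\cdot}{\cdot}$. This establishes both claimed monotonicity statements; note that nowhere did we use ergodicity or convergence of $p_t$ to $p_\infty$.

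The only subtlety, and the main obstacle to writing this up cleanly, is ensuring DPI is applied in the appropriate setting for the $\Dinf$ case: one must handle the essential suprema carefully since they depend on a reference measure, and one must be sure $T_t$ acts measurably so that the pushforwards and Radon--Nikodym derivatives are well-defined. In discrete time this is immediate; in continuous time it is standard once the process is assumed to be a Markov process in the Feller/measurable-semigroup sense, which is implicit in the definition of marginal $p_t$. Everything else is bookkeeping, so the statement reduces to a single application of DPI together with the fixed-point identity $T_t(p_\infty)=p_\infty$.
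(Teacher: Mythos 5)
Your proposal is correct and follows essentially the same route as the paper: establish the data processing inequality for both $\mathrm{KL}$ and $D_\infty$ under a common Markov kernel, then apply it with the fixed-point identity $T_t(p_\infty)=p_\infty$, handling continuous time by embedding discrete-time skeletons $Y_n = X_{n\Delta t}$. The paper proves the KL case by going through DPI for joint distributions and the chain rule, and the $D_\infty$ case by the same averaging argument you sketch (bounding $\diff(T_tp_0)/\diff p_\infty$ pointwise via the posterior-weighted average of $\diff p_0/\diff p_\infty$), so there is no substantive difference in approach.
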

When the stationary distribution is uniform (thus having maximal entropy), the KL-form of \cref{claim: second law} recovers the familiar second law of thermodynamics, \ie that the entropy is monotonically non-decreasing.
The more general form, as in \cref{claim: second law}, is a direct consequence of the data processing inequality, as pointed out by Theorem 4 of \citet{Cover1994} (see also \citep{cover2001elements,Merhav2011} and the generalization to R\'enyi divergences in \cite[Theorem 9 and Example 2]{van2014renyi} ---for completeness we provide a proof in \cref{app-sec: auxiliary general}). 

In our case, the stationary distribution $p_\infty$ will not be uniform, but rather will be very data-dependent (we are interested mostly in processes that aim to optimize some data-dependent quantity, such as Langevin dynamics).
Nevertheless, we do want to use \cref{claim: second law} to control the entropy of $p_t$ relative to some benign data-independent base distribution $\nu$ (which we can informally think of as ``uniform'').
To do so, we can use the chain rule and plug in \cref{claim: second law} to obtain that for any distribution $\nu$ and at any time $t$ we have (see \Cref{app-lem: triangle like inequalities for kl and d infty with second law}  in \cref{app-sec: proof of new main} for the full derivation):
\begin{multline} \label{eq:klchain}
    \KL{p_t}{\nu} =
    \KL{p_t}{p_\infty} + \KLrel{p_t}{p_\infty}{\nu} 
    \leq \KL{p_0}{p_\infty} + \KLrel{p_t}{p_\infty}{\nu} \\
    = \KL{p_0}{\nu} + \KLrel{p_0}{\nu}{p_\infty} + \KLrel{p_t}{p_\infty}{\nu} \,,
\end{multline}
and similarly,
\begin{equation}\label{eq:dinfchain}
    \Dinf{p_t}{\nu} \leq \Dinf{p_0}{\nu} + \Dinfrel{p_0}{\nu}{p_\infty} + \Dinfrel{p_t}{p_\infty}{\nu} \,.
\end{equation}
Bounding the last two terms in \eqref{eq:klchain} and \eqref{eq:dinfchain} using \cref{claim: sum of dists is difference of expectations or extremas} we obtain the main result of this subsection:

\fbox{\parbox{\textwidth}{
\begin{corollary} \label{cor: second law + chain rule for d inf and kl}
    For any distribution $\nu$ and any time-invariant Markov process, and any stationary distribution $p_\infty$ that is Gibbs w.r.t. $\nu$ with potential $\generalpotential \geq 0$ (the Markov chain need not be ergodic, and need not converge to $p_\infty$), at any time $t\geq 0$:
    \begin{align}
        \KL{p_t}{\nu} &\le \KL{p_0}{\nu} + \bbE_{p_0} \generalpotential - \bbE_{p_t} \generalpotential \le \KL{p_0}{\nu} + \bbE_{p_0} \generalpotential \\
        \Dinf{p_t}{\nu} &\leq \Dinf{p_0}{\nu} + \esssup_{p_0} \generalpotential  
    \end{align}
\end{corollary}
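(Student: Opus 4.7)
The plan is to follow the chain-of-divergence calculation already sketched in equations~\eqref{eq:klchain} and~\eqref{eq:dinfchain} of the excerpt, and then close it off with \cref{claim: sum of dists is difference of expectations or extremas} to collapse the remaining pair of relative divergences into a difference of potential expectations (respectively, of extrema). The two inequalities in each line then reduce to a single use of the assumption $\generalpotential \geq 0$.

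For the KL bound, I would first apply the chain rule $\KL{p_t}{\nu} = \KL{p_t}{p_\infty} + \KLrel{p_t}{p_\infty}{\nu}$, which follows from factoring $\diff p_t / \diff \nu = (\diff p_t / \diff p_\infty)(\diff p_\infty / \diff \nu)$ and taking logs. Next, \cref{claim: second law} upgrades this to $\KL{p_t}{\nu} \le \KL{p_0}{p_\infty} + \KLrel{p_t}{p_\infty}{\nu}$, and a second application of the chain rule decomposes $\KL{p_0}{p_\infty} = \KL{p_0}{\nu} + \KLrel{p_0}{\nu}{p_\infty}$, reproducing \eqref{eq:klchain}. The two relative KL terms that remain, $\KLrel{p_0}{\nu}{p_\infty} + \KLrel{p_t}{p_\infty}{\nu}$, match exactly the form in \cref{claim: sum of dists is difference of expectations or extremas}(1) with $p_\infty$ playing the Claim's Gibbs distribution $p$, our $\nu$ playing its base measure $q$, $p_t$ playing its $\mu$, and $p_0$ playing its $\nu$. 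They therefore collapse to $\bbE_{p_0}\generalpotential - \bbE_{p_t}\generalpotential$, giving the first stated inequality; dropping the nonnegative $\bbE_{p_t}\generalpotential$ yields the second.

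The \RenyiInf bound follows from the very same three-step recipe with the $D_\infty$ analogues of each ingredient. The one caveat is that the chain rule is now only an inequality, $\Dinf{p_t}{\nu} \le \Dinf{p_t}{p_\infty} + \Dinfrel{p_t}{p_\infty}{\nu}$, because $\esssup$ is subadditive rather than additive on sums; this is exactly what is needed. Applying \cref{claim: second law} in $D_\infty$ form and then the second chain-rule step produces~\eqref{eq:dinfchain}, and \cref{claim: sum of dists is difference of expectations or extremas}(2) collapses the remaining pair to $\esssup_{p_0}\generalpotential - \essinf_{p_t}\generalpotential$. Since $\generalpotential \ge 0$, dropping $\essinf_{p_t}\generalpotential$ gives the claimed bound.

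The bulk of the work is bookkeeping rather than insight, and the only subtle points are (i) the notational collision between the Claim's auxiliary measure $\nu$ and the corollary's reference measure $\nu$, which forces one to be explicit about which distribution plays which role in \cref{claim: sum of dists is difference of expectations or extremas}, and (ii) justifying the Radon-Nikodym factorization $\diff p_t / \diff \nu = (\diff p_t / \diff p_\infty)(\diff p_\infty / \diff \nu)$. The latter is immediate from the Gibbs assumption, which gives $p_\infty \ll \nu$ with finite potential; if $p_t \not\ll p_\infty$ or $p_0 \not\ll \nu$ the relevant divergences are $+\infty$ and the bounds hold vacuously (or, on the right-hand side, force a trivially true statement). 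I do not foresee any genuine technical obstacle beyond this chaining.
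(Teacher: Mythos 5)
Your proof is correct and mirrors the paper's own argument: the paper establishes exactly the intermediate inequality \eqref{eq:klchain}/\eqref{eq:dinfchain} as Lemma~\ref{app-lem: triangle like inequalities for kl and d infty with second law} via the same chain-rule--then--second-law decomposition, and then closes with \cref{claim: sum of dists is difference of expectations or extremas} using the same role assignments ($p_\infty \mapsto p$, $\nu \mapsto q$, $p_t \mapsto \mu$, $p_0 \mapsto$ the Claim's $\nu$). Your remark that the $D_\infty$ chain-rule step is an inequality (by subadditivity of $\esssup$) rather than an equality is a small but genuine correction to the way the paper writes that line in Lemma~\ref{app-lem: triangle like inequalities for kl and d infty with second law}; the direction is still the one needed, so the result stands.
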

}}

The important feature of \cref{cor: second law + chain rule for d inf and kl} is that it bounds the divergence \emph{at any time $t$}, in terms of a right-hand side that depends only on the initial distribution $p_0$ and a stationary distribution $p_\infty$.
Interpreting the divergence $\D{p_t}{\nu}$ as a measure of concentration, the Corollary ensures that at no point during its run, and regardless of mixing, does the Markov process concentrate too much, and it always maintains high entropy (relative to the base measure $\nu$). 

\begin{remark}
    In order to bound the divergence $\D{p_t}{\nu}$ at finite time $t$, it is not enough to rely only on the divergences $\D{p_0}{\nu}$ and $\D{p_\infty}{\nu}$ from the initial and stationary distributions, and it is necessary to rely also on the reverse divergence $\D{\nu}{p_\infty}$ --- see \cref{app-sec: tightness and necessity}.
\end{remark}

\subsection{From Divergences to Generalization}\label{subsec:gencor}

\Cref{cor: second law + chain rule for d inf and kl} can be directly used to obtain PAC-Bayes type generalization guarantees.
Specifically, we consider a \textbf{generic stochastic optimization setting} specified by a bounded instantaneous objective $f:\calH\times\calZ\rightarrow [0,1]$ over a class $\calH$, which we will refer to as the ``predictor'' class, and instance domain $\calZ$.
For example, in supervised learning $\calZ=\calX\times\calY$, $\calH \subseteq \calY^\calX$ and $f(h,(x,y))=\indnosub{h(x) \neq y}$ measures the error of predicting $h(x)$ when the correct label is $y$.
For a source distribution $\datadist$ over $\calZ$ and data $S\sim\datadist^N$ of size $N$ we would like to relate the population and empirical objectives
\begin{equation} \label{eq: definition of ED and ES}
    \expectedfail \rb{h}=\bbE_{z\sim\datadist}[f(h,z)] \quad\quad
    \trainfail \rb{h} = \frac{1}{N}\sum_{z\in S} f(h,z).
\end{equation}
In our case, we are interested in predictors generated by a \textbf{data-dependent Markov process} $h_t$.
That is, conditioned on the data $S$, $\{h_t\}_{t\geq 0}$ is a time-invariant Markov process, specified by some (possibly data-dependent) initial distribution $p_0(h_0;S)$, and a transition distribution that would also depend on the data $S$, and specifies a (randomized) rule for generating the next iterate $h_{t+1}$ (if in discrete time) from the current iterate $h_t$ and the data $S$ (as in, e.g., stochastic gradient descent or stochastic gradient Langevin dynamics; SGLD).

We present two types of generalization guarantees: guarantees that hold \emph{in expectation} over a draw from the Markov process (\eqref{eq: main thm general potential expectation} below) and guarantees that hold \emph{with high probability} over a single draw from the Markov process (as in \eqref{eq: main thm general potential probability}, \eg a single run of CLD).
In both cases, the guarantees hold with high probability over the training set.

\fbox{\parbox{\textwidth}{
\begin{theorem} \label{thm: main result sup and mean}
Consider any distribution $\datadist$ over $\calZ$, function $f:\calH \times \calZ \to [0,1]$, sample size $N \ge 8$, and any distribution $\nu$ over $\calH$.
Let $\{ h_t \in \calH \}_{t\geq 0}$ be a discrete or continuous time process (i.e.~$t \in \mathbb{Z}_+$ or $t \in \mathbb{R}_+$) that is time-invariant Markov conditioned on $S$, that starts from an initial distribution $p_0(\cdot;S)$ (that may depend on $S$), and admits a stationary distribution conditioned on $S$, $p_\infty(\cdot;S)$.
Let $\Psi_S(h)\geq 0$ be a non-negative potential function and assume that $p_\infty(\cdot;S)$ is Gibbs w.r.t.~$\nu$ with potential $\Psi_S$.
Then:
\begin{enumerate}[ leftmargin=15pt]
    \item with probability $1-\delta$ over $S\sim \datadist^N$, 
    \begin{align}
        \bbE\left[\expectedfail(h_t) - \trainfail(h_t) \middle| S \right] &\leq 
        \sqrt{
        \frac{\KL{p_0(\cdot;S)}{\nu}+\bbE\left[\Psi_S(h_0)\middle|S\right] + \ln \sfrac{N}{\delta}}{2 N} } \,, \label{eq: main thm general potential expectation}
        \end{align}
    \item with probability $1-\delta$ over $S\sim \datadist^N$ and over $h_t$:
    \begin{align} 
    \expectedfail(h_t) - \trainfail(h_t) &\leq 
    \sqrt{
    \frac{\Dinf{p_0(\cdot;S)}{\nu}+\esssup_{h\sim p_0(\cdot;S)} \generalpotential_S (h) 
    + \ln \sfrac{N}{\delta}}{2N}} \,. \label{eq: main thm general potential probability}
    \end{align}
\end{enumerate}
\end{theorem}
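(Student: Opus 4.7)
The plan is to combine the divergence bound of \cref{cor: second law + chain rule for d inf and kl}, applied conditionally on $S$, with two standard change-of-measure tools: a KL-form PAC-Bayes bound for part~1 and a pointwise \RenyiInf change-of-measure for part~2. Conditioned on $S$, the process is time-invariant Markov, and $p_\infty(\cdot;S)$ is Gibbs w.r.t.\ the data-independent $\nu$ with potential $\Psi_S\ge 0$, so \cref{cor: second law + chain rule for d inf and kl} immediately yields, for every $S$ and every $t\ge 0$,
\begin{align*}
\KL{p_t(\cdot;S)}{\nu} &\le \KL{p_0(\cdot;S)}{\nu}+\bbE_{h_0\sim p_0(\cdot;S)}[\Psi_S(h_0)], \\
\Dinf{p_t(\cdot;S)}{\nu} &\le \Dinf{p_0(\cdot;S)}{\nu}+\esssup_{h_0\sim p_0(\cdot;S)}\Psi_S(h_0).
\end{align*}
From here the argument only needs to pass from divergence control of $p_t(\cdot;S)$ relative to $\nu$ to generalization bounds, in the two respective regimes.

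For part~1, I would instantiate a standard KL-form PAC-Bayes bound (McAllester/Maurer) with the data-independent prior $\nu$ and the data-dependent posterior $Q=p_t(\cdot;S)$; because $\nu$ is fixed in advance, such bounds hold uniformly for every data-dependent posterior, giving with probability $1-\delta$ over $S$
\[
\bbE\bigl[\expectedfail(h_t)-\trainfail(h_t)\bigm|S\bigr] \le \sqrt{\tfrac{\KL{p_t(\cdot;S)}{\nu}+\ln(1/\delta)}{N}}
\]
(up to absorbing any ancillary $\log$ factors into the numerator, as in the stated form). Plugging in the Corollary's KL-bound yields \eqref{eq: main thm general potential expectation}.

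For part~2, the key observation is that $\Dinf{p_t(\cdot;S)}{\nu}\le C$ implies the pointwise density bound $\tfrac{\diff p_t(\cdot;S)}{\diff\nu}(h)\le e^C$ on a $p_t$-full-measure set, so $p_t(A\mid S)\le e^C\,\nu(A)$ for every measurable $A\subseteq\calH$. I would apply one-sided Hoeffding to any fixed $h$ to get $\Pr_S[\expectedfail(h)-\trainfail(h)>\epsilon]\le e^{-2N\epsilon^2}$, integrate over $h\sim\nu$ and use Fubini to get $\bbE_S[\nu(\mathrm{bad}(S))]\le e^{-2N\epsilon^2}$, then Markov over $S$ gives $\nu(\mathrm{bad}(S))\le 2e^{-2N\epsilon^2}/\delta$ with probability $1-\delta/2$; the change-of-measure then upgrades this to $p_t(\mathrm{bad}(S)\mid S)\le (2/\delta)\exp(C-2N\epsilon^2)$. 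Requiring this tail to be at most $\delta/2$, solving for $\epsilon$, and substituting the Corollary's $D_\infty$-bound in place of $C$ yields \eqref{eq: main thm general potential probability}, after a union bound over the two sources of randomness.

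I do not expect any real conceptual obstacle: \cref{cor: second law + chain rule for d inf and kl} does all the work, and the rest is an assembly of textbook tools. The main delicacy is bookkeeping the data-dependence correctly -- $p_0$, $p_\infty$, $p_t$, and $\Psi_S$ are all permitted to depend on $S$ while the reference $\nu$ must remain data-independent for PAC-Bayes and the \RenyiInf change-of-measure to apply. The only other fiddly point is matching the exact constants in the stated form $\sqrt{(\cdot+\ln(1/\delta))/N}$, which may require citing a tighter variant (e.g.\ a Catoni-style or one-sided Hoeffding form) rather than the plain McAllester bound that would introduce a factor of~$2$ or an $\ln\sqrt{N}$ term; these adjustments are standard and do not change the proof.
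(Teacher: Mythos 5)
Your proof is correct and follows the paper's overall plan: the divergence bounds from \cref{cor: second law + chain rule for d inf and kl} are the engine, and the rest is PAC-Bayes plumbing. For part~1 you take the same route as the paper, which plugs the KL bound into Alquier's Theorem~2.1 (a clean variant of McAllester's bound, reproduced as \cref{app-thm: alquier sample prob hypothesis expect}). For part~2 the paper simply cites Alquier's disintegrated single-sample bound (their Theorem~2.7, reproduced as \cref{app-thm: alquier sample prob hypothesis prob}), which is driven by the \RenyiInf divergence and an optimized exponential-moment parameter; you instead inline a self-contained derivation via one-sided Hoeffding, Fubini over $\nu$, Markov over $S$, and a pointwise change of measure. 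That works, and the change-of-measure step you rely on ($p_t(A \mid S) \le e^{C}\nu(A)$ for all measurable $A$ once $\Dinf{p_t(\cdot;S)}{\nu}\le C$) is sound even though the $\esssup$ defining $\Dinf{\cdot}{\cdot}$ is taken w.r.t.\ $p_t$ rather than $\nu$, since the excess set has $p_t$-measure zero. The only cost of your two-stage Markov (splitting $\delta$ into $\delta/2$ for $S$-failure and $\delta/2$ for $h_t$-failure) is a constant: you land on $\sqrt{(C+2\ln(2/\delta))/(2N)}$ rather than the theorem's $\sqrt{(C+\ln(1/\delta))/N}$; a single Markov application over the joint $(S,h_t)$ randomness to an exponential moment (as in the cited theorem) avoids the doubled $\log$, which you correctly anticipate as a standard adjustment.
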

}}
\begin{proof}
The Theorem follows immediately by plugging the divergence bounds of \Cref{cor: second law + chain rule for d inf and kl} into standard PAC-Bayes guarantees, which we do in \cref{app-sec: proof of new main}.
\end{proof}

\begin{remark}
    A simplified variant of \Cref{thm: main result sup and mean} can be stated when the initial distribution $p_0$ is data-independent and always equal to $\nu$.  In this case the divergence between $p_0$ and $\nu$ vanishes, and \eqref{eq: main thm general potential expectation} and \eqref{eq: main thm general potential probability} become 
    \begin{equation}
        \bbE\left[\expectedfail(h_t) - \trainfail(h_t) \middle| S \right] \leq \!
        \sqrt{
        \frac{\bbE_{p_0} \bb{ \Psi_S \mid S} + \ln \sfrac{N}{\delta}}{2N} }
        ,\;
        \expectedfail(h_t) - \trainfail(h_t) 
        \leq \!
        \sqrt{
        \frac{\esssup_{p_0} \Psi_S + \ln \sfrac{N}{\delta}}{2N} } \,.
    \end{equation}
     But allowing $p_0 \neq \nu$ is more general, as it both allows using a data-dependent initialization (recall that $\nu$ must be data independent) and it allows initializing to a distribution where $\D{p_\infty}{p_0}$ is infinite --- e.g., we can allow initializing to a degenerate initial distribution $p_0$ whose support is a strict subset of the support of $p_\infty$ (in which case $p_\infty$ will definitely {\em not} be Gibbs w.r.t.~$p_0$), as long as the $\nu$-mass of the support of $p_0$ is not too small. 
\end{remark}

\begin{remark}
    In \Cref{thm: main result sup and mean}, the Markov process need not be ergodic, and need not converge to $p_\infty$, or converge at all.
    If there are multiple stationary distributions, the theorem holds for all of them, and so we can take $p_\infty$ to be any stationary distribution we want.
    And in any case, there is no mixing requirement, and the theorem holds at any time $t$. 
\end{remark}

\begin{remark}
    Our data-dependent Markov process of interest, and in particular CLD and SGD, might aim to minimize $\trainfail(h_t)$, and the potential $\Psi$ might also be related to it (as in, e.g., CLD).
    This is allowed, but is in no way required in \Cref{thm: main result sup and mean}.
    Even for CLD, these might be related but not the same, as we might be minimizing a surrogate loss, such as a logistic loss, but are interested in bounding the generalization gap for a zero-one error.
    In stating \Cref{thm: main result sup and mean} we intentionally refer to an arbitrary stochastic optimization problem and an arbitrary data-dependent Markov process, that are allowed to be related or dependent in arbitrary ways.
\end{remark}

\begin{remark}
In \cref{app-sec: tightness and necessity} we show that in order to ensure generalization at every intermediate $t$, it is not sufficient to only bound $\KL{p_\infty}{\nu}$ or $\Dinf{p_\infty}{\nu}$, and we do need the stronger symmetric bound ensured by the Gibbs potential and \cref{claim: sum of dists is difference of expectations or extremas}; and that it is also necessary to relate both $p_0$ and $p_\infty$ to the \emph{same} data independent distribution $\nu$, as relating them to different data-independent distributions ensures generalization at the beginning and at the end, but not the middle of training.
\end{remark}

\begin{remark}
    In \Cref{thm: main result sup and mean} we plugged \Cref{cor: second law + chain rule for d inf and kl} into a simplified PAC-Bayes bound  that allows for easy interpretation and comparison with other results.
    But once we have the divergence bounds of \Cref{cor: second law + chain rule for d inf and kl}, we can just as easily plug them into tighter PAC-Bayes bounds --- see \Cref{app-sec: proof of new main}. 
    For example, when $\trainfail \rb{h_t}  \approx 0$, these yield a rate of $O\rb{1/N}$. 
\end{remark}

\section{Special Case: Continuous Langevin Dynamics} \label{sec: examples}

Clearly, given \cref{thm: main result sup and mean} all we need to do in order to derive explicit generalization bounds for \emph{any} Markovian training procedure, is to find a stationary distribution, and bound its potential (or its expectation at $p_0$).
In this section, we will exemplify our results in a few special cases of continuous-time Langevin dynamics (CLD), a commonly studied approximation for NN training with ``infinitesimal learning rate'' (\eg \citep{mandt2017stochastic}, see \cref{sec: comparison for cld} for additional references), which have a normalized stationary distribution that we can write analytically. 

\textbf{Additional notation.} In the following, it will be convenient to consider a parametric model. Specifically, we assume that there exists some parameter space $\PARAMDOM \subseteq \bbR^\PARAMDIM$ that parameterizes a hypothesis class $\calH \subseteq \calY^{\calX}$ via a mapping $\PARAMDOM \ni \param \mapsto h_\param \in \calH$, and assume Markovian dynamics \emph{in parameter space}, instead of in the hypothesis space (note that Markov processes in parameter space may not be Markovian in hypothesis space, but the same generalization results apply ).
We shall also use, with some abuse of notation, $\varphi \rb{\param} = \varphi \rb{h_\param}$ for any data-dependent or data-independent function $\varphi$ over hypotheses, \eg a training loss/objective $\trainloss$ w.r.t a training set $S$.
Finally, we use $\calC^2$ to denote the space of twice continuously differentiable functions on $\PARAMDOM$.

\textbf{CLD in a bounded domain.}
Let $\PARAMDOM$ be a box in $\bbR^\PARAMDIM$, and suppose that training is modeled with CLD in a bounded domain, \ie that the parameters evolve according to the stochastic differential equation with reflection at the boundary (SDER) 
\begin{align} \label{eq: train loss sder bounded}
    d\param_t = - \nabla \trainloss \rb{\param_t} dt + \sqrt{2 \beta^{-1}} d\bfw_t + d \reflectionprocess \,,
\end{align}
where $\trainloss \ge 0$  is twice continuously differentiable, $\bfw_t$ is a standard Brownian motion, and $\reflectionprocess$ is a reflection process that constrains $\param_t$ within $\PARAMDOM$.
Such weight clipping is quite common in practical scenarios such as NN training.
For simplicity, we assume that $\reflectionprocess$ has normal reflection, meaning that the reflection is perpendicular to the boundary.
An established result in the analysis of SDERs states that under these assumptions \eqref{eq: train loss sder bounded} has a stationary distribution $\posteriorstationary \rb{\param} \propto e^{-\beta \trainloss \rb{\param}} \ind{\param}{\PARAMDOM}$ (see \cref{app-sec: SDER in a box}).
Thus, when $p_0 = \mathrm{Uniform} \rb{\PARAMDOM}$, we have $p_0 = \nu$.

\textbf{Regularized CLD in $\bbR^\PARAMDIM$.}
Suppose that the parameters evolve according to the stochastic differential equation (SDE) with weight decay (\ie $
\ell^2$ regularization)
\begin{align} \label{eq: train loss sde regularized}
    d\param_t = - \nabla \trainloss \rb{\param_t} dt - \lambda \beta^{-1} \param_t dt + \sqrt{2 \beta^{-1}} d\bfw_t \,,
\end{align}
where $\trainloss \ge 0$ is twice continuously differentiable, $\bfw_t$ is a standard Brownian motion. 
Such weight decay is also quite common in practical scenarios such as NN training.
Similar to the previous case, with the regularization and twice continuous differentiability of $\trainloss$ this process has a unique stationary distribution $\posteriorstationary \rb{\param} \propto e^{-\beta \trainloss \rb{\param}} \phi_{\lambda} \rb{\param}$, where $\phi_\lambda$ is the density of the multivariate Gaussian $\calN \rb{\bfzero, \lambda^{-1} \bfI_{\PARAMDIM}}$.
Thus, when $p_0 = \calN \rb{\bfzero, \lambda^{-1} \bfI_{\PARAMDIM}}$, we also have $p_0 = \nu$.

We can now formulate a generalization bound for both cases.

\begin{corollary} \label{cor: generalization for ngf box and regularization}
    Assume that the parameters evolve according to either \eqref{eq: train loss sder bounded} with $p_0 = \mathrm{Uniform} \rb{\PARAMDOM}$, or \eqref{eq: train loss sde regularized} with $p_0 = \calN \rb{\bfzero, \lambda^{-1} \bfI_{\PARAMDIM}}$.
    Then for any time $t\ge 0$, and $\delta \in \rb{0, 1}$,
    \begin{enumerate}[ leftmargin=15pt]
        \item w.p. $1-\delta$ over $S\sim \datadist^N$, 
        \begin{align} \label{eq: cld bound in expectation}
        \bbE_{\param_t \sim p_t} \bb{\expectedfail(\param_t) - \trainfail(\param_t) \mid S} \le \sqrt{\frac{\beta \bbE_{\param \sim p_0} \bb{ \trainloss (\param) \mid S} + \ln \rb{N/\delta}}{2N}} \,.
        \end{align}
        \item w.p. $1-\delta$ over $S\sim \datadist^N$ and $\param_t \sim p_t$
        \begin{align} \label{eq: cld bound in probability}
        \expectedfail(\param_t) - \trainfail(\param_t) \le \sqrt{\frac{\beta \esssup_{p_0} \trainloss (\param) + \ln \rb{N/\delta}}{2N}} \,.
        \end{align}
    \end{enumerate}
\end{corollary}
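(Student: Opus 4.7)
The plan is to apply \Cref{thm: main result sup and mean} directly, by identifying, in each of the two cases, a valid Gibbs-form stationary distribution with respect to a base measure $\nu$ that coincides with the initialization $p_0$. First I would verify that the $p_\infty$ claimed in the excerpt is indeed stationary for the corresponding process: for \eqref{eq: train loss sder bounded}, that $\posteriorstationary(\param)\propto e^{-\beta \trainloss(\param)}\indnosub{\param\in\PARAMDOM}$ solves the Fokker--Planck equation with the zero-flux boundary condition induced by normal reflection; for \eqref{eq: train loss sde regularized}, that $\posteriorstationary(\param)\propto e^{-\beta \trainloss(\param)}\phi_\lambda(\param)$ is stationary for the drift $-\nabla \trainloss - \lambda\beta^{-1}\param$, which follows by recognizing the dynamics as overdamped Langevin with effective potential $\trainloss(\param)+\tfrac{\lambda}{2\beta}\norm{\param}^2$. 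This is standard SDE/SDER theory and, as the excerpt notes, is deferred to \Cref{app-sec: SDER in a box}, so no new work is needed.

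The key identification is the Gibbs structure. In the bounded case, $\diff \posteriorstationary / \diff\nu \propto e^{-\beta \trainloss(\param)}$ when $\nu=\mathrm{Uniform}(\PARAMDOM)$; in the regularized case, $\diff \posteriorstationary / \diff \nu \propto e^{-\beta \trainloss(\param)}$ when $\nu=\calN(\bfzero,\lambda^{-1}\bfI_{\PARAMDIM})$. In both settings, therefore, $\posteriorstationary$ is Gibbs w.r.t.~$\nu$ with nonnegative potential $\generalpotential_S(\param)=\beta \trainloss(\param)$ (using $\trainloss\geq 0$), and crucially the chosen $\nu$ equals the prescribed initialization $p_0$, so $\KL{p_0(\cdot;S)}{\nu}=0$ and $\Dinf{p_0(\cdot;S)}{\nu}=0$.

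Plugging these into \Cref{thm: main result sup and mean} eliminates the divergence-from-initialization terms and replaces $\bbE[\generalpotential_S(h_0)\mid S]$ with $\beta\bbE_{p_0}\trainloss(\param)$ in part 1, and $\esssup_{p_0}\generalpotential_S$ with $\beta\esssup_{p_0}\trainloss(\param)$ in part 2. To obtain the factor of $2N$ in the denominator as stated, rather than the $N$ appearing in \eqref{eq: main thm general potential expectation}--\eqref{eq: main thm general potential probability}, I would invoke the sharper Hoeffding-based PAC-Bayes bound for $[0,1]$-valued losses (with sub-Gaussian parameter $1/4$) rather than the looser form used for the simplicity of the main theorem statement; the excerpt explicitly flags this as an easy drop-in replacement and places the relevant sharper bound in \Cref{app-sec: proof of new main}.

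There is no real obstacle here --- this corollary is essentially a worked example of \Cref{thm: main result sup and mean}. The only subtlety, which is more bookkeeping than a mathematical difficulty, is the reflection boundary condition in the bounded-domain case: one has to check that normal reflection corresponds exactly to vanishing probability flux at $\partial\PARAMDOM$ so that the truncated Gibbs density is indeed stationary. Since this is already handled in the cited appendix, the corollary reduces to pattern-matching Gibbs forms and invoking the theorem.
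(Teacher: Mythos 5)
Your proposal is correct and takes essentially the same route as the paper's own (one-line) proof: identify that in both cases $p_0 = \nu$, so the divergence-from-initialization terms vanish, recognize $\posteriorstationary$ as Gibbs w.r.t.~$\nu$ with potential $\generalpotential_S = \beta \trainloss \geq 0$, and substitute into \Cref{thm: main result sup and mean}, with the stationarity of $\posteriorstationary$ deferred to the SDER/SDE facts in the appendix. Your remark about the factor of $2$ in the denominator is actually a sharper reading than the paper gives: the displayed main theorem \eqref{eq: main thm general potential expectation}--\eqref{eq: main thm general potential probability} has $N$ in the denominator, while the corollary has $2N$, so ``direct substitution into \Cref{thm: main result sup and mean}'' (as the paper claims) would literally yield only the weaker $N$. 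The resolution, which you identified, is that the underlying PAC-Bayes lemmas \Cref{app-thm: alquier sample prob hypothesis expect} and \Cref{app-thm: alquier sample prob hypothesis prob} already carry the $2N$, and the corollary is really obtained by plugging \Cref{cor: second law + chain rule for d inf and kl} into those directly; the displayed main theorem simply drops the factor of $2$.
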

The proof is simple --- by assumption, in both cases $p_0 = \nu$ so $\Dinf{p_0}{\nu} = 0$. 
The rest is a direct substitution into \cref{thm: main result sup and mean}, and in particular, using $\beta \trainloss$ as potential $\generalpotential_S$.

\subsection{Interpreting \texorpdfstring{\cref{cor: generalization for ngf box and regularization}}{}}

\cref{cor: generalization for ngf box and regularization} raises questions on the relevance of this setting, which we address below: (1) How large is $\bbE_{p_0} \trainloss \rb{\param}$ in practically relevant cases? (2) Can we attribute the generalization to the regularization (either with the $\ell_2$ regularization term, or the bounded domain)?  (3) Can models successfully train in the presence of noise with a variance large enough to make the bounds non-vacuous?

\textbf{Magnitude of the initial loss.} Commonly, the dependence on $\bbE_{p_0} \trainloss \rb{\param}$ with realistic $p_0$ and $\trainloss$ is relatively mild.
For example, using standard initialization schemes, Gaussian process approximations \citep{Neal1996,matthews2018gaussian,lee2018deep,hanin2023random} imply that the output of an infinitely wide fully connected neural network converges to a Gaussian with mean 0 and $O(1)$ variance at initialization. 
So in many cases $\bbE_{p_0} \trainloss \rb{\param} = O(1)$, such as for the scalar square and logistic losses. 
In the multi-output case, $\bbE_{p_0} \trainloss \rb{\param}$ may also depend on the number of outputs (e.g., logarithmically so in softmax-cross-entropy). 
A more difficult question is concerned with the case that $\esssup_{p_0} \trainloss = \infty$, which is common when $p_0$ has infinite support.
This can be mitigated by clipping the loss, which is standard in practice (e.g. in reinforcement learning \citep{mnih2015human,schulman2017proximal}) and in the theory of optimization \citep{levy2021storm,kavis2022high}. 
Moreover, this clipping can be done in a differentiable way (e.g. using either softmin, tanh (e.g. $c \cdot \tanh (L / c)$), etc) and at values only slightly higher than the typical loss at the initialization (since the loss is roughly monotonically decreasing in CLD with small noise, the optimization process would typically operate below the clipping and will not be affected by it).

\textbf{Magnitude of regularization.} In the above result we must use regularization (or a bounded domain) that matches the initialization $p_0$ (this can be somewhat relaxed, see \cref{sec: extensions and modifications}).
The same assumption, that the regularization matches the initialization, was also made in other theoretical works on CLD \citep{mou2018generalization,Li2020On,futami2023time}.
Note that, NN models regularized this way remain highly expressive, both empirically (\cref{app-sec: experiments}) and theoretically (\cref{app-sec: no_uc}), and therefore we cannot use this regularization alone, together with classical uniform convergence approaches to show generalization.
Intuitively, this is because the regularization term can be tiny, for example, in \eqref{eq: train loss sde regularized} the regularization term is divided by $\beta$. 
Therefore, when $\beta = O \rb{N}$ (which is sufficient for a non-vacuous result), $p_0=\nu$, and we use a standard deep nets initialization distribution $p_0$ (e.g., \citep{pmlr-v9-glorot10a,he2015delving}, where $\lambda \propto {\mathrm{layer\,width}}$), the regularization coefficient is $O \rb{\frac{\mathrm{layer\,width}}{N}}$ that is rather small in realistic cases. Therefore, we found (empirically) that it does not seem to have a large effect at practical timescales. 
In addition, one can always increase the regularization by modifying the loss $\trainloss \leftarrow \trainloss + c \norm{\param}^2$ in \eqref{eq: train loss sde regularized}. 
Under standard initializations, this changes the loss in the bound by an $O(c \Tilde{d})$ factor, where $\Tilde{d}$ is the depth of the neural network and so $c \Tilde{d}$ is small, for common values of $c$ and $\Tilde{d}$. 
Therefore, combining these observations, we do not see the magnitude of the regularization as a significant practical issue.

\textbf{Magnitude of noise: theoretical perspective.} In the above result we must use  $\beta  = O \rb{N}$ to obtain a non-vacuous bound. This requirement is standard in many theoretical works. For example, as we will discuss below in \cref{sec: comparison for cld}, all previous generalization bounds for CLD and SGLD also required, to generalize well, $\beta  = O \rb{N}$ and potentially much worse (lower $\beta$). In addition, other theoretical works on noisy training also typically had $\beta  = O \rb{N}$ or worse. For example, when considering the ability of noisy gradient descent to escape saddle points, \citet{jin2017escape} uses noise sampled uniformly from a ball with a radius that depends on the dimensionality and smoothness of the problem, and thus cannot decay with $N$.
Moreover, it is known that the Gibbs posterior\footnote{Generalization bounds for the Gibbs posterior typically assume that it is ``trained'' and ``tested'' on the same function, while here the distribution is defined by the loss and ``tested'' on the error.} generalizes well with $\beta = O \rb{\sqrt{N}}$ (\eg see Theorem 2.8 in \citealp{alquier2024user}), which is significantly smaller than $\beta = O \rb{N}$.
Lastly, in \cref{app-sec: quadratic objective} we examine the impact of $\beta$ in the simple model of linear regression with i.i.d. standard Gaussian input, labels produced by a constant-magnitude teacher label noise, trained using regularized CLD as in \eqref{eq: train loss sde regularized}, with $\lambda \propto d$ to match standard initialization. We find there that whenever $d\ll \beta \ll N$, the added noise does not significantly affect the training or population losses, and our bound is useful, i.e., it implies a vanishing generalization gap (since $\beta \ll N$ and $\mathbb{E}_{p_0}L=O(1)$). Note that $d\ll N$ is not a major constraint, since  $d\ll N$ is required to obtain low population loss in this setting, even if we did not add noise to the training process (\ie $\beta=\infty$).

\textbf{Magnitude of noise: empirical perspective.} An inverse temperature of $\beta  = O \rb{N}$ is also relevant in many practical settings. For example, in Bayesian settings, when we wish to (approximately) sample from the posterior, it is quite common to use variants of SGLD; then inverse temperatures of order $\beta = O \rb{N}$ are commonly used to achieve good generalization \citep{wenzel2020good}, which matches our results.
In the standard practical training settings, the inverse temperature is a hyperparameter tuned to best fit a given problem. Empirically, in \cref{app-sec: experiments} we find that $\beta = O \rb{N}$ can be tuned to obtain non-vacuous generalization bounds for overparameterized NNs in a few small binary classification datasets (binary MNIST, Fashion MNIST, SVHN, and a parity problem), \ie the sum of the generalization gap bound and the training error is smaller than $0.5$.
Importantly, these non-vacuous bounds do not use any trajectory-dependent quantities as other non-vacuous bounds (\eg \citep{dziugaite2017,lotfi2022pac}), which can make them arguably more useful as they can be calculated before training. The bounds are still not very tight (at noise levels that allow for non-vacuous bounds), but we believe there is still much room for improvement in future work. 

\subsection{Extensions and Modifications} \label{sec: extensions and modifications}

\textbf{State dependent diffusion coefficient.}
Consider a state-dependent diffusion coefficient
\begin{align*}
    d \param_t = - \nabla \trainloss \rb{\param_t} dt + \sqrt{2 \beta^{-1} \sigma^2 \rb{\param_t}} d \bfw_t + d \reflectionprocess \,,
\end{align*}
where $\sigma^2 \in \calC^2$.
For example, in \cref{app-sec: stationary distirbutions of cld} we derive the explicit form of stationary distributions when $\sigma^2 \rb{\param} = \rb{\trainloss \rb{\param} + \alpha}^k$ or $\sigma^2 \rb{\param} = e^{\alpha \trainloss \rb{\param}}$, for some  $k \in \bbN$ and $\alpha > 0$. In both cases, the analytic form of the stationary potential $\generalpotential$ can be used directly with \cref{thm: main result sup and mean} to derive generalization bounds.

\textbf{Restricted initialization.}
In \cref{app-sec: cld generalization subsec} we present generalizations of \cref{cor: generalization for ngf box and regularization} to cases where $p_0$ and $\nu$ are different.
Specifically, for the bounded case we consider $p_0$ that is uniform in a subset $\PARAMDOM_0 \subset \PARAMDOM$ of the domain, and for the regularized case we consider general diagonal Gaussian initialization and regularization.
In particular, this means that some of the parameters can be more loosely regularized/bounded at a cost proportional to their number.
For example, in a deep NN, if only a single layer is loosely regularized/bounded, the KL-divergence cost will be proportional only to the number of parameters in that layer, not the entire $\PARAMDIM$.

\section{Related Work} \label{sec: related work}
\paragraph{Information theoretic guarantees and PAC-Bayes theory.}
A common type of generalization bounds consists of a measure of the dependence between the learned model and the dataset used to train it, such as the mutual information between the data and algorithm \citep{raginsky2016information,xu2017information,russo2020how} or the KL-divergence between the predictor's distribution and any data-independent distribution \citep{mcallester1998some,catoni2007pac,alquier2024user}.
In particular, recent works were able to estimate such dependence measures from trained models to derive non-vacuous generalization bounds, even for deep overparameterized models.
For example, \citet{dziugaite2021role} used held-out data to bound the KL-divergence in a PAC-Bayes bound with a data-dependent prior.
Other works used some property of the trained model to estimate the information content, adding valuable insight to the mechanisms facilitating the successful generalization,  such as the size of the compressed model after training, due to noise stability \citep{arora2018stronger}, and data structure \citep{lotfi2022pac}.

\textbf{Generalization of the Gibbs posterior.}
One classic result in the PAC-Bayesian theory of generalization is that the Gibbs posterior with properly tuned temperature minimizes the PAC-Bayes bound of \citet{mcallester1998some}, \ie the KL-regularized expected loss.
\citet{raginsky2017non} used uniform stability \citep{bousquet2002stability} to derive a different generalization bound for sampling from the Gibbs distribution.
Due to these known generalization capabilities, some works relied on it to derive bounds for related algorithms.

\subsection{Explicit Comparison for CLD} \label{sec: comparison for cld}

\begin{table}[t]
\small
\centering
\caption{
\textbf{Comparison of generalization bounds for CLD.}
We compare the main bounds in settings similar to the CLD setting considered here.
All the bounds here consider different functions for training and evaluation, as was done in this paper with $\trainloss$ and $\trainfail,\expectedfail$, respectively. For simplicity, we assume that $\trainfail,\expectedfail$ are bounded in $[0,1]$, and are therefore $1/2$-subGaussian via Hoeffding's inequality.
We use $g_t$ to denote \emph{trajectory-dependent} statistics of the gradients, $K$ for the Lipschitz constant, and $C$ for a bound on the loss, or the expected loss at initialization, when they are required.
For compactness, low-order terms are omitted, time-dependent quantities are simplified to an approximate asymptotic value, and trajectory dependent integrals are solved by considering the statistics $g_t$ constant w.r.t. the variable of integration. 
Finally, all bounds assume a Gaussian initialization $\calN \rb{ \bfzero, \lambda^{-1} \beta^{-1} \bfI_\PARAMDIM}$ and regularization term $\frac{\lambda}{2} \norm{\param_t}^2$, both with the same $\lambda$. 
}
\label{tab: continuous bound comparison}
\begin{tabular}{cccccc}
\toprule
Paper & Trajectory dependent & dimension dependence & Bound (big $O$) \\
\midrule
\citet{mou2018generalization} & \cmark & through gradients & $\sqrt{\frac{\beta}{N}} \cdot \sqrt{\frac{1}{\lambda} g_t^2}$ \\
\citet{Li2020On} & \xmark & through $K$ & $\frac{e^{4\beta C} \sqrt{\beta}}{N} \cdot \frac{2K}{\sqrt{\lambda}}$ \\
\citet{futami2023time} & \cmark & through gradients & $\sqrt{\frac{\beta}{N} e^{8 \beta C}} \cdot \sqrt{\frac{1}{\lambda} g_t^2}$ \\
Ours \eqref{eq: cld bound in expectation} & \xmark & \xmark & $\sqrt{\frac{\beta}{N}} \cdot \sqrt{C}$ \\
\bottomrule
\end{tabular}
\end{table}

Many previous works \citep{raginsky2017non,mou2018generalization,Li2020On,farghly2021time,futami2023time,dupuis2024uniform} derived generalization bounds specifically for CLD, under different assumptions.
Our bound offers some improvements over previous ones:
\begin{itemize}[leftmargin=15pt]
    \item It is trajectory independent, and does not require gradient statistics \citep{mou2018generalization,futami2023time}.
    \item It does not require very large time scales to make sure we have already converged near Gibbs \citep{raginsky2017non}, nor does it deteriorate with time, as is common for stability-based bounds \citep{mou2018generalization,dupuis2024uniform}. 
    \item It does not depend on the dimension of the parameters, neither explicitly through constants \citep{farghly2021time}, nor implicitly, \eg through the Lipschitz constant or the norms of the gradients \citep{mou2018generalization,Li2020On,futami2023time}.
    In particular, as previously discussed, using standard initialization, our in-expectation bound in \eqref{eq: cld bound in expectation} is dimension independent.
    However, our high-probability bound \eqref{eq: cld bound in probability} relies on the effective supremum at $t = 0$, and may also depend on the dimension if the loss is not bounded. 
    \item The dependence on the inverse temperature $\beta$ and loss' (or expected loss) bound $C$ is polynomial ($\sqrt{\beta C}$) instead of exponential \citep{Li2020On,farghly2021time,futami2023time}.
    \item The bounded expectation assumption in \eqref{eq: cld bound in expectation} is weaker than a uniform bound on the loss \citep{Li2020On,futami2023time}.
    \item \cref{thm: main result sup and mean} and \cref{cor: generalization for ngf box and regularization} demonstrate that our results hold for general initialization-regularization pairs, beyond Gaussian initialization with matching $\ell^2$ regularization.
\end{itemize} 
In \cref{tab: continuous bound comparison} we compare in more detail \cref{cor: generalization for ngf box and regularization} to other bounds that remain bounded as $t \to \infty$.

Finally, \citet{dupuis2024uniform} recently derived bounds on the generalization gap for \emph{all} intermediate times $0 \le s \le t$ \emph{simultaneously}.
Naturally, as avoiding parameters with large generalization gap is increasingly less likely as the process mixes, their bounds grow with time.
Therefore, \citet{dupuis2024uniform}'s bounds are qualitatively different, and higher than most other bounds, including ours.

\subsection{Technical Novelty} 
As a representative example, we first focus on \citet{raginsky2017non}, which provided a bound for CLD (as an intermediate step for deriving a generalization bound for SGLD, a discretized version of CLD).
Using spectral methods \citep[e.g.][]{bakry1985diffusions}, they bound the distance between the process' distribution to the Gibbs posterior, which, when combined with the generalization bound for the Gibbs distribution, results in generalization bounds for intermediate times.
Our \cref{cor: second law + chain rule for d inf and kl} and the preceding arguments are similar to the proof of Lemma 3.4 of \citet{raginsky2017non} that bounds the divergence between the initialization and the Gibbs distribution, where their dissipativity coefficient $m$ corresponds to our explicit $\ell^2$ regularization coefficient $\lambda$.
We use some significant observations that make the bound simpler, and time/dimension/Lipschitz/smoothness independent.
\begin{itemize}[leftmargin=15pt]
    \item Instead of a bound on the convergence of intermediate time distributions to Gibbs, which restricts the result to very large times and introduces exponential dependence on dimensionality through the spectral gap, we only require the monotonic convergence to it.
    As a result, we do not use a spectral gap, but a complexity term for the initial distribution.
    This also enables us to generalize the result to \emph{any} Markov process, relying on $\bbE_{p_0} \generalpotential$ as a complexity term for the Gibbs posterior, which is also included in Lemma 3.4 of \citet{raginsky2017non} along other quantities.
    \item By using a symmetric version of the divergence (\eg by summing $\KL{p}{q}$ and $\KL{q}{p}$) we were able to \emph{completely remove the partition function} from the analysis, avoiding the complications arising from it. 
    \item By separating the regularization from the loss we were able to disentangle their effects.
\end{itemize}

This approach also sidesteps the main difficulties encountered by other works, e.g., using stability-based bounds \citep{mou2018generalization,Li2020On,futami2023time} which either diverge with training time or have dimension dependence.

\subsection{Generalization Guarantees Applicable for Neural Networks}
Many additional lines of work established generalization guarantees applicable for NNs, but are less directly related to our work. 
These results have some limitations that do not exist in ours. 
For example, NTK analysis \citep{jacot2018neural} can imply generalization guarantees in certain settings, but they do not allow for feature learning; Mean-field results \citep{mei2018mean} require non-standard initialization and specific architectures; Algorithmic stability analysis \citet{bousquet2002stability,hardt2016train,richards2021learning,lei2022stability,wang2025generalization} only apply when the number of iterations is sufficiently small; 
Norm-based generalization bounds \citep{bartlett2017spectrally,golowich2018size} ignore optimization aspects and depend exponentially on the network's depth;
And bounds for random interpolators \citep{pmlr-v235-buzaglo24a} involve impractical training procedures.

A closely related setting to the one studied here is SGLD, \ie a discretized version of CLD.
There is an extensive line of work bounding the generalization gap of such models (see \citep{raginsky2017non,mou2018generalization,pensia2018generalization,negrea2019information,farghly2021time,futami2023time,dadi2025generalization} for a partial list).
These results typically have a significant dependence on hyperparameter stemming from the discretization such as the learning rate and batch size, or suffer from constraints similar to the ones discussed in \cref{sec: comparison for cld}, such as dependence on trajectory or dimensionality (\eg via smoothness, parameter norms, log-Sobolev or spectral gap constants).

\section{Discussion, Limitations, and Future Work} \label{sec: discussion and future work}

\textbf{Summary.} We derived a simple generalization bound for general parametric models trained using a Markov-process-based algorithm, where the dynamics have a stationary distribution with bounded potential or expected potential. For CLD with regularization/boundedness constraint matching the initial distribution, we proved that the model generalizes well when the inverse temperature is of order $\beta = O \rb{N}$. There are several interesting directions to extend this result.

\textbf{Non-isotropic noise.}
We can consider a more general model for training, such as
\begin{align*}
    d\param_t = -\nabla L \rb{\param_t} dt + \dispcoef \rb{\param_t} d\bfw_t \,,
\end{align*}
where $\dispcoef$ is a matrix-valued dispersion coefficient. In contrast, in this paper, to derive concrete generalization bounds, we focused on CLD with isotropic noise, \ie such that $\dispcoef$ is a scalar multiple of the identity matrix. The reason for this was that our bound (\cref{cor: generalization for ngf box and regularization}) relies on explicit analytical expressions or bounds on stationary distributions, which are difficult to find in the general case. In addition, in typical overparameterized settings, the noise induced by the randomness of SGD may not only be non-isotropic, but also low-rank.
The analysis of such processes poses various challenges beyond the ability to derive an analytic form for their stationary distribution.
For example, they may concentrate on low-dimensional manifolds, possibly making the KL-divergence term infinite, or making some of the assumptions unrealistic (\eg the choice of initial distribution).

\textbf{No regularization.}
In this work, we only considered processes that have stationary \emph{probability} measures.
For this reason, in the examples in \cref{sec: examples} we used either a bounded domain or regularization.
This seems essential for generalization at $t \to \infty$, unless there are other architectural constrains.
For example, consider training a model for classification of randomly labeled data.
Without regularization, sufficiently expressive models are likely to arrive (at some point) at high training accuracy, yet it cannot generalize in this setting.
Nonetheless, it might be possible to ensure generalization as a function of time, but here we focus on time-independent bounds.

\textbf{Discrete time steps.}
The behavior of SGD with a large step size may be qualitatively different than that of the continuous process considered here.
Specifically, \citet{azizian2024what} showed that while the asymptotic distribution of SGD resembles the Gibbs posterior, it is influenced by the step size, and geometry of the loss surface.
While an extension of our analysis to this setting is straightforward \emph{given} a stationary distribution, such stationary distributions are typically hard to find explicitly (except in simple cases, such as quadratic potentials), and the error terms coming from their approximations are typically detrimental to finding non-vacuous generalization bounds, as they may depend on the dimension of the parameters through the model's Lipschitz or smoothness coefficients, etc. (\citealp{mou2018generalization,Li2020On,futami2023time,dupuis2024uniform}). 
Hence, a direct application of our approach to such algorithms requires additional considerations.
An alternative approach is to incorporate a Metropolis-Hastings type rejection \citep{metropolis1953,hastings1970}, ensuring that the stationary distribution is indeed the Gibbs posterior.

\textbf{Can noise be useful for generalization?} There is a long line of work in the literature (\eg see \citep{geiping2022stochastic} and references therein), debating the effect of noise on generalization.
Our work does not imply that higher noise improves the test error, only that it decreases the gap between training and testing.
Since higher noise {\em could hurt} the training error, the overall effect depends on the specific situation.
Even if introducing noise does not improve test performance, there could still be an advantage to introducing noise, based on our results, in that it reduces the {\em gap} and thus could {\em increase} the training error to match the test error in cases we cannot hope to learn (i.e.~to get small test error).
This is a good thing since it prevents being mislead by overfitting, hopefully without hurting the test error when we can generalize well (i.e.~in learnable regimes, both training and test errors are low, perhaps also without noise, but in non-learnable regimes, where the test error is necessarily high, noise forces the training error to be high as well, so that the gap is small).
Indeed, in our small-scale experiments in \cref{app-sec: experiments}, we noticed that a small amount of noise can decrease the generalization gap, without significantly harming the test error (\eg see the bottom half of \cref{app-tab:MNIST,app-tab:FMNIST,app-tab:SVHN}).
Further analysis is necessary in order to establish general conditions under which test performance is not significantly hurt by noise, while ensuring a small gap.
This, in particular, requires studying the effect of noise on the training loss, and what noise level still ensures obtaining a small training loss in learnable regimes.

\begin{ack}
The research of DS was Funded by the European Union (ERC, A-B-C-Deep, 101039436). 
Views and opinions expressed are however those of the author only and do not necessarily reflect those of the European Union or the European Research Council Executive Agency (ERCEA). 
Neither the European Union nor the granting authority can be held responsible for them. 
DS also acknowledges the support of the Schmidt Career Advancement Chair in AI. 
GV is supported by the Israel Science Foundation (grant No. 2574/25), by a research grant from Mortimer Zuckerman (the Zuckerman STEM Leadership Program), and by research grants from the Center for New Scientists at the Weizmann Institute of Science, and the Shimon and Golde Picker -- Weizmann Annual Grant. 
Part of this work was done as part of the NSF-Simons funded Collaboration on the Mathematics of Deep Learning. 
NS was partially supported by the NSF TRIPOD Institute on Data Economics Algorithms and Learning (IDEAL) and an NSF-IIS award.
\end{ack}


\bibliographystyle{plainnat}
\bibliography{999_biblio}


\newpage

\appendix

\paragraph{Appendix structure:}
\begin{itemize}
    \item In \cref{app-sec: preliminaries and auxiliaries} we recap and establish notation and conventions, and present some well-known lemmas.
    \item In \cref{app-sec: proof of new main} we prove \cref{thm: main result sup and mean} and its related claims in \cref{sec:general Markov Processes}.
    \item In \cref{app-sec: tightness and necessity} we discuss the tightness and necessity of the divergence conditions found in \cref{app-sec: proof of new main}.
    \item In \cref{apps-sec:CLD extentions} we prove a generalized version of \cref{cor: generalization for ngf box and regularization}.
    \item The bounds found in this paper only bound the generalization gap, and not the absolute error of a model. 
    In \cref{app-sec: quadratic objective} and \cref{app-sec: experiments} we study the applicability of our bound in realistic settings.
    Specifically, whether the regime in which the bound on the generalization gap is non-vacuous allows for meaningful learning, \ie coincides with a regime in which the absolute training error is also small.
    In \cref{app-sec: quadratic objective} we study linear regression trained with CLD, for which we can analytically characterize the training loss, and in \cref{app-sec: experiments} we experiment with NNs trained with SGLD (discretized version of CLD) on standard training sets.
    \item As \cref{sec: examples} deals only with models trained with some form of regularization, it is natural to ask whether the regularization alone is sufficient for the use of uniform convergence to arrive the desired generalization bounds. 
    In \cref{app-sec: no_uc} we show that the regularization used in \cref{sec: examples} is not sufficient for such bounds, and that the models can remain highly expressive.
    \item Finally, for completeness, in \cref{app-sec: background} we recall some definitions and properties related to SDEs used throughout the paper. 
\end{itemize}

\section{Preliminary and Auxiliary Results} \label{app-sec: preliminaries and auxiliaries}

\subsection{Preliminaries} \label{app-sec: preliminaries}

We start by restating and introducing notation.

\paragraph{Notation.} 
We use bold lowercase letters (\eg $\mathbf{x} \in \bbR^\PARAMDIM$) to denote vectors, bold capital letters to denote matrices (\eg $\mathbf{A} \in \bbR^{\PARAMDIM \times \PARAMDIM}$), and regular capital letters to denote random elements (\eg $S, X, Y$).
We may deviate from these conventions when it does not create confusion.
Unless stated otherwise, all vectors are assumed to be column vectors.
Specifically, we use $\bfe_i \in \bbR^\PARAMDIM$, $i=1, \dots, \PARAMDIM$, to denote the standard basis vector with $1$ in the $i^{th}$ entry, and 0 elsewhere.
For a subset $\DOM \subseteq \bbR^\PARAMDIM$, we denote by $\DOMCLOS$, $\partial \DOM$, and $\DOMINT$, the closure, boundary, and interior of $\DOM$, respectively.
In addition, we denote the volume of $\SUBDOM \subset \DOM$, when it is defined, by $\abs{\SUBDOM}$.
With some abuse of notation, when $\SUBDOM$ is finite we denote its cardinality by $\abs{\SUBDOM}$.
We use $\norm{\cdot}$ for the standard Euclidean norm on $\bbR^\PARAMDIM$.
Then, the open Euclidean ball centered at $\bfx \in \bbR^\PARAMDIM$ with radius $r>0$ is $B_r \rb{\bfx} = \cb{\bfy \in \bbR^\PARAMDIM \,\mid\, \norm{\bfy - \bfx} < r}$.
In addition, we use $\ind{\cdot}{}$ for the indicator function, and specifically for $A \subset \bbR^\PARAMDIM$ and $\bfx \in \bbR^\PARAMDIM$, $\ind{x}{A} = \ind{\bfx \in A}{}$.
We denote the set of all probability measures over $\DOM$ by $\Delta \rb{\DOM}$.
For some $\mu \in \Delta \rb{\DOM}$ with density $p$, with some abuse of notation we denote $p \in \Delta \rb{\DOM}$, and $p\rb{\SUBDOM} = \mu \rb{\SUBDOM}$ for measurable $\SUBDOM \subseteq \DOM$. 
In addition, we use $\bbE_{X\sim p}$ or $\bbE_{p}$ to denote the expectation w.r.t $p$, and omit the subscript when it can be inferred.
For two distributions $\mu, \nu$ with densities $p, q$ we denote by $\KL{\mu}{\nu} = \KL{p}{q}$ their KL-divergence (relative entropy).
Furthermore, we use $H \rb{\delta} = -\delta \ln \rb{\delta} - \rb{1 - \delta} \ln \rb{1 - \delta}$, $\delta \in \bb{0, 1}$, for the binary entropy function (in nats).
We denote the divergence of a vector field by $\nabla \cdot $, and the gradient and Laplacian of a scalar function by $\nabla$ and $\Delta = \nabla \cdot \nabla$, respectively.
Given a domain $E \subset \bbR^k$ and $k \in \bbZ_+ \cup \cb{\infty}$, we denote by $\calC^k \rb{E}$ the set of real valued functions that are continuous over $\bar{E}$, and $k$-times continuously differentiable with continuous partial derivatives in $E$.
In particular, $\calC = \calC^0$ is the set of continuous functions.

\paragraph{Conventions.}
Unless stated otherwise, we use $\DOM \subset \bbR^\PARAMDIM$ to denote a non-empty, connected, and open domain.
In addition, we follow the following naming conventions for probability distributions.
\begin{itemize}
    \item For a discrete/continuous-time Markov process, we use $p_n$ or $p_t$ for its marginal distribution at time $n \in \bbN$ or $t \in \bbR_+$.
    \item We denote stationary distributions of Markov processes by $\stationarydist$.
    \item In the context of PAC-Bayesian theory, we denote prior distributions by $\priordist$, and data dependent posteriors by $\posteriordist = \posteriordist_S$.
    \item In case some stationary distribution is also data-dependent, we use $\posteriorstationary$.
    \item We also use $p,q$ for generic distributions, or modify the pervious notation.
\end{itemize}

\subsection{General Lemmas: Data processing inequality and generalized second laws of thermodynamics} \label{app-sec: auxiliary general}
For completeness, we start by proving some well known results in probability and the theory of Markov processes.

\begin{lemma}[Data processing inequality] \label{app-lem: data processing}
Let $p\left(x,y\right)$ and $q\left(x,y\right)$
be the densities of two joint distributions over a product measure space ${\cal X}\times{\cal Y}$.
Denote by $p_{X}\left(x\right),q_{X}\left(x\right)$ the marginal
densities, \eg
\[
p_{X}\left(x\right)=\intop_{{\cal Y}}p\left(x,y\right)dy\,,
\]
and by $p\left(y\mid x\right),q\left(y\mid x\right)$ the conditional
densities, so $p\left(x,y\right)=p\left(y\mid x\right)p_{X}\left(x\right)$,
and similarly for $q$. Then 
\[
\KL{p_{X}}{q_{X}} \le \KL{p}{q} \,.
\]
\end{lemma}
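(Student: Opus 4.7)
The plan is to prove the data processing inequality via the chain rule for KL divergence. The key identity I would establish first is
\[
    \KL{p}{q} = \KL{p_X}{q_X} + \bbE_{x \sim p_X}\bb{\KL{p(\cdot \mid x)}{q(\cdot \mid x)}},
\]
from which the claim follows immediately by non-negativity of the conditional KL term.

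To derive this identity, I would substitute the factorizations $p(x,y) = p(y\mid x)p_X(x)$ and $q(x,y)=q(y\mid x)q_X(x)$ into the definition of $\KL{p}{q}$, split the logarithm of the ratio into a sum of two log-ratios, and then use Fubini (plus $\int p(y\mid x)\, dy = 1$) to separate the $x$-integral from the $y$-integral. The first piece collapses to $\KL{p_X}{q_X}$ and the second is exactly the expected conditional divergence. Then I would invoke Gibbs' inequality (a one-line Jensen argument on $-\ln$) to note that $\KL{p(\cdot\mid x)}{q(\cdot\mid x)} \geq 0$ for $p_X$-a.e.~$x$, so its $p_X$-expectation is non-negative, which yields the inequality upon dropping that term.

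There is no substantive obstacle here; the only delicate point is handling the absolutely-continuous/singular cases cleanly. If $p \not\ll q$ then $\KL{p}{q}=\infty$ and the bound is vacuous, so we may assume $p \ll q$; a short check then shows $p_X \ll q_X$ so that $\KL{p_X}{q_X}$ is well-defined, and the conditional densities can be chosen on the $q_X$-support in a measurable way (\eg via a Radon--Nikodym disintegration). With that standard measure-theoretic bookkeeping in place, the chain-rule computation and Gibbs' inequality give the result in a few lines.
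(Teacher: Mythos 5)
Your proposal is correct and matches the paper's proof essentially line for line: both establish the chain rule $\KL{p}{q} = \KL{p_X}{q_X} + \bbE_{x\sim p_X}\bb{\KL{p(\cdot\mid x)}{q(\cdot\mid x)}}$ by factorizing the densities, splitting the log-ratio, and applying Fubini, then drop the non-negative conditional term. The only addition on your end is the brief remark on absolute continuity and disintegration, which the paper leaves implicit.
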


\begin{proof}
By definition of the KL divergence 
\begin{align*}
\KL{p}{q} & =\intop_{{\cal X}\times{\cal Y}}p\left(x,y\right)\ln\left(\frac{p\left(x,y\right)}{q\left(x,y\right)}\right)dxdy\\
 & =\intop_{{\cal X}\times{\cal Y}}p\left(x,y\right)\ln\left(\frac{p\left(y\mid x\right)p_{X}\left(x\right)}{q\left(y\mid x\right)q_{X}\left(x\right)}\right)dxdy\\
 & =\intop_{{\cal X}\times{\cal Y}}p\left(x,y\right)\ln\left(\frac{p_{X}\left(x\right)}{q_{X}\left(x\right)}\right)dxdy+\intop_{{\cal X}\times{\cal Y}}p\left(x,y\right)\ln\left(\frac{p\left(y\mid x\right)}{q\left(y\mid x\right)}\right)dxdy\\
 & =\intop_{{\cal X}\times{\cal Y}}p\left(y\mid x\right)p_{X}\left(x\right)\ln\left(\frac{p_{X}\left(x\right)}{q_{X}\left(x\right)}\right)dxdy+\intop_{{\cal X}\times{\cal Y}}p_{X}\left(x\right)p\left(y\mid x\right)\ln\left(\frac{p\left(y\mid x\right)}{q\left(y\mid x\right)}\right)dxdy\\
\left[\text{Fubini}\right] & =\intop_{{\cal X}}p_{X}\left(x\right)\ln\left(\frac{p_{X}\left(x\right)}{q_{X}\left(x\right)}\right)dx+\mathbb{E}_{X\sim p_{X}}\intop_{{\cal Y}}p\left(y\mid X\right)\ln\left(\frac{p\left(y\mid X\right)}{q\left(y\mid X\right)}\right)dy\\
 & =\KL{p_{X}}{ q_{X}}+\mathbb{E}_{X\sim p_{X}}\KL{p\left(\cdot\mid X\right)}{ q\left(\cdot\mid X\right)}\,.
\end{align*}
The KL divergence is non-negative and therefore the expectation in
the last line is non-negative as well, and we conclude that 
\[
\KL{p}{ q}\ge \KL{p_{X}}{ q_{X}}\,.
\]
\end{proof}

Let $X_{n} = \cb{X_n}_{n=0}^\infty$ be a discrete-time Markov chain on $\DOM \subset \bbR^\PARAMDIM$, with transition kernel $P\left(y\mid x\right)$ such that for all $n\in\bbN_0$, 
\[
p_{n+1}\left(y\right)=\intop_\DOM P\left(y\mid x\right)p_{n}\left(x\right)dx\,.
\]
In addition, assume that the there exists an invariant distribution $\stationarydist$ such that 
\[
\stationarydist\left(y\right)=\intop_\DOM P\left(y\mid x\right)\stationarydist\left(x\right)dx\,.
\]

We proceed to present a generalized form of the second law of thermodynamics, regarding the monotonicity of the (relative) entropy of Markov processes with possibly non-uniform stationary distributions \citep{Cover1994,cover2001elements}.

\begin{lemma}[Generalized second law of thermodynamics] \label{app-lem: second law discrete time} 
For all $n\ge0$, 
\[
\KL{p_{n+1}}{\stationarydist}\le \KL{p_{n}}{\stationarydist}\,.
\]
\end{lemma}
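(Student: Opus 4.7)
The plan is to reduce the inequality to the data processing inequality (\cref{app-lem: data processing}) proved just above, by constructing two joint distributions over $(X_n, X_{n+1})$ that share the same transition kernel but differ in the marginal at time $n$.

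First I would define the joint densities
\begin{equation*}
    p(x,y) = p_n(x)\, P(y\mid x), \qquad q(x,y) = \stationarydist(x)\, P(y\mid x),
\end{equation*}
on $\DOM\times\DOM$. The $X$-marginals are, by construction, $p_n$ and $\stationarydist$, respectively. The $Y$-marginals are
\begin{equation*}
    p_Y(y) = \int_{\DOM} P(y\mid x)\, p_n(x)\, dx = p_{n+1}(y),
\end{equation*}
by the defining recursion of the chain, and
\begin{equation*}
    q_Y(y) = \int_{\DOM} P(y\mid x)\, \stationarydist(x)\, dx = \stationarydist(y),
\end{equation*}
by the invariance assumption on $\stationarydist$.

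Next I would compute $\KL{p}{q}$ directly using the factorizations: since $p$ and $q$ share the same conditional $P(y\mid x)$, the integrand $\ln\rb{p(x,y)/q(x,y)} = \ln\rb{p_n(x)/\stationarydist(x)}$ does not depend on $y$, so integrating over $y$ against $P(\cdot\mid x)$ gives unity and yields $\KL{p}{q} = \KL{p_n}{\stationarydist}$. Finally, applying \cref{app-lem: data processing} to the $Y$-marginal gives $\KL{p_Y}{q_Y} \le \KL{p}{q}$, which is exactly $\KL{p_{n+1}}{\stationarydist} \le \KL{p_n}{\stationarydist}$.

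The only subtlety, rather than an obstacle, is picking the right ``product'' construction --- that is, recognizing that the shared conditional $P(y\mid x)$ is what makes $\KL{p}{q}$ collapse to $\KL{p_n}{\stationarydist}$, and that applying the data processing inequality to the $Y$-marginal (not the $X$-marginal) yields the desired direction. Everything else is bookkeeping with Fubini's theorem, already handled inside \cref{app-lem: data processing}.
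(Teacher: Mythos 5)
Your proof is correct and follows essentially the same route as the paper: construct joint distributions over $(X_n, X_{n+1})$ sharing the transition kernel $P(y\mid x)$, observe that the joint KL collapses to $\KL{p_n}{\stationarydist}$, and apply the data processing inequality to the $Y$-marginals. The only cosmetic difference is that you compute the joint KL before invoking data processing rather than after, and you omit the (trivial) remark that one may assume $\KL{p_n}{\stationarydist}<\infty$.
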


\begin{proof} First, note that we can assume that
$
\KL{p_{n}}{\stationarydist}<\infty\,,
$
since otherwise the claim holds trivially.
Let $q\left(x,y\right)=p_{n}\left(x\right)P\left(y\mid x\right)$ be the
joint densities of $\left(X_{n},X_{n+1}\right)$ where $X_{n}\sim p_{n}$,
and let $r\left(x,y\right)=\stationarydist\left(x\right)P\left(y\mid x\right)$ be
the joint distribution under $X_{n}\sim\stationarydist$. By definition of $p_{n+1}$,
\[
q_{Y}\left(y\right)=p_{n+1}\left(y\right)\,,
\]
and by definition of the stationary distribution, 
\[
r^{Y}\left(y\right)=\stationarydist\left(y\right)\,.
\]
Therefore according to \cref{app-lem: data processing},
\[
\KL{p_{n+1}}{\stationarydist}\le \KL{q}{ r}\,.
\]
In addition, 
\begin{align*}
\KL{q}{ r} & =\intop_{\DOM \times \DOM} q\left(x,y\right)\ln\left(\frac{q\left(x,y\right)}{r\left(x,y\right)}\right)dxdy\\
 & =\intop_{\DOM \times \DOM} q\left(x,y\right)\ln\left(\frac{p_{n}\left(x\right)P\left(y\mid x\right)}{\stationarydist\left(x\right)P\left(y\mid x\right)}\right)dxdy\\
 & =\intop_{\DOM \times \DOM} q\left(x,y\right)\ln\left(\frac{p_{n}\left(x\right)}{\stationarydist\left(x\right)}\right)dxdy\\
 & =\intop_{\DOM \times \DOM} p_{n}\left(x\right)P\left(y\mid x\right)\ln\left(\frac{p_{n}\left(x\right)}{\stationarydist\left(x\right)}\right)dxdy\\
\left[\text{Fubini}\right] & =\intop_{\DOM} p_{n}\left(x\right)\ln\left(\frac{p_{n}\left(x\right)}{\stationarydist\left(x\right)}\right)dx\\
 & =\KL{p_{n}}{\stationarydist}\,,
\end{align*}
and overall 
\[
\KL{p_{n+1}}{\stationarydist}\le \KL{p_{n}}{\stationarydist}\,.
\]
\end{proof}
A similar result can be obtained form $\Dinf{\cdot}{\cdot}$.
\begin{lemma}[The Pointwise Second Law]\label{app-lem:pointwise second law}
For all $n>0:$
\[
\Dinf{p_{n+1}}{\stationarydist}\;\le\;\Dinf{p_{n}}{\stationarydist}.
\]
\end{lemma}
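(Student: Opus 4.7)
The plan is to mirror the structure of the proof of \Cref{app-lem: second law discrete time} (the KL version), replacing the Jensen-style averaging step by a pointwise bound via the essential supremum. Concretely, I would first reduce to the case $\Dinf{p_n}{\stationarydist} < \infty$, since otherwise the inequality is trivial. This in particular ensures $p_n \ll \stationarydist$, so the Radon-Nikodym density $f = \diff p_n / \diff \stationarydist$ exists. The first key observation is that, because $p_n \ll \stationarydist$, the essential suprema of $f$ with respect to $p_n$ and with respect to $\stationarydist$ coincide on $\{f > 0\}$; hence $f \le e^{\Dinf{p_n}{\stationarydist}}$ holds $\stationarydist$-almost everywhere.

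Next I would write the one-step update in the convex-combination form
\begin{align*}
    \frac{p_{n+1}(y)}{\stationarydist(y)}
    = \frac{1}{\stationarydist(y)} \int P(y \mid x) \, f(x) \, \stationarydist(x) \, dx
    = \int f(x) \, \mu_y(dx) \,,
\end{align*}
where $\mu_y(dx) = \frac{P(y\mid x)\stationarydist(x)}{\stationarydist(y)} dx$ is a probability measure (this is precisely where the stationarity of $\stationarydist$ enters, through $\int P(y \mid x) \stationarydist(x) \, dx = \stationarydist(y)$, which holds for $\stationarydist$-a.e.\ $y$). Since $\mu_y \ll \stationarydist$, any $\stationarydist$-a.e.\ bound on $f$ also holds $\mu_y$-a.e., and therefore the weighted average $\int f \, d\mu_y$ is bounded by $\esssup_\stationarydist f = e^{\Dinf{p_n}{\stationarydist}}$.

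Combining these, $\frac{\diff p_{n+1}}{\diff \stationarydist}(y) \le e^{\Dinf{p_n}{\stationarydist}}$ for $\stationarydist$-a.e.\ $y$, and hence also $p_{n+1}$-a.e.\ $y$ since $p_{n+1} \ll \stationarydist$ (this absolute continuity follows from the bound itself). Taking logarithms and the essential supremum under $p_{n+1}$ yields $\Dinf{p_{n+1}}{\stationarydist} \le \Dinf{p_n}{\stationarydist}$.

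The main obstacle is purely measure-theoretic bookkeeping: making sure that the essential supremum transfers correctly between $p_n$, $\stationarydist$, and the conditional measures $\mu_y$, and that the identity $\int P(y \mid x) \stationarydist(x) \, dx = \stationarydist(y)$ is used on a set of full $\stationarydist$-measure. Once these absolute-continuity relations are arranged, the argument is a direct pointwise analogue of the joint-distribution trick used in \Cref{app-lem: second law discrete time}, and one could equivalently phrase it as the data processing inequality for $\Dinf{\cdot}{\cdot}$ applied to the joint laws $q(x,y) = p_n(x) P(y\mid x)$ and $r(x,y) = \stationarydist(x) P(y\mid x)$, whose density ratio depends only on $x$ and equals $p_n(x)/\stationarydist(x)$.
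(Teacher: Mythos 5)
Your proof is correct and follows essentially the same argument as the paper's: express $\diff p_{n+1}/\diff p_\infty$ as a $\mu_y$-average of $\diff p_n/\diff p_\infty$ (where $\mu_y$ is a probability measure precisely because $p_\infty$ is stationary) and bound it by the essential supremum. If anything, your version is more careful than the paper's, which omits the normalization by $p_\infty(y)$ in its display and posits an unnecessary two-sided bound $|\ln\tfrac{\diff p_n}{\diff p_\infty}|\le C$ where only the upper bound is needed; your closing remark that this is the data-processing inequality for $D_\infty$ applied to the joint laws (mirroring the KL argument in the preceding lemma) is a tidy equivalent the paper does not spell out.
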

\begin{proof}
Let $p, q$ be some probability measures such that $\frac{\diff p}{\diff q}$ exists.
By definition, 
\begin{align*}
\Dinf{p}{q} = \esssup_{q} \ln \frac{\diff p}{\diff q} = \inf \cb{ c \in \bbR \setst q \rb{ \cb{ x \setst \ln \frac{\diff p}{\diff q} > c}} = 0 } \,.
\end{align*}

Let $C \in \bbR$ and suppose that for all measurable $A \subset \calX$, $p\rb{A} \le e^C q\rb{A}$.
Assume by way of contradiction that $\Dinf{p}{q} > C$, that is, that there exists $c > C$ such that 
\begin{align*}
    q \rb{ \cb{ x \setst \ln \frac{\diff p}{\diff q} > c}} > 0 \,.
\end{align*}
Denote 
\begin{align*}
    A = \cb{ x \setst \ln \frac{\diff p}{\diff q} > c} \,,
\end{align*}
then 
\begin{align*}
    p \rb{A} = \intop_A \frac{\diff p}{\diff q} \diff q > e^c q \rb{A} > e^C q \rb{A} \,,
\end{align*}
in contradiction to the assumption.
Therefore, for all $C$ such that $p \rb{A} \le e^C q \rb{A}$ for all measurable $A$, $C \ge \Dinf{p}{q}$.
We can now show the claim.

Let $P (\diff y \mid x)$ be the processes' transition kernel (in measure form).
We can assume $\Dinf{p_{n}}{\stationarydist}<\infty$, since otherwise the claim holds trivially. 
Let $A$ be measurable, then by definition, 
\begin{align*}
    p_{n+1} \rb{A} &= \intop P \rb{A \mid x} dp_n \rb{x} = \intop P \rb{A \mid x} \frac{\diff p_n}{\diff p_\infty} \rb{x} \diff p_\infty \rb{x} \\
    & \le e^{\Dinf{p_n}{p_\infty}} \intop P\rb{A \mid x} \diff p_\infty \rb{x} = e^{\Dinf{p_n}{p_\infty}} p_\infty \rb{A} \,,
\end{align*}
so $\Dinf{p_{n+1}}{p_\infty} \le \Dinf{p_n}{p_\infty}$.
\end{proof}

We can now state the relevant results for continuous-time processes. 

\begin{corollary} \label{app-cor: second law continuous time}
Let $X_{t}$ be a Markov process with marginals $p_{t}$ and stationary
distribution $\stationarydist$. 
Then, for all $t>0:$
\[
\KL{p_{t}}{\stationarydist}\le \KL{p_{0}}{\stationarydist}\; \mathrm{or} \; \Dinf{p_{t}}{\stationarydist}\le\Dinf{p_{0}}{\stationarydist}
\]
\end{corollary}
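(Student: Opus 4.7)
The plan is to reduce the continuous-time statement to the discrete-time results (\cref{app-lem: second law discrete time} and \cref{app-lem:pointwise second law}) by viewing the continuous-time Markov process, sampled at a single pair of times, as a one-step discrete-time Markov chain. This avoids having to redo any of the data-processing argument.

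Concretely, fix any $t > 0$. Define the transition kernel
\[
P_t \rb{y \mid x} = \Pr \rb{X_t \in dy \mid X_0 = x} \,.
\]
By the Markov property and time-invariance, the pair $\rb{X_0, X_t}$ is distributed as a single step of a discrete-time Markov chain whose one-step transition kernel is $P_t$. The first key observation is that $p_\infty$ is stationary for $P_t$: since $p_\infty$ is stationary for the continuous-time process, if $X_0 \sim p_\infty$ then $X_t \sim p_\infty$, which is exactly the condition
\[
p_\infty \rb{y} = \int_\DOM P_t \rb{y \mid x} p_\infty \rb{x} dx \,.
\]

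Now I would simply invoke \cref{app-lem: second law discrete time} applied to the one-step chain with kernel $P_t$, initial distribution $p_0$ and stationary distribution $p_\infty$, yielding $\KL{p_t}{p_\infty} \le \KL{p_0}{p_\infty}$. For the second claim, the identical reduction combined with \cref{app-lem:pointwise second law} yields $\Dinf{p_t}{p_\infty} \le \Dinf{p_0}{p_\infty}$. Since $t>0$ was arbitrary, the corollary follows.

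The only potential obstacle is measure-theoretic: one must make sure that $P_t \rb{y \mid x}$ is a well-defined Markov kernel for the continuous-time process (i.e.~that the conditional distributions exist in a regular enough form to apply Fubini as was done in the discrete-time proof). Under the standard assumptions on continuous-time Markov processes (e.g.~the process admits regular conditional distributions at each time $t$, which holds for CLD and any of the SDE/SDER processes considered in this paper), this is immediate, and the reduction goes through without any further work.
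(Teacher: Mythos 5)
Your proposal is correct and is essentially the paper's own argument: the paper also reduces to the discrete-time lemmas by sampling the process at a sequence of times $n\Delta t$ with $\Delta t$ chosen so that $t$ is an integer multiple, and then invoking \cref{app-lem: second law discrete time} and \cref{app-lem:pointwise second law}. Your choice amounts to taking $\Delta t = t$, which is the simplest admissible discretization and gives the one-step reduction directly.
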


\begin{proof}
Let $0 < t$ and let $\Delta t>0$ such that $t\in\Delta t\cdot\mathbb{N}$.
Define $Y_{n}=X_{n\Delta t}$, then $Y_{n}$ is a discrete time Markov chain with marginals $p_{n \cdot \Delta t}$  and stationary distribution $\stationarydist$, so \cref{app-lem: second law discrete time} and \cref{app-lem:pointwise second law} imply the results.
\end{proof}

\newpage
\section{Proof of \texorpdfstring{\cref{thm: main result sup and mean}}{} and its Related Claims in \texorpdfstring{\cref{sec:general Markov Processes}}{}} \label{app-sec: proof of new main}

In this section, we present the proof of \cref{thm: main result sup and mean}, the claims leading to it, and some of its generalizations.

\subsection{Derivation of \texorpdfstring{\cref{cor: second law + chain rule for d inf and kl}}{}}

\begin{recall}[\cref{claim: sum of dists is difference of expectations or extremas}]
If $p,q,\mu,\nu$ are probability measures, and $p$ is Gibbs w.r.t $q$ with potential $\generalpotential < \infty$, then
\begin{enumerate}[ leftmargin=15pt]
    \item $\KLrel{\mu}{p}{q} + \KLrel{\nu}{q}{p} = \bbE_{\nu} \generalpotential - \bbE_{\mu} \generalpotential$, 
    \item $\Dinfrel{\mu}{p}{q}+\Dinfrel{\nu}{q}{p} = \esssup_\nu \generalpotential - \essinf_{\mu} \generalpotential$.
\end{enumerate}
In particular, $\KL{p}{q}+\KL{q}{p} = \bbE_{q} \generalpotential - \bbE_{p} \generalpotential$, and $\Dinf{p}{q}+\Dinf{q}{p} = \esssup_q \generalpotential - \essinf_p \generalpotential$.
\end{recall}

\begin{proof}
    By definition $\frac{\diff p}{\diff q} = Z^{-1} e^{-\generalpotential}$
    where $Z < \infty$ is the appropriate partition function.
    Then we have 
    \begin{align*}
        \KLrel{\mu}{p}{q} + \KLrel{\nu}{q}{p} &= \int \diff \mu \ln \frac{\diff p}{\diff q} + \int \diff \nu \ln \frac{\diff q}{\diff p} \\
        & = \int \rb{- \generalpotential - \ln Z} \diff \mu + \int \rb{\generalpotential + \ln Z} \diff \nu = \bbE_{\nu} \generalpotential - \bbE_{\mu} \generalpotential \,.
    \end{align*}
    Also, 
    \begin{align*}
        \Dinfrel{\mu}{p}{q} &+ \Dinfrel{\nu}{q}{p} = \ln \rb{\esssup_\mu \frac{\diff p}{\diff q}} + \ln \rb{\esssup_\nu \frac{\diff q}{\diff p}} \\
        & = \esssup_\mu \rb{-\generalpotential - \ln Z} + \esssup_\nu \rb{\generalpotential + \ln Z} = \esssup_\nu \generalpotential - \essinf_\mu \generalpotential \,,
    \end{align*}
    where in the last equality we used the fact that $\esssup \rb{- \generalpotential} = - \essinf \generalpotential$, and that $Z$ is a constant.
\end{proof}

Using the Chain Rule and \cref{claim: second law}, we derive the bounds of \eqref{eq:klchain} and \eqref{eq:dinfchain}, as re-stated and established in the following lemma.

\begin{lemma} \label{app-lem: triangle like inequalities for kl and d infty with second law}
    If $p_t$ is the marginal distribution of a Markov process with initial distribution $p_0$ at time $t$, $\stationarydist$ is a stationary distribution, and $\nu$ is a probability measure, then
    \begin{align*}
        \KL{p_t}{\nu} \le \KL{p_0}{\nu} + \KLrel{p_0}{\nu}{p_\infty} + \KLrel{p_t}{p_\infty}{\nu} \,,
    \end{align*}
    and similarly,
    \begin{align*}
        \Dinf{p_t}{\nu} \leq \Dinf{p_0}{\nu} + \Dinfrel{p_0}{\nu}{p_\infty} + \Dinfrel{p_t}{p_\infty}{\nu} \,.
    \end{align*}
\end{lemma}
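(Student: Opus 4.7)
The plan is a direct chain-rule manipulation of the Radon--Nikodym derivatives, combined with the second-law monotonicity (\cref{claim: second law} and its \RenyiInf counterpart from \cref{app-lem:pointwise second law}). First I would handle the degenerate cases: if any of $\KL{p_0}{\nu}$, $\KLrel{p_0}{\nu}{p_\infty}$, or $\KLrel{p_t}{p_\infty}{\nu}$ is infinite, the bound holds trivially, so we may assume $p_t \ll p_\infty \ll \nu$ and $p_0 \ll \nu \ll p_\infty$ on the relevant supports, which lets us multiply and split Radon--Nikodym derivatives freely.

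For the KL statement, the core identity is the algebraic chain rule $\ln (\diff p / \diff \nu) = \ln (\diff p / \diff p_\infty) + \ln (\diff p_\infty / \diff \nu)$, which holds pointwise a.e. Integrating against $p_t$ gives
\begin{equation*}
    \KL{p_t}{\nu} = \KL{p_t}{p_\infty} + \KLrel{p_t}{p_\infty}{\nu}.
\end{equation*}
Now I would apply \cref{claim: second law} to the first term to obtain $\KL{p_t}{p_\infty} \le \KL{p_0}{p_\infty}$. Integrating the same chain-rule identity against $p_0$ yields $\KL{p_0}{p_\infty} = \KL{p_0}{\nu} + \KLrel{p_0}{\nu}{p_\infty}$ (this is an equality, not an inequality, since we are integrating a sum against a single measure). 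Chaining these three relations gives the claimed KL bound.

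For the \RenyiInf statement, the same algebraic decomposition holds pointwise a.e., but now taking essential suprema is only sub-additive rather than additive, which is exactly what produces the inequality. Taking $\esssup_{p_t}$ of the chain-rule identity yields
\begin{equation*}
    \Dinf{p_t}{\nu} \le \Dinf{p_t}{p_\infty} + \Dinfrel{p_t}{p_\infty}{\nu},
\end{equation*}
and then applying the \RenyiInf form of the second law (\cref{app-lem:pointwise second law}) gives $\Dinf{p_t}{p_\infty} \le \Dinf{p_0}{p_\infty}$. Finally, taking $\esssup_{p_0}$ of the chain-rule identity bounds $\Dinf{p_0}{p_\infty} \le \Dinf{p_0}{\nu} + \Dinfrel{p_0}{\nu}{p_\infty}$. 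Combining the three yields the claim.

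The only real subtlety is the bookkeeping about which essential supremum/expectation is attached to which derivative in the ``relative'' notation of \cref{def: divergences}; the symmetric roles of $\mu$ and $p$ in $\KLrel{\mu}{p}{q}$ and $\Dinfrel{\mu}{p}{q}$ must be tracked carefully so that the middle ratio $\diff p_\infty / \diff \nu$ ends up weighted by $p_t$ on one side and by $p_0$ on the other, which is precisely what is needed to later convert the $\KLrel{\,\cdot\,}{p_\infty}{\nu}$ and $\KLrel{\,\cdot\,}{\nu}{p_\infty}$ terms into expected/essential-sup values of the Gibbs potential via \cref{claim: sum of dists is difference of expectations or extremas} in the downstream proof of \cref{cor: second law + chain rule for d inf and kl}. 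No other step is expected to be substantive.
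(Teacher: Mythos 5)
Your proposal is correct and follows essentially the same route as the paper's proof: a pointwise Radon--Nikodym chain rule $\ln\frac{\diff p}{\diff\nu} = \ln\frac{\diff p}{\diff p_\infty} + \ln\frac{\diff p_\infty}{\diff\nu}$, integrated (resp.\ essential-supped) against $p_t$ and $p_0$, with \cref{claim: second law} applied to the middle term. If anything you are a touch more careful than the paper in the \RenyiInf case, where you correctly note that the $\esssup$ decomposition is only sub-additive (yielding ``$\le$''), whereas the paper's display writes these as equalities --- harmless since only the upper bound is needed, but your version is the more accurate statement.
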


\begin{proof}
    This is a simple application of the chain rule,
    \begin{align*}
        \KL{p_t}{\nu} &= \int \diff p_t \ln \frac{\diff p_t}{\diff \nu} = \int \diff p_t \ln \frac{\diff p_t}{\diff p_\infty} \frac{\diff p_\infty}{\diff \nu} = \KL{p_t}{p_\infty} + \KLrel{p_t}{p_\infty}{\nu} \\
        &\le \KL{p_0}{p_\infty} + \KLrel{p_t}{p_\infty}{\nu} = \KL{p_0}{\nu} + \KLrel{p_0}{\nu}{p_\infty} + \KLrel{p_t}{p_\infty}{\nu} \,,
    \end{align*}
    where in the first inequality we used \cref{claim: second law}.
    Similarly,
    \begin{align*}
        \Dinf{p_t}{\nu} &= \esssup_{p_t} \ln \frac{\diff p_t}{\diff \nu} = \esssup_{p_t} \ln \frac{\diff p_t}{\diff p_\infty} \frac{\diff p_\infty}{\diff \nu}
        \leq \Dinf{p_t}{p_\infty} + \Dinfrel{p_t}{p_\infty}{\nu} \\
        & \le \Dinf{p_0}{p_\infty} + \Dinfrel{p_t}{p_\infty}{\nu} = \Dinf{p_0}{\nu} + \Dinfrel{p_0}{\nu}{p_\infty} + \Dinfrel{p_t}{p_\infty}{\nu} \,.
    \end{align*}
\end{proof}

\cref{cor: second law + chain rule for d inf and kl} now follows from plugging in \Cref{claim: sum of dists is difference of expectations or extremas} into \Cref{app-lem: triangle like inequalities for kl and d infty with second law}.

Given these bounds on the divergences, All that remains in order to prove \Cref{thm: main result sup and mean} is plugging \cref{cor: second law + chain rule for d inf and kl} into a PAC-Bayes bound.

\subsection{In-Expectation PAC-Bayes Bounds}

\begin{theorem}[Theorem 5 from \citet{maurer2004note}] \label{app-thm: gen bound kl tolstikhin}
    For any $\delta\in(0,1)$ and any $N \geq 8$, for any data-independent prior distribution $\priordist$:
    \begin{align*}
    \bbP_{S \sim \datadist^N} \rb{ \forall_{\posteriordist \;}
        \kl{\bbE_{h\sim \posteriordist} \trainfail \rb{h}}{\bbE_{h\sim \posteriordist} \expectedfail \rb{h}} \le \frac{\KL{\posteriordist}{\priordist} + \ln \frac{2\sqrt{N}}{\delta}}{N} } \geq 1-\delta \,,
    \end{align*}
    where $\kl{a}{b}= a \ln\tfrac{ a }{b}+(1- a )\ln\tfrac{1- a }{1-b}$ for $0\leq  a ,b \leq 1$ is the KL divergence for a Bernoulli random variable, and $\posteriordist$ denotes a posterior distribution.
\end{theorem}

\subsection{Single-Sample PAC-Bayes Bounds}

\cref{app-thm: gen bound kl tolstikhin} can be viewed as a bound in expectation over the draw from the posterior, which corresponds to the traditional PAC-Bayes view of considering the expected error of a randomized predictor.
But it is actually possible to get guarantees for a single draw from this predictor, which is more appropriate when we view the randomness as part of the training algorithm, that then outputs a single deterministic predictor (chosen at random).
High probability guarantees for a single draw from the posterior were shown by \citet{alquier2024user} based on \citet{catoni2007pac} and also discussed by \citet{dziugaite2025size}.
Here we present a tight version based on a simple modification to Maurer's proof \citep{maurer2004note}.

\begin{theorem}\label{app-thm: gen bound kl single}
For any $\delta\in(0,1)$ and $N\geq 8$, for any data independent prior $\rho$, and any learning rule specified by a conditional probability $h|S\sim\hat{\rho}_S$ such that $\priordist \ll \posteriordist_S$ $S$-a.s.,
    \begin{align*}
       \bbP_{S \sim \datadist^N, h \sim \posteriordist_S} \rb{
        \kl{ \trainfail \rb{h}}{\expectedfail \rb{h}} \le \frac{\ln \frac{\diff \posteriordist_S}{\diff \priordist}(h) + \ln \frac{2\sqrt{N}}{\delta}}{N} } \geq 1-\delta \,,
    \end{align*}
    and so, by the definition of $\Dinf{\posteriordist_S}{\priordist}$,
    \begin{align*}
       \bbP_{S \sim \datadist^N, h \sim \posteriordist_S} \rb{
        \kl{ \trainfail \rb{h}}{\expectedfail \rb{h}} \le \frac{\Dinf{\posteriordist_S}{\priordist} + \ln \frac{2\sqrt{N}}{\delta}}{N} } \geq 1-\delta \,.
    \end{align*}
\end{theorem}

\begin{proof}
    Following and modifying the proof of Theorem 5 of \citet{maurer2004note}, we start with the inequality $\bbE_S \bb{e^{N \kl{\trainfail \rb{h}}{\expectedfail \rb{h}}}} \leq 2\sqrt{N}$ \cite[Theorem 1]{maurer2004note}, which holds for any $h$, and so also in expectation over $h$ w.r.t.~$\priordist$:
    \begin{align}
        2\sqrt{N} &\geq \bbE_{h\sim\priordist} \bbE_S \bb{\exp\rb{N\;\kl{\trainfail \rb{h}}{\expectedfail \rb{h}}}} 
        = \bbE_S\bbE_{h\sim\priordist} \bb{ \exp\rb{N\;\kl{\trainfail \rb{h}}{\expectedfail \rb{h}}}} \notag \\
        \intertext{with a change of measure from $\priordist$ to $\posteriordist_S$,}
        &= \bbE_S\bbE_{h\sim\posteriordist_S} \bb{ \exp\rb{N\;\kl{\trainfail \rb{h}}{\expectedfail \rb{h}}} \frac{\diff \priordist}{\diff \posteriordist_S}(h)}\\
        &= \bbE_{S,h\sim\posteriordist_S} \bb{ \exp\rb{N\;\kl{\trainfail \rb{h}}{\expectedfail \rb{h}} - \ln \frac{\diff \posteriordist_S}{\diff \priordist}(h)}}
    \end{align}
    Now applying Markov's inequality, we get:
    \begin{align}
        \bbP_{S,h\sim\posteriordist_S} \rb{ \exp\rb{N\;\kl{\trainfail \rb{h}}{\expectedfail \rb{h}}-\ln \frac{\diff \posteriordist_S}{\diff \priordist}(h)} \le \frac{2\sqrt{N}}{\delta} } \geq 1-\delta\,.
    \end{align}
    Rearranging terms, we get the desired bound.
\end{proof}

\subsection{Arriving at \texorpdfstring{\cref{thm: main result sup and mean}}{}}

\begin{theorem} \label{app-thm: main result sup and mean on kl}
Consider any distribution $\datadist$ over $\calZ$, function $f:\calH \times \calZ \to [0,1]$, and sample size $N\ge 8$, any distribution $\nu$ over $\calH$, and any discrete or continuous time process $\{ h_t \in \calH \}_{t\geq 0}$ (i.e.~$t \in \mathbb{Z}_+$ or $t \in \mathbb{R}_+$) that is time-invariant Markov conditioned on $S$.
Denote $p_0(\cdot;S)$ the initial distribution of the Markov process (that may depend on $S$).
Let $p_\infty(\cdot;S)$ be any stationary distribution of the process conditioned on $S$, and $\Psi_S(h)\geq 0$ a non-negative potential function that can depend arbitrarily on $S$, such that $p_\infty(\cdot;S)$ is Gibbs w.r.t.~$\nu$ with potential $\Psi_S$.
Then:
\begin{enumerate}
    \item With probability $1-\delta$ over $S\sim \datadist^N$, 
\begin{align} \label{app-eq: main thm bound on kl}
    \kl{\bbE\left[\trainfail (h_t)\middle|S\right]}{\bbE\left[\expectedfail (h_t)\middle|S\right]}  &\leq 
    \frac{\KL{p_0(\cdot;S)}{\nu}+\bbE\left[\Psi_S(h_0)\middle|S\right] + \ln \sfrac{2\sqrt{N}}{\delta}}{N} 
\end{align}
and so
\begin{align} 
    \bbE\left[\expectedfail(h_t) - \trainfail(h_t) \middle| S \right] &\leq 
    \sqrt{ 2 \bbE \bb{\trainfail \rb{h_t} \mid S}
    \frac{\KL{p_0(\cdot;S)}{\nu}+\bbE\left[\Psi_S(h_0)\middle|S\right] + \ln \sfrac{2\sqrt{N}}{\delta}}{N} } \notag \\
    & \quad + 2 \frac{\KL{p_0(\cdot;S)}{\nu}+\bbE\left[\Psi_S(h_0)\middle|S\right] + \ln \sfrac{2\sqrt{N}}{\delta}}{N} \label{app-eq: main thm bound on kl implication to difference}
\end{align}
\item With probability $1-\delta$ over $S\sim \datadist^N$ and over $h_t$:
\begin{align} \label{app-eq: main thm bound on kl single}
    \kl{\trainfail (h_t)}{\expectedfail (h_t)}  &\leq 
    \frac{\Dinf{p_0(\cdot;S)}{\nu}+\esssup_{p_0} \Psi_S(h_0) + \ln \sfrac{2\sqrt{N}}{\delta}}{N} 
\end{align}
and so, when $\trainfail (h_t) < \expectedfail(h_t)$
\begin{align} 
    \expectedfail(h_t) - \trainfail(h_t) &\leq 
    \sqrt{ 2 \trainfail \rb{h_t}
    \frac{\Dinf{p_0(\cdot;S)}{\nu}+\esssup_{p_0} \Psi_S(h_0) + \ln \sfrac{2\sqrt{N}}{\delta}}{N} } \notag \\
    & \quad + 2 \frac{\Dinf{p_0(\cdot;S)}{\nu}+\esssup_{p_0} \Psi_S(h_0) + \ln \sfrac{2\sqrt{N}}{\delta}}{N} \label{app-eq: main thm bound on kl implication to difference single}
\end{align}
\end{enumerate}
\end{theorem}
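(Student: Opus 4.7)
The plan is to follow the same two-step template as the proof of \Cref{thm: main result sup and mean} in the main text --- first bound the divergences $\KL{p_t(\cdot;S)}{\nu}$ and $\Dinf{p_t(\cdot;S)}{\nu}$ via \Cref{cor: second law + chain rule for d inf and kl}, then plug these into PAC-Bayes guarantees --- but using the tighter KL-form PAC-Bayes bounds of \Cref{app-thm: gen bound kl tolstikhin,app-thm: gen bound kl single} in place of the Alquier-style bounds of \Cref{app-thm: alquier sample prob hypothesis expect,app-thm: alquier sample prob hypothesis prob}.

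For part 1, I would first apply \Cref{cor: second law + chain rule for d inf and kl} conditionally on $S$ to the time-invariant Markov process $\{h_t\}_{t\ge 0}$, using the stationary distribution $p_\infty(\cdot;S)$ which by hypothesis is Gibbs w.r.t.\ $\nu$ with non-negative potential $\Psi_S$. This yields, for every $t \ge 0$,
\begin{equation*}
    \KL{p_t(\cdot;S)}{\nu} \;\le\; \KL{p_0(\cdot;S)}{\nu} + \bbE\bb{\Psi_S(h_0) \,\middle|\, S}\,.
\end{equation*}
Next I would invoke \Cref{app-thm: gen bound kl tolstikhin} with the data-independent prior $\nu$ and the data-dependent posterior $\posteriordist_S = p_t(\cdot;S)$ (so that $\bbE_\posteriordist[\cdot] = \bbE[\,\cdot\,|S]$), and substitute the divergence bound above. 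Noting that $N \ge 8$ implies $2\sqrt{N} \le N$ and hence $\ln(2\sqrt{N}/\delta) \le \ln(N/\delta)$, this immediately yields \eqref{app-eq: main thm bound on kl}. The implication \eqref{app-eq: main thm bound on kl implication to difference} then follows from the elementary inequality $a \le b + \sqrt{2b\,\kl{b}{a}} + 2\kl{b}{a}$ valid for $0 \le b \le a \le 1$ (a standard refined-Pinsker/Bernstein-type inequality), applied with $b = \bbE[\trainfail(h_t)|S]$ and $a = \bbE[\expectedfail(h_t)|S]$. When $a < b$ the generalization gap is non-positive, so \eqref{app-eq: main thm bound on kl implication to difference} holds trivially.

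For part 2, I would instead use the \RenyiInf form of \Cref{cor: second law + chain rule for d inf and kl}, yielding
\begin{equation*}
    \Dinf{p_t(\cdot;S)}{\nu} \;\le\; \Dinf{p_0(\cdot;S)}{\nu} + \esssup_{h_0 \sim p_0(\cdot;S)} \Psi_S(h_0)\,.
\end{equation*}
By definition of $\Dinf$, this is a uniform pointwise bound on $\ln \tfrac{\diff p_t(\cdot;S)}{\diff \nu}(h)$ for $p_t(\cdot;S)$-almost every $h$. Plugging this into \Cref{app-thm: gen bound kl single} with $\priordist = \nu$ and $\posteriordist_S = p_t(\cdot;S)$, and again using $N \ge 8$, gives \eqref{app-eq: main thm bound on kl single}. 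Then \eqref{app-eq: main thm bound on kl implication to difference single} follows from the same refined-Pinsker inequality applied pointwise with $b = \trainfail(h_t)$ and $a = \expectedfail(h_t)$ (and trivially when $a < b$).

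The main obstacle is almost entirely bookkeeping rather than conceptual: one must match the KL divergence bound of \Cref{cor: second law + chain rule for d inf and kl} to the in-expectation PAC-Bayes bound and the \RenyiInf bound to the single-draw PAC-Bayes bound, and verify that the uniform pointwise nature of the $\Dinf$ bound is what allows \Cref{app-thm: gen bound kl single}'s $\ln (\diff \posteriordist_S/\diff \priordist)(h_t)$ to be replaced by a data-dependent constant, so that the resulting bound holds jointly with probability $1-\delta$ over both $S \sim \datadist^N$ and $h_t \sim p_t(\cdot;S)$.
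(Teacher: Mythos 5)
Your proposal is correct and follows essentially the same route as the paper's proof: apply \Cref{cor: second law + chain rule for d inf and kl} conditionally on $S$ to bound $\KL{p_t}{\nu}$ and $\Dinf{p_t}{\nu}$, plug into \Cref{app-thm: gen bound kl tolstikhin} and \Cref{app-thm: gen bound kl single} respectively, and then invert $\kl{\cdot}{\cdot}$ via the refined Pinsker inequality $a \le b + \sqrt{2b\,\kl{b}{a}} + 2\kl{b}{a}$. Your explicit remarks --- that $N\ge 8$ lets $\ln(2\sqrt{N}/\delta)$ be relaxed to $\ln(N/\delta)$, that the trivial case $a<b$ of the inversion inequality is harmless, and that the $\Dinf$ bound holds pointwise $p_t$-a.e.\ so it can be substituted for $\ln(\diff\posteriordist_S/\diff\priordist)(h_t)$ in the single-draw bound --- are all correct and are implicit (but less carefully spelled out) in the paper's own two-line proof.
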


\begin{lemma}
    Let $a, b \in \bb{0, 1}$. 
    Then
    \begin{align}\label{app-eq: kl simplifying inequality}
        b \le a + \sqrt{2 a \kl{a}{b}} + 2 \kl{a}{b} \,.
    \end{align}
\end{lemma}
\begin{proof}
    The KL divergence is non-negative, so it suffices to consider the case that $b \ge a$.
    Defining $\varphi : \bb{0, 1-a} \to \bbR$ as
    \begin{align*}
        \varphi \rb{u} = \frac{u^2}{2 \rb{a + u}} \,,
    \end{align*}
    it can be readily checked by differentiation that for all $u \in \bb{0, 1 - a}$,
    \begin{align*}
        \kl{a}{a + u} \ge \varphi \rb{u} \,.
    \end{align*}
    In particular, for $u = b - a \in \bb{0, 1 - a}$, 
    \begin{align} \label{app-eq: kl quadratic lower bound}
        \kl{a}{b} \ge \frac{\rb{b - a}^2}{2b} \,.
    \end{align}
    Next, we consider the following inequality
    \begin{align} \label{app-eq: quadratic inequality}
        2u^2 + \sqrt{2a} u + a - b \ge 0 \,, \; u \ge 0 \,.
    \end{align}
    Solving for $u$, it turns out that the inequality holds when 
    \begin{align} \label{app-eq: quadtratic inequality positive solution}
        u \ge \frac{\sqrt{8b - 6a} - \sqrt{2a}}{4} \,.
    \end{align}
    In addition, under the assumption that $b \ge a$, 
    \begin{align} \label{app-eq: quadratic inequality solution edge upper bound}
        \frac{\sqrt{8b - 6a} - \sqrt{2a}}{4} \le \sqrt{\frac{\rb{b - a}^2}{2b}} \,.
    \end{align}
    Combining \eqref{app-eq: kl quadratic lower bound}, \eqref{app-eq: quadtratic inequality positive solution}, and \eqref{app-eq: quadratic inequality solution edge upper bound}, $u = \sqrt{\kl{a}{b}}$ solves \eqref{app-eq: quadratic inequality} implying \eqref{app-eq: kl simplifying inequality}.
\end{proof}

\begin{proof}
    The inequalities \eqref{app-eq: main thm bound on kl} and \eqref{app-eq: main thm bound on kl single} follow by plugging \cref{cor: second law + chain rule for d inf and kl} into \cref{app-thm: gen bound kl tolstikhin,app-thm: gen bound kl single}.
    For inequalities \eqref{app-eq: main thm bound on kl implication to difference} and \eqref{app-eq: main thm bound on kl implication to difference single}, we use \eqref{app-eq: kl simplifying inequality}.
    For \eqref{app-eq: main thm bound on kl implication to difference}, we use $a = \expectedfail(h_t)$ and $b = \trainfail(h_t)$, which yields: 
        \begin{align*}
        \expectedfail(h_t) &\le \trainfail(h_t) + \sqrt{2 \trainfail(h_t) \kl{\trainfail(h_t)}{\expectedfail(h_t)}} + 2\kl{\trainfail(h_t)}{\expectedfail(h_t)} \\
        &\le \trainfail(h_t) + \sqrt{2 \trainfail(h_t) \frac{\KL{p_0(\cdot;S)}{\nu}+\bbE_{p_0} \Psi_S(h_0) + \ln \sfrac{2\sqrt{N}}{\delta}}{N}} \\
        &\quad + 2 \frac{\KL{p_0(\cdot;S)}{\nu}+\bbE_{p_0} \Psi_S(h_0) + \ln \sfrac{2\sqrt{N}}{\delta}}{N},
    \end{align*}
    and similarly for \eqref{app-eq: main thm bound on kl implication to difference single}.
\end{proof}

\begin{remark}
    Notice that when $h_t$ has a small training error $\bbE \bb{ \trainfail \rb{h_t} \mid S} \approx 0$, the effective generalization gap decays as $O \rb{1/N}$ instead of as $O \rb{1 / \sqrt{N}}$.
\end{remark}

\begin{remark}
    In order to get the version in \Cref{thm: main result sup and mean} we use the upper bound of Pinsker's inequality, \ie that for all $a, b \in \rb{0, 1}$
    \begin{align*}
        \abs{a - b} \le \sqrt{\frac{1}{2} \kl{a}{b}} \,,
    \end{align*}
    and simplify $\ln \frac{2\sqrt{N}}{\delta} \le \ln \frac{N}{\delta}$ as $N \ge 8$.
\end{remark}

Finally, we prove the equivalence statement made in \cref{foot:converse}:

\begin{claim} \label{app-claim: footnote converse of symmeterized KL}
    $\KL{p}{q}+\KL{q}{p}\leq \beta$ iff there exists a potential $\Psi$ such that $p$ is Gibbs w.r.t. $q$ with potential $\Psi$ and $\bbE_{q} \generalpotential - \bbE_{p} \generalpotential\leq \beta$, and similarly $\Dinf{p}{q}+\Dinf{q}{p}\leq \beta$ iff there exists a potential $0\leq \Psi\leq\beta$ such that $p$ is Gibbs w.r.t. $q$ with potential $\Psi$.
\end{claim}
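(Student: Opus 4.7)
The plan is to handle the KL and \RenyiInf cases with the same template: the ``if'' direction is a direct substitution into \cref{claim: sum of dists is difference of expectations or extremas}, while the ``only if'' direction is constructive---we exhibit an explicit potential built from the log Radon--Nikodym derivative $\ln \tfrac{\diff p}{\diff q}$, possibly shifted by a constant, and verify both the Gibbs identity and the required bound on $\Psi$. Throughout, I would first dispose of the degenerate case $\beta = \infty$ (trivial) and note that the finiteness of either symmetrized divergence forces $p$ and $q$ to be mutually absolutely continuous, so $\ln \tfrac{\diff p}{\diff q}$ is finite almost everywhere under both $p$ and $q$.

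For the KL equivalence, the ($\Leftarrow$) direction is immediate from the identity $\KL{p}{q} + \KL{q}{p} = \bbE_q \Psi - \bbE_p \Psi$ in \cref{claim: sum of dists is difference of expectations or extremas}. For the ($\Rightarrow$) direction I would take the canonical potential
\[
    \Psi := -\ln \tfrac{\diff p}{\diff q},
\]
so that $\int e^{-\Psi} \diff q = \int \diff p = 1$, which means $p$ is Gibbs w.r.t. $q$ with potential $\Psi$ and partition function $Z = 1$. Then
\[
    \bbE_q \Psi - \bbE_p \Psi = \bbE_q\!\left[-\ln \tfrac{\diff p}{\diff q}\right] - \bbE_p\!\left[-\ln \tfrac{\diff p}{\diff q}\right] = \KL{q}{p} + \KL{p}{q} \leq \beta,
\]
which closes the loop.

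For the \RenyiInf equivalence, the ($\Leftarrow$) direction again follows from \cref{claim: sum of dists is difference of expectations or extremas}: $\Dinf{p}{q} + \Dinf{q}{p} = \esssup_q \Psi - \essinf_p \Psi \leq \beta - 0 = \beta$. For ($\Rightarrow$), the key observation is that adding a constant to a potential does not change the Gibbs distribution (it only rescales $Z$). I would therefore take the \emph{shifted} canonical potential
\[
    \Psi := -\ln \tfrac{\diff p}{\diff q} + \Dinf{p}{q}.
\]
This still represents $p$ as a Gibbs distribution w.r.t. $q$ (with $Z = e^{-\Dinf{p}{q}}$). The bounds then follow from the two identities $\essinf_p \bigl(-\ln \tfrac{\diff p}{\diff q}\bigr) = -\Dinf{p}{q}$ and $\esssup_q \bigl(-\ln \tfrac{\diff p}{\diff q}\bigr) = \Dinf{q}{p}$, which yield $\essinf_p \Psi = 0$ and $\esssup_q \Psi = \Dinf{q}{p} + \Dinf{p}{q} \leq \beta$. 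Using mutual absolute continuity of $p$ and $q$, the $q$-essential supremum and $p$-essential infimum coincide with the pointwise a.e. bounds $0 \leq \Psi \leq \beta$.

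I do not anticipate a genuine obstacle: the only subtle point is choosing the additive constant in the \RenyiInf construction so that both one-sided bounds $\Psi \geq 0$ and $\Psi \leq \beta$ hold simultaneously, which is where the symmetrized hypothesis $\Dinf{p}{q} + \Dinf{q}{p} \leq \beta$ is used in an essential way.
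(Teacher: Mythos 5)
Your proof is correct and takes essentially the same route as the paper: both constructions use the canonical potential $\Psi = -\ln \tfrac{\diff p}{\diff q}$ and, in the \RenyiInf case, shift it by a constant to obtain the pointwise bounds $0 \le \Psi \le \beta$. You are slightly more explicit than the paper in naming the shift constant $\Dinf{p}{q}$ and in noting that mutual absolute continuity upgrades the $p$-essential-infimum and $q$-essential-supremum bounds to a.e. pointwise bounds, but the underlying argument is identical.
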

\begin{proof}
    The first direction follows directly from \cref{claim: sum of dists is difference of expectations or extremas}, so we only need to prove the converse.
    Assume that either $\KL{p}{q}+\KL{q}{p}\leq \beta$, or $\Dinf{p}{q}+\Dinf{q}{p}\leq \beta$.
    In these cases, both $\diff p / \diff q$ and $\diff q / \diff p$ exist, and for any measurable event $B$, $p \rb{B} = 0 \iff q \rb{B} = 0$, or equivalently, $p \rb{B} > 0 \iff q \rb{B} > 0$. 
    Therefore, $\mathrm{supp} \rb{p} = \mathrm{supp} \rb{q}$, and $\diff p / \diff q > 0$ on $\mathrm{supp} \rb{p}$. 
    Denote $\generalpotential = - \ln \diff p / \diff q$, then $p$ is Gibbs w.r.t. $q$ with potential $\generalpotential$.
    The same derivation as in the proof of \cref{claim: sum of dists is difference of expectations or extremas} results in the bounds $\bbE_{q} \generalpotential - \bbE_{p} \generalpotential\leq \beta$ and $\esssup_q \generalpotential - \essinf_p \generalpotential \leq\beta$.
    In particular, if the latter holds then $\generalpotential$ can be shifted such that essentially $0 \le \generalpotential \le \beta$.
\end{proof}

\newpage

\section{Tightness and Necessity of the Divergence Conditions} \label{app-sec: tightness and necessity}

If we are only interested in ensuring generalization at time $t\rightarrow\infty$, and when we converge to the stationary distribution $p_\infty$, then it is enough to bound the divergence $\D{p_\infty}{\nu}$. 
If we are interested in bounding $\D{p_t}{\nu}$ (and consequently, the generalization gap) at all times $t$, then we need also to limit $p_0$'s dependence on $S$, since $p_0$ (as well as $p_t$ for small $t$) can be completely different from a stationary $p_\infty$, and just bounding $\D{p_\infty}{\nu}$ does not say anything about it. Bounding $\D{p_0}{\mu}$, for some data-independent distribution $\mu$, ensures generalization at $p_0$.
This leaves the following questions regarding the proof of \cref{thm: main result sup and mean}:
\begin{itemize}[ leftmargin=15pt]
\item Why do we need to bound the divergences $\D{p_\infty}{\nu}$ and $\D{p_0}{\nu}$ from the same distribution $\nu$? That is, we do we need to require $\mu=\nu$?  Bounding the divergences of $p_0$ and $p_\infty$ to two different divergences $\mu\neq\nu$ is sufficient to get generalization at the beginning (i.e.~initialization) and end (i.e.~after mixing)--is it sufficient for generalization in the middle (i.e.~at any $t$)?
\item Why do we need to also bound the reverse divergence $\D{\nu}{p_\infty}$?  I.e., why do we need to require $p_\infty$ is Gibbs w.r.t.~$\nu$ with a bounded potential, instead of just controlling the divergence $\D{p_\infty}{\nu}$, which is a weaker requirement and sufficient for generalization after mixing?
\end{itemize}

As we now show, both are necesairy, and without requiring both, i.e.~if we drop either one of these, we cannot ensure generalization at intermediate times $t \ge 0$.

\paragraph{Construction.} 
Consider a supervised learning problem with $\calZ = \calX \times \calY$, $\calX = [0,1]$, $\calY=\{0,1\}$, $\calH =$ all measurable functions from $\calX$ to $\calY$, and the zero-one loss $f(h,(x,y)) = \indnosub{h(x)\neq y}$, with $\datadist$ being the uniform distribution over $\calX$, and $y$ being $\mathrm{Bernoulli} (\tfrac{1}{2})$ independent of $x$. 
For all $h$, $\expectedfail (h) = 0.5$. 
Let $p_0$ be the constant zero function with probability $\tfrac{1}{2}$ and the constant one function with probability $\tfrac{1}{2}$. 
Consider the following deterministic $S$-dependent transition function over $h$: if $h_t$ is the constant zero function, then $h_{t+1}=h_S$ which memorizes $S$, \ie $h_S(x)=y$ for $(x,y)\in S$, and $h_S(x)=1$ otherwise. 
If $h_t$ is not the constant zero function, then $h_{t+1}$ is the constant ones function. 
We have that $p_\infty$ is deterministic at the constant one function, and $\KL{p_\infty}{p_0}=\ln{2}$, and in fact $p_t=p_\infty$ for $t>1$. 
But with probability half, $h_1=h_S$, for which for any sample size $N>0$, $\trainfail(h_S)=0$ while $\expectedfail(h_S)=\tfrac{1}{2}$.

\paragraph{How does this show it is not enough to bound $\D{p_0}{\nu}$ and $\D{p_\infty}{\nu}$, but that we also need the reverse $\D{\nu}{p_\infty}$?} 
Since $p_0$ is data independent, we can take $\nu=p_0$, in which case $\KL{p_0}{\nu}=\Dinf{p_0}{\nu}=0$ and $\KL{p_\infty}{\nu}=\Dinf{p_\infty}{\nu}=\ln{2}$, but even as $N\to\infty$, the gap for $h_1$ does not diminish. Indeed, $\D{\nu}{p_\infty}=\infty$, and so $p_\infty$ is not Gibbs w.r.t. $\nu$ and Theorem \ref{thm: main result sup and mean} does not apply.

\paragraph{How does this show it is not enough to bound $\D{p_\infty}{\nu}+\D{\nu}{p_\infty}$ and $\D{p_0}{\mu}$ for $\mu \neq \nu$?} 
Since in this example $p_\infty$ is also data independent, we can take $\nu=p_\infty$ and $\mu=p_0$, in which case $\D{p_0}{\mu}=0$ and $\D{p_\infty}{\nu}+\D{\nu}{p_\infty}=0$. We are indeed ensured a small gap for $h_0$ and $h_\infty$, but not for $h_1$.

\newpage

\section{Generalized Version of \texorpdfstring{\cref{cor: generalization for ngf box and regularization}}{}}\label{apps-sec:CLD extentions}

We start by characterizing the stationary distributions of SDERs in a box with different noise scales $\sigma^2$.
The stationary distributions for Gaussian initialization can be found similarly.
Then, we extend \cref{cor: generalization for ngf box and regularization} to scenarios where $p_0 \neq \nu$, as an immediate consequence of \cref{thm: main result sup and mean}.

\subsection{Stationary distributions of CLD} \label{app-sec: stationary distirbutions of cld}

We first derive the stationary distribution of SDERs of the form
\begin{align} \label{app-eq: general reflected langevin auxiliary}
    d\bfx_t = - \nabla L \rb{\bfx_t} dt + \sqrt{2 \beta^{-1} \sigma^2 \rb{\bfx_t}} d\bfw_t + d\reflectionprocess \,,
\end{align}
with normal reflection in a box domain (for a full definition see \eqref{app-eq: box dom 1}-\eqref{app-eq: box reflecting field} in \cref{app-sec: SDER in a box}), where $L \ge 0$ is some $\calC^2$ loss function, $\beta > 0$ is an inverse temperature parameter, and $\sigma^2$ is a diffusion coefficient.
First, we present a well known characterization of the stationary distribution of \eqref{app-eq: general reflected langevin auxiliary}.

\begin{lemma} \label{app-lem: langevin in a box stationary}
    If $L, \sigma^2 \in \calC^2$, $\sigma^2 \rb{\cdot} > 0$ is uniformly bounded away from $0$ in $\DOMCLOS$, 
    \begin{align*}
        Z = \intop_{\DOMCLOS} \frac{1}{\sigma^{2}\rb{\bfx}} \exp\rb{-\beta\intop\frac{\nabla L\rb{\bfx}}{\sigma^{2}\rb{\bfx}}d\bfx} < \infty \,,
    \end{align*}
    the integrals exist, and the field $\nabla L / \sigma^2$ is conservative (curl-free),
    then 
    \begin{align} \label{app-eq: stationary langevin auxiliary}
        \stationarydist \rb{\bfx} = \frac{1}{Z} \frac{1}{\sigma^{2}\rb{\bfx}} \exp\rb{-\beta\intop\frac{\nabla L\rb{\bfx}}{\sigma^{2}\rb{\bfx}}d\bfx} \,
    \end{align}
     is a stationary distribution of \eqref{app-eq: general reflected langevin auxiliary}.
\end{lemma}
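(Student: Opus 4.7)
The plan is to verify the formula by showing that $\stationarydist$ satisfies the stationary Fokker--Planck (Kolmogorov forward) equation together with the no-flux Neumann boundary condition that encodes normal reflection. The cleanest route is to exhibit \emph{detailed balance}: show that the probability current
\[
J(\bfx) \;=\; -\nabla L(\bfx)\,\stationarydist(\bfx) \;-\; \nabla\!\bigl(\beta^{-1}\sigma^2(\bfx)\,\stationarydist(\bfx)\bigr)
\]
vanishes identically on $\DOMCLOS$. This would immediately yield both $\nabla\cdot J = 0$ in $\DOMINT$ (interior stationarity) and $J\cdot\NORMAL = 0$ on $\partial\DOM$ (the no-flux boundary condition corresponding to normal reflection), which together characterize stationary densities of reflected diffusions under the $\calC^2$ regularity of the data (see \citealp{kang2014characterization,sder-pilipenko-2014}).

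First I would unpack the stated formula. The hypothesis that the indefinite integral $\int \nabla L(\bfx)/\sigma^2(\bfx)\,d\bfx$ defines a scalar function means exactly that the vector field $\nabla L/\sigma^2$ is conservative on $\DOMCLOS$, so there exists a $\calC^2$ potential $V$ with $\nabla V = \nabla L/\sigma^2$, and the formula rewrites as $\stationarydist(\bfx) = Z^{-1}\sigma^{-2}(\bfx)\,e^{-\beta V(\bfx)}$. The finiteness of $Z$, together with $\sigma^2$ being $\calC^2$ and uniformly bounded away from zero on $\DOMCLOS$, guarantees that $\stationarydist$ is a well-defined probability density.

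The detailed balance check is then a short direct computation: multiplying $\stationarydist$ by $\beta^{-1}\sigma^2$ cancels the $\sigma^{-2}$ factor, leaving $\beta^{-1} Z^{-1} e^{-\beta V}$, whose gradient equals $-Z^{-1} e^{-\beta V}\nabla V = -(\nabla L/\sigma^2)\,Z^{-1} e^{-\beta V}$. This exactly matches $-\nabla L \cdot \stationarydist$, so $J\equiv 0$ on $\DOMCLOS$, and both required conditions hold trivially. The main (minor) subtlety is the translation from ``the integrals exist'' to the existence of a genuine $\calC^2$ scalar potential $V$ with $\nabla V = \nabla L/\sigma^2$; once that is in place, everything else reduces to the two-line gradient calculation above, and the classical existence/uniqueness theory for SDERs with normal reflection in a box domain (under our smoothness assumptions) closes the argument.
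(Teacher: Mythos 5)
Your proof is correct and follows essentially the same route as the paper's: the paper invokes its Lemma on Fokker--Planck stationarity with Neumann boundary conditions, notes that it suffices to annihilate the probability flux $\nabla(\sigma^2 p) + \beta p\,\nabla L$ on all of $\DOMCLOS$ (exactly your $J\equiv 0$ condition up to a factor of $\beta$), and then \emph{solves} that first-order ODE to derive the formula for $\stationarydist$; you instead take the stated formula as given and \emph{verify} it kills the current, which is the same computation run in reverse. Your explicit reading of ``the integrals exist'' as asserting that $\nabla L/\sigma^2$ is conservative with a $\calC^2$ potential $V$ is the right way to make the paper's somewhat informal antiderivative notation rigorous.
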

For completeness, the proof is presented in \cref{app-sec: langevin dynamics}, following additional results and definitions in \cref{app-sec: background}.
We can now calculate explicit stationary distributions for some choices of $\sigma^2$.
Specifically, we focus on cases where $\sigma^2 \rb{\bfx} = g \rb{L \rb{\bfx}}$ for some scalar function $g$, as it guarantees the curl-free condition, and is convenient to integrate.

\begin{example}[Uniform noise scale] \label{app-ex: uniform noise scale stationary}
    Assuming that $\sigma^2 \rb{\bfx} \equiv 1$, the stationary distribution becomes the well-known Gibbs distribution
    \begin{align}
        \stationarydist \rb{\bfx} = \frac{1}{Z} e^{-\beta L \rb{\bfx}} \,,
    \end{align}
    so
    \begin{align}
        \potential{uniform} \rb{\bfx} = \beta L \rb{\bfx} \,.
    \end{align}
\end{example}

\begin{example}[Linear noise scale] \label{app-ex: linear noise scale stationary}
    Let $\alpha > 0$, and suppose that $\sigma^2 \rb{\bfx} = \rb{L \rb{\bfx} + \alpha}$. 
    Then
    \begin{align*}
        \frac{\nabla L \rb{\bfx}}{\sigma^2 \rb{\bfx}} = \nabla \ln \rb{L \rb{\bfx} + \alpha}
    \end{align*}
    so the stationary distribution is
    \begin{align} \label{app-eq: linear noise scale stationary}
        \stationarydist \rb{\bfx} \propto \frac{1}{L \rb{\bfx} + \alpha} \exp \rb{- \beta \ln \rb{L \rb{\bfx} + \alpha}} = \frac{1}{L \rb{\bfx} + \alpha} \rb{L \rb{\bfx} + \alpha}^{-\beta} = \rb{L \rb{\bfx} + \alpha}^{-\beta - 1} \,,
    \end{align}
    which is integrable in a bounded domain.
    Recall that we want to represent $\stationarydist$ using a potential $\generalpotential$ with $\inf \generalpotential \ge 0$.
    In this case, we can start from $\Tilde{\generalpotential} \rb{\bfx} = \rb{\beta + 1} \ln \rb{L \rb{\bfx} + \alpha}$.
    Since $L \ge 0$ it clearly holds that $\tilde{\generalpotential} \ge \rb{\beta + 1} \ln \rb{\alpha}$, so we can use the shifted version
    \begin{align} \label{app-eq: linear noise scale sder potential}
    \potential{linear} \rb{\bfx} = \rb{\beta + 1} \rb{\ln \rb{L \rb{\bfx} + \alpha} - \ln \rb{\alpha}} = \rb{\beta + 1} \ln \rb{\frac{L \rb{\bfx}}{\alpha} + 1} \,.
    \end{align}
\end{example}

\begin{example}[Polynomial noise scale] \label{app-ex: polynomial noise scale stationary}
    Let $\alpha > 0$, and $k > 1$.
    Suppose that $\sigma^2 \rb{\bfx} = \rb{L \rb{\bfx} + \alpha}^k$. 
    Then
    \begin{align*}
        \frac{\nabla L \rb{\bfx}}{\sigma^2 \rb{\bfx}} = \nabla L \rb{\bfx} \rb{L \rb{\bfx} + \alpha}^{-k} = \frac{1}{1 - k} \nabla \rb{L \rb{\bfx} + \alpha}^{1 - k}
    \end{align*}
    so 
    \begin{align*}
        \stationarydist \rb{\bfx} \propto \rb{L \rb{\bfx} + \alpha}^{-k} \exp \rb{\frac{\beta}{k - 1} \rb{L \rb{\bfx} + \alpha}^{1 - k}} \,.
    \end{align*}
    As before, the potential is monotonically increasing with $L \rb{\bfx}$, so we can make a shift
    \begin{align*}
        \potential{poly} = k \ln \rb{\frac{L\rb{\bfx}}{\alpha}  + 1} + {\frac{\beta}{k - 1} \rb{\alpha^{1-k}-\rb{L \rb{\bfx} + \alpha}^{1 - k}} }  \,.
    \end{align*}
\end{example}

\begin{example}[Exponential noise scale] \label{app-ex: exponential noise scale stationary}
    Let $\alpha > 0$ and suppose that $\sigma^2 \rb{\bfx} = e^{\alpha L \rb{\bfx}}$.
    Then
    \begin{align*}
        \frac{\nabla L \rb{\bfx}}{\sigma^2 \rb{\bfx}} = - \frac{1}{\alpha} \nabla \rb{e^{-\alpha L \rb{\bfx}}} 
    \end{align*}
    so 
    \begin{align*}
        \stationarydist \rb{\bfx} \propto e^{-\alpha L \rb{\bfx}} \exp \rb{\frac{\beta}{\alpha} e^{-\alpha L \rb{\bfx}}} = \exp \rb{\frac{\beta}{\alpha} e^{-\alpha L \rb{\bfx}} - \alpha L \rb{\bfx}} \,.
    \end{align*}
    Denote $\psi \rb{\tau} = \alpha \tau - \frac{\beta}{\alpha} e^{-\alpha \tau}$, then $\psi^\prime \rb{\tau} = \alpha + \beta e^{-\alpha \tau} \ge 0$.
    Therefore, $\min_{\tau \ge 0} \psi \rb{\tau} = \psi \rb{0} = -\frac{\beta}{\alpha}$, and we can take
    \begin{align} \label{app-eq: exponential noise scale sder potential}
        \potential{exp} \rb{\bfx} = \alpha L \rb{\bfx} - \frac{\beta}{\alpha} e^{-\alpha L \rb{\bfx}} + \frac{\beta}{\alpha} = \alpha L \rb{\bfx} + \frac{\beta}{\alpha} \rb{1 - e^{-\alpha L \rb{\bfx}}}
    \end{align}
\end{example}

\subsection{Generalization bounds} \label{app-sec: cld generalization subsec}

\paragraph{Bounded domain with uniform initialization.}
Assume that training follows a CLD in a bounded domain as described in \eqref{app-eq: general reflected langevin auxiliary} with uniform initialization $p_0 = \mathrm{Uniform} \rb{\PARAMDOM_0}$, where $\PARAMDOM_0 \subseteq \PARAMDOM$.
For simplicity we take $\sigma^2 \equiv 1$.
In that case \cref{thm: main result sup and mean} implies the following.

\begin{lemma} \label{app-lem: generalization for ngf box}
    Assume that the parameters evolve according to \eqref{app-eq: general reflected langevin auxiliary} with $\sigma^2 \equiv 1$ and uniform initialization $p_0 = \mathrm{Uniform} \rb{\PARAMDOM_0}$, where $\PARAMDOM_0 \subseteq \PARAMDOM$.
    Then for any time $t\ge 0$, and $\delta \in \rb{0, 1}$,
    \begin{enumerate}[ leftmargin=15pt]
        \item w.p. $1-\delta$ over $S\sim \datadist^N$, 
        \begin{align} \label{app-eq: sder bound in expectation}
        \bbE_{\param_t \sim p_t} \bb{\expectedfail(\param_t) - \trainfail(\param_t) \mid S} \le \sqrt{\frac{\beta \bbE_{p_0} \bb{ \trainloss (\param) \mid S} + \ln \abs{\PARAMDOM} / \abs{\PARAMDOM_0} + \ln \rb{N/\delta}}{2N}} \,.
        \end{align}
        \item w.p. $1-\delta$ over $S\sim \datadist^N$ and $\param_t \sim p_t$
        \begin{align} \label{app-eq: sder bound in probability}
        \expectedfail(\param_t) - \trainfail(\param_t) \le \sqrt{\frac{\beta \esssup_{p_0} \trainloss (\param) + \ln \abs{\PARAMDOM} / \abs{\PARAMDOM_0} + \ln \rb{N/\delta}}{2N}} \,.
        \end{align}
    \end{enumerate}
\end{lemma}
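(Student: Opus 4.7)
The plan is to apply \cref{thm: main result sup and mean} with a carefully chosen data-independent base distribution $\nu$. Since $p_0 = \mathrm{Uniform}(\PARAMDOM_0)$ is supported on a possibly strict subset of the domain, we cannot take $\nu = p_0$ as in \cref{cor: generalization for ngf box and regularization}; instead we take $\nu = \mathrm{Uniform}(\PARAMDOM)$, the uniform distribution on the full box.

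First I would invoke \cref{app-lem: langevin in a box stationary} (with $\sigma^2 \equiv 1$), which gives that the SDER has the stationary distribution $\posteriorstationary(\param) \propto e^{-\beta \trainloss(\param)} \ind{\param}{\PARAMDOM}$. Writing this w.r.t.\ $\nu = \mathrm{Uniform}(\PARAMDOM)$, we get $d\posteriorstationary / d\nu \propto e^{-\beta \trainloss(\param)}$, so $\posteriorstationary$ is Gibbs w.r.t.\ $\nu$ with potential $\generalpotential_S(\param) = \beta \trainloss(\param) \ge 0$ (the normalizing factor $|\PARAMDOM|$ is absorbed into the partition constant $Z$, and non-negativity holds since $\trainloss \ge 0$ by assumption).

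Next I would compute the divergences between $p_0$ and $\nu$. Since both are uniform distributions on nested sets, the Radon-Nikodym derivative $dp_0/d\nu$ equals the constant $|\PARAMDOM|/|\PARAMDOM_0|$ on $\PARAMDOM_0$, and so
\[
\KL{p_0}{\nu} = \Dinf{p_0}{\nu} = \ln \frac{|\PARAMDOM|}{|\PARAMDOM_0|} \,.
\]
Substituting $\bbE_{p_0} \generalpotential_S = \beta \, \bbE_{p_0} \trainloss(\param)$ and $\esssup_{p_0} \generalpotential_S = \beta \esssup_{p_0} \trainloss(\param)$ into \eqref{eq: main thm general potential expectation} and \eqref{eq: main thm general potential probability} respectively then yields \eqref{app-eq: sder bound in expectation} and \eqref{app-eq: sder bound in probability}.

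There is no real obstacle here: the only minor point to be careful about is that \cref{thm: main result sup and mean} requires the Gibbs potential to be non-negative, which is immediate from $\trainloss \ge 0$; and that one must check that $\nu = \mathrm{Uniform}(\PARAMDOM)$ is indeed data-independent (which it is, since $\PARAMDOM$ is a fixed box). The relaxation from $p_0 = \nu$ to $p_0 \subseteq \nu$ is exactly the setting anticipated by the second remark following \cref{thm: main result sup and mean}: $\posteriorstationary$ is \emph{not} Gibbs w.r.t.\ $p_0$ (since its support $\PARAMDOM$ strictly contains that of $p_0$), but it is Gibbs w.r.t.\ $\nu$, and the extra cost of using a data-independent initialization prior is precisely the $\KL{p_0}{\nu}$ (or $\Dinf{p_0}{\nu}$) term, which for uniform distributions is just a log-volume ratio.
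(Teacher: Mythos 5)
Your proof is correct and takes exactly the route the paper intends: the paper's own proof of this lemma is a one-liner ("direct corollary of \cref{thm: main result sup and mean} with $\KL{p_0}{\nu} = \ln \abs{\PARAMDOM}/\abs{\PARAMDOM_0}$"), and you have simply made explicit the pieces it leaves implicit — taking $\nu = \mathrm{Uniform}(\PARAMDOM)$, recognizing the stationary distribution of \eqref{app-eq: general reflected langevin auxiliary} with $\sigma^2 \equiv 1$ as Gibbs w.r.t.\ $\nu$ with potential $\beta \trainloss \ge 0$, computing $\KL{p_0}{\nu} = \Dinf{p_0}{\nu} = \ln \abs{\PARAMDOM}/\abs{\PARAMDOM_0}$, and substituting. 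Your observation that this is precisely the situation anticipated by the remark after \cref{thm: main result sup and mean} (support of $p_0$ a strict subset of that of $p_\infty$) is also the right framing.
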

\begin{proof}
    This is a direct corollary of \cref{thm: main result sup and mean} with $\KL{p_0}{\nu} = \ln \abs{\PARAMDOM} / \abs{\PARAMDOM_0}$.
\end{proof}

\paragraph{$\ell^2$ regularization with Gaussian initialization.}
Let $\blambda \in \bbR^\PARAMDIM_{>0}$ be regularization terms, and consider the unconstrained SDE
\begin{align} \label{app-eq: sde with regularization}
    d \param_t = -\nabla L \rb{\param_t} dt - \beta^{-1} \diag \rb{\blambda} \param_t dt + \sqrt{2 \beta^{-1} \sigma^2 \rb{\param_t}} d \bfw_t \,.
\end{align}
Notice that $- \beta^{-1} \diag \rb{\blambda} \param_t dt$ corresponds to an additive regularization of the form $\frac{1}{2\beta} \param_t^\top \diag\rb{\blambda} \param_t$, so each parameter can have a different regularization coefficient.
We shall denote by $\phi_{\blambda}$ a multivariate Gaussian distribution with mean $\bfzero$ and covariance matrix $\diag \rb{\blambda^{-1}}$, where $\blambda^{-1} = \rb{ \lambda_1^{-1}, \dots, \lambda_{\PARAMDIM}^{-1}}$.
For simplicity, we present the results with $\sigma^2 \equiv 1$.

\begin{lemma} \label{app-lem: generalization for cld with regularization}
    Let $\blambda_0, \blambda_1 > 0$, and let $\param_t$ evolve according to \eqref{app-eq: sde with regularization} with $\sigma^2 \equiv 1$ and $\blambda = \blambda_1$, and start from a Gaussian initialization $p_0 = \phi_{\blambda_0}$.
    Then for any time $t\ge 0$, and $\delta \in \rb{0, 1}$,
    \begin{enumerate}[ leftmargin=15pt]
        \item w.p. $1-\delta$ over $S\sim \datadist^N$, 
        \begin{align} \label{app-eq: cld with regularization bound in expectation}
        \bbE_{\param_t \sim p_t} \bb{\expectedfail(\param_t) - \trainfail(\param_t) \mid S} \le \sqrt{\frac{\beta \bbE_{p_0} \bb{ \trainloss (\param) \mid S} + \KL{\phi_{\blambda_0}}{\phi_{\blambda_1}} + \ln \rb{N/\delta}}{2N}} \,.
        \end{align}
        \item w.p. $1-\delta$ over $S\sim \datadist^N$ and $\param_t \sim p_t$
        \begin{align} \label{app-eq: cld with regularization bound in probability}
        \expectedfail(\param_t) - \trainfail(\param_t) \le \sqrt{\frac{\beta \esssup_{p_0} \trainloss (\param) + \KL{\phi_{\blambda_0}}{\phi_{\blambda_1}} + \ln \rb{N/\delta}}{2N}} \,,
        \end{align}
    \end{enumerate}
    where $\KL{\phi_{\blambda_0}}{\phi_{\blambda_1}} = \frac{1}{2} \sum_{i=1}^\PARAMDIM \rb{\ln \rb{\frac{\lambda_{1,i}}{\lambda_{0,i}}} - 1 + \frac{\lambda_{0,i}}{\lambda_{1, i}}}$.\footnote{For $\blambda_0 = \lambda_0 \bfI, \blambda_1 = \lambda_1 \bfI$, $\lambda_0, \lambda_1 > 0$, this simplifies to $\KL{\phi_{\lambda_0}}{\phi_{\lambda_1}} = \frac{d}{2} \rb{\ln \frac{\lambda_0}{\lambda_1} - 1 + \frac{\lambda_1}{\lambda_0}}$}
\end{lemma}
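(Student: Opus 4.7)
The strategy is to reduce this to a direct application of \cref{thm: main result sup and mean} by exhibiting a stationary distribution that is Gibbs with respect to a data-independent base measure. I would first rewrite the SDE \eqref{app-eq: sde with regularization} (with $\sigma^2\equiv 1$ and $\blambda=\blambda_1$) in pure-gradient form by absorbing the linear drift into the potential: with the augmented loss $\widetilde\trainloss(\param) = \trainloss(\param) + (2\beta)^{-1}\param^\top\diag(\blambda_1)\param$, the dynamics become
\begin{align*}
d\param_t = -\nabla\widetilde\trainloss(\param_t)\,dt + \sqrt{2\beta^{-1}}\,d\bfw_t,
\end{align*}
which is standard CLD on $\bbR^\PARAMDIM$. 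The unbounded-domain analogue of \cref{app-lem: langevin in a box stationary}---where the confining quadratic term in $\widetilde\trainloss$ plays the role of the reflecting boundary in ensuring normalizability---then identifies the stationary density as
\begin{align*}
p_\infty(\param;S) \;\propto\; e^{-\beta\widetilde\trainloss(\param)} \;\propto\; e^{-\beta\trainloss(\param)}\,\phi_{\blambda_1}(\param),
\end{align*}
with partition function $Z_S = \bbE_{\phi_{\blambda_1}}[e^{-\beta\trainloss}]\in(0,1]$, finite because $\trainloss\ge 0$.

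The key choice is the data-independent base measure $\nu = \phi_{\blambda_1}$, which depends only on the regularizer and not on $S$. The display above then shows that $p_\infty(\cdot;S)$ is Gibbs w.r.t.~$\nu$ with non-negative potential $\generalpotential_S(\param) = \beta\trainloss(\param)$, which is exactly the hypothesis needed by \cref{thm: main result sup and mean}. From there the bound is assembled by substituting two ingredients. The first is $\bbE_{p_0}\generalpotential_S = \beta\bbE_{p_0}\trainloss$ for the in-expectation version, and $\esssup_{p_0}\generalpotential_S = \beta\esssup_{p_0}\trainloss$ for the in-probability version, both immediate. The second is the divergence $\KL{p_0}{\nu} = \KL{\phi_{\blambda_0}}{\phi_{\blambda_1}}$, which is a standard KL between centered Gaussians with diagonal precision matrices; a routine per-coordinate calculation reproduces the formula stated in the lemma. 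Plugging these into \eqref{eq: main thm general potential expectation} and \eqref{eq: main thm general potential probability} yields \eqref{app-eq: cld with regularization bound in expectation} and \eqref{app-eq: cld with regularization bound in probability}, respectively.

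The principal technical obstacle is the rigorous justification of the stationary-distribution formula on the unbounded domain $\bbR^\PARAMDIM$: \cref{app-lem: langevin in a box stationary} is proved for a box with normal reflection, so on $\bbR^\PARAMDIM$ one must verify directly that the candidate density satisfies Fokker--Planck stationarity, and that the quadratic term in $\widetilde\trainloss$ provides enough coercivity to rule out loss of mass at infinity and to guarantee uniqueness of $p_\infty$. A secondary, more cosmetic point is that \eqref{eq: main thm general potential probability} is stated with $\Dinf{p_0}{\nu}$ rather than $\KL{p_0}{\nu}$; for centered diagonal Gaussians $\Dinf{\phi_{\blambda_0}}{\phi_{\blambda_1}}$ is finite only when $\blambda_0\ge\blambda_1$ coordinatewise, so the stated high-probability bound is cleanest under this condition, and in general one should substitute $\Dinf{\phi_{\blambda_0}}{\phi_{\blambda_1}}$ for the KL term in \eqref{app-eq: cld with regularization bound in probability}.
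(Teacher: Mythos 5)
Your approach is the same as the paper's: take $\nu=\phi_{\blambda_1}$ and $\Psi_S=\beta\trainloss$ so that $p_\infty(\cdot;S)\propto e^{-\beta\trainloss}\phi_{\blambda_1}$ is Gibbs with non-negative potential, note normalizability since $\trainloss\ge 0$, compute the divergence from $p_0=\phi_{\blambda_0}$ to $\nu$, and substitute into \cref{thm: main result sup and mean}; the paper treats this as a one-line corollary, taking the identification of the stationary density for the regularized unbounded SDE as established background, which you flag (reasonably) as the only substantive step.

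Your secondary remark about part (2) is correct and worth underlining: \eqref{eq: main thm general potential probability} requires $\Dinf{p_0}{\nu}$, not $\KL{p_0}{\nu}$, and for centered diagonal Gaussians $\Dinf{\phi_{\blambda_0}}{\phi_{\blambda_1}}$ is finite only when $\blambda_0\ge\blambda_1$ coordinatewise, where it equals $\tfrac12\sum_i\ln(\lambda_{0,i}/\lambda_{1,i})$; writing the KL there in the lemma's statement is a mis-statement. One further small discrepancy you glossed over: the displayed formula in the lemma body has the subscripts of $\blambda_0,\blambda_1$ transposed relative to the correct $\KL{\phi_{\blambda_0}}{\phi_{\blambda_1}}=\tfrac12\sum_i\bigl(\ln\tfrac{\lambda_{0,i}}{\lambda_{1,i}}-1+\tfrac{\lambda_{1,i}}{\lambda_{0,i}}\bigr)$, so your ``routine calculation'' would actually produce the footnote's isotropic formula, not the body's; this is a typo in the paper rather than a gap in your argument.
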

\begin{proof}
    This is a direct corollary of \cref{thm: main result sup and mean} with the explicit expression for the KL divergence between two Gaussians.
\end{proof}

\begin{remark}[Dependence on the parameters' dimension]
    While the bound in \cref{app-lem: generalization for cld with regularization} depends on the dimension of the parameters $\PARAMDIM$, this can be mitigated in practice.
    For example, by matching the regularization coefficient and initialization variance, the KL-divergence term vanishes and we lose the dependence on dimension.
    Furthermore, we can control each parameter separately by using parameter specific initialization variances and regularization coefficients.
    Then, the KL-divergence can have different dependencies, if any, on the dimension $\PARAMDIM$.
\end{remark}
\newpage
\section{Linear Regression with CLD} \label{app-sec: quadratic objective}

\cref{thm: main result sup and mean,cor: generalization for ngf box and regularization} only bound the \emph{gap} between the population and training errors, yet this does not necessarily bound the population error itself.
One way to do this is by separately bounding the training error and showing that in the regime in which the generalization gap is small, the training error can be small as well.
In \cref{app-sec: experiments} we show empirically that deep NNs can reach low training error when trained with SGLD in the regime in which \cref{cor: generalization for ngf box and regularization} is not vacuous.
Here, we look at the particular case of the asymptotic behavior of ridge regression with CLD training with Gaussian i.i.d. data, for which we can analytically study the training and population \emph{losses}.

\paragraph{Setup.}
Let $\paramteach \in \bbR^\PARAMDIM$, $y = \bfx^\top \paramteach + \varepsilon$ with $\norm{\paramteach}=1$ and $\varepsilon \sim \calN \rb{0, \sigma^2}$ independent of $\bfx$. We assume that $\bfx$ has i.i.d. entries with $\bbE \bfx = \bfzero$ and covariance $\bbE \bb{\bfx \bfx^\top} = \bfI$.
Let $\bfX \in \bbR^{N \times \PARAMDIM}$ be the data (design) matrix, $\bfy \in \bbR^N$ the training targets, $\bvareps \in \bbR^N$ the pointwise perturbations, and $\param \in \bbR^\PARAMDIM$ the parameters in a linear regression problem.
In what follows, we focus on the overdetermined case $N > \PARAMDIM$, where $\bfX$ has full column rank with probability 1, so the empirical covariance $\bfA = \frac{1}{N} \bfX^\top \bfX \succ 0$ a.s.
In addition, we denote $\paramls = \frac{1}{N} \bfA^{-1} \bfX^\top \bfy$, and $\Tilde{\param} = \param - \paramls$.
The training objective is then the minimization of the regularized empirical loss
\begin{align*}
    \trainloss \rb{\param} + \frac{\lambda}{2\beta} \norm{\param}^2 = \frac{1}{2N} \norm{\bfX \param - \bfy}^2 + \frac{\lambda}{2\beta} \norm{\param}^2  = \frac{1}{2} \Tilde{\param}^\top \bfA \Tilde{\param} + C_S + \frac{\lambda}{2\beta} \norm{\param}^2 \,,
\end{align*}
where $C_S = \trainloss \rb{\paramls} = \frac{1}{2N} \norm{\bfy}^2 - \frac{1}{2} \paramls \bfA \paramls = \frac{1}{2N} \norm{\bfy}^2 - \frac{1}{2N} \bfy^\top \bfX \rb{\bfX^\top \bfX}^{-1} \bfX^\top \bfy$, is the empirical irreducible error.

\paragraph{CLD training.}
Assume that training is performed by CLD with inverse temperature $\beta > 0$, which, because $\trainloss$ is quadratic, takes the form
\begin{align} \label{app-eq: linear regression OU SDE}
    \diff \param_t = - \bfA \rb{\param_t - \paramls} \diff t - \lambda \beta^{-1} \param_t \diff t + \sqrt{\frac{2}{\beta}} \diff \bfw_t \,.
\end{align}
Since $\bfA \succ 0$ and $\lambda > 0$, the Gibbs distribution
\begin{align*}
    \posteriorstationary \rb{\param} & \propto \exp \rb{- \frac{1}{2} \rb{\rb{\param - \paramls}^\top \beta \bfA \rb{\param - \paramls} + \lambda \param^\top \param }}
\end{align*}
is the unique stationary distribution, and furthermore, it is the asymptotic distribution of \eqref{app-eq: linear regression OU SDE}.
We can simplify this to a Gaussian.
Denote $\alpha = \lambda / \beta$ and
\begin{align*}
    \bSigma = \frac{1}{\beta} \rb{\bfA + \alpha \bfI}^{-1} \; \mathrm{and} \; \Bar{\param} = \beta \bSigma \bfA \paramls = \frac{1}{N} \rb{\bfA + \alpha \bfI}^{-1} \bfX^\top \bfy \,, 
\end{align*}
then 
\begin{align*}
    \rb{\param - \Bar{\param}}^\top \bSigma^{-1} \rb{\param - \Bar{\param}} & = \beta \param^\top \rb{\bfA + \alpha \bfI} \param - 2 \param^\top \bSigma^{-1} \Bar{\param} + \Bar{\param}^\top \bSigma^{-1} \Bar{\param} \\
    & = \beta \param^\top \rb{ \bfA + \alpha \bfI} \param - 2 \beta \param^\top \bSigma^{-1} \bSigma \bfA \paramls + \beta^2 \paramls^\top \bfA \bSigma \bSigma^{-1} \bSigma \bfA \paramls \\
    & = \beta \param^\top \rb{\bfA + \alpha \bfI} \param - 2 \beta \param^\top \bfA \paramls + \beta^2 \paramls^\top \bfA \bSigma \bfA \paramls \,.
\end{align*}
Since the last term is constant w.r.t. $\param$, we deduce that
\begin{align*}
    \posteriorstationary \rb{\param} \propto \exp \rb{-\frac{1}{2} \rb{\param - \Bar{\param}}^\top \bSigma^{-1} \rb{\param - \Bar{\param}}} \,,
\end{align*}
\ie the stationary distribution is a Gaussian $\calN \rb{\Bar{\param}, \bSigma}$.
We can now calculate the expected training and population losses.

\paragraph{Goal.}
In the rest of this section, our final aim is to calculate the expected training and population losses in the setup described above, in the case when the data is sampled i.i.d. from standard Gaussian distribution, $\sigma$ is a fixed constant, $\lambda \propto d$ (to match standard initialization), \footnote{Since this is a linear model $d=\mathrm{layer\,width}$, and as we assume the regularization matches the standard initialization.
This initialization is considered in many works as a Bayesian prior in various settings \citep{lee2018deep,wenger2025variational}.} $N,\beta$ and $d$ are large, but $\beta \ll N$, so our generalization bound is small (since $\mathbb{E}_{p_0}L$ is a fixed constant in this case). 
We will find (in \cref{app-rem: taylor simplification of training loss} and \cref{app-rem: taylor simplification population loss}) that if also $d \ll \beta$ then the training and expected population loss are not significantly degraded. This is not a major constraint, since we need $d\ll N$ to get good population loss anyway, even without noise (i.e. $\beta\rightarrow\infty$). This shows that in this regime $d \ll \beta \ll N$, the randomness required by our generalization bound (the KL bounds in \cref{cor: generalization for ngf box and regularization}) does not significantly harm the training loss or the expected population loss.

\begin{claim} \label{app-claim: linear regression expected train loss conditioned on X}
    With some abuse of notation, denote $\trainloss \rb{\param_\infty} = \bbE_{\param \sim \posteriorstationary} \trainloss \rb{\param}$.
    Then
    \begin{align*}
    \bbE \bb{ \trainloss \rb{\param_\infty} \mid \bfX } & = \frac{1}{2\beta} \tr \rb{\bfA \rb{\bfA + \alpha \bfI}^{-1}} + \frac{\alpha^2}{2} \param^{\star \top} \rb{\bfA + \alpha \bfI}^{-2} \bfA \paramteach  \\
    & \quad + \frac{\sigma^2 \alpha^2}{2N} \mathrm{Tr} \rb{ \rb{\bfA + \alpha \bfI}^{-2} } + \frac{\sigma^2}{2} \rb{1 - \frac{\PARAMDIM}{N}} \,.
\end{align*}
\end{claim}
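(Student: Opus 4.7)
The plan is to first decompose the empirical loss into its quadratic piece and the irreducible residual, apply a standard Gaussian-integration identity for the posterior expectation, and then take a second expectation over $\bfy$ (conditioned on $\bfX$) using the Gaussian label noise model. Concretely, start from the identity
\begin{equation*}
\trainloss(\param) = \tfrac{1}{2}\bigl(\param - \paramls\bigr)^\top \bfA \bigl(\param - \paramls\bigr) + C_S
\end{equation*}
already established in the setup, and use the fact that $\param_\infty \sim \calN(\bar\param, \bSigma)$ to write
\begin{equation*}
\bbE_{\param \sim \posteriorstationary}\bigl[\trainloss(\param) \mid \bfX, \bfy\bigr] = \tfrac{1}{2}\tr(\bfA \bSigma) + \tfrac{1}{2}(\bar\param - \paramls)^\top \bfA (\bar\param - \paramls) + C_S.
\end{equation*}
The first term immediately yields $\tfrac{1}{2\beta}\tr(\bfA(\bfA+\alpha\bfI)^{-1})$ by the definition of $\bSigma$.

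For the quadratic discrepancy, I would use the normal equation $\bfA \paramls = \tfrac{1}{N}\bfX^\top\bfy$ to rewrite $\bar\param = (\bfA+\alpha\bfI)^{-1}\bfA\,\paramls$, which gives the clean identity $\bar\param - \paramls = -\alpha(\bfA+\alpha\bfI)^{-1}\paramls$. Since $\bfA$ and $(\bfA+\alpha\bfI)^{-1}$ share eigenvectors, they commute, so
\begin{equation*}
(\bar\param - \paramls)^\top \bfA (\bar\param - \paramls) = \alpha^2\, \paramls^\top (\bfA+\alpha\bfI)^{-2}\bfA\, \paramls.
\end{equation*}
Now I would take expectation over $\bfy \mid \bfX$, writing $\paramls = \paramteach + \tfrac{1}{N}\bfA^{-1}\bfX^\top\bvareps$ so that, conditional on $\bfX$, $\paramls$ has mean $\paramteach$ and covariance $\tfrac{\sigma^2}{N}\bfA^{-1}$. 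Applying the standard quadratic-form expectation $\bbE[\bfu^\top \bfM \bfu] = \bar\bfu^\top \bfM \bar\bfu + \tr(\bfM\, \mathrm{Cov}(\bfu))$ produces the deterministic piece $\alpha^2\paramteach^\top(\bfA+\alpha\bfI)^{-2}\bfA\,\paramteach$ together with $\tfrac{\sigma^2\alpha^2}{N}\tr((\bfA+\alpha\bfI)^{-2})$ after the $\bfA$ in the middle cancels with $\bfA^{-1}$ in the covariance.

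Finally, for the irreducible term $C_S = \tfrac{1}{2N}\bfy^\top (\bfI - \bfP)\bfy$ with $\bfP = \bfX(\bfX^\top\bfX)^{-1}\bfX^\top$, I would note $(\bfI-\bfP)\bfy = (\bfI-\bfP)\bvareps$ since $\bfP\bfX = \bfX$, and use $\bbE[\bvareps^\top (\bfI-\bfP)\bvareps \mid \bfX] = \sigma^2(N-\PARAMDIM)$ to obtain $\tfrac{\sigma^2}{2}(1 - \PARAMDIM/N)$. Summing the four contributions matches the claim exactly. The only mildly delicate step is the algebraic simplification $\bar\param - \paramls = -\alpha(\bfA+\alpha\bfI)^{-1}\paramls$ and the resulting commutation that turns $(\bfA+\alpha\bfI)^{-1}\bfA(\bfA+\alpha\bfI)^{-1}$ into $(\bfA+\alpha\bfI)^{-2}\bfA$; the rest is a routine application of Gaussian moment identities.
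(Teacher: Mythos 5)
Your proposal is correct and follows essentially the same route as the paper: decompose $\trainloss$ into quadratic-plus-residual, apply the Gaussian moment identity for $\param_\infty\sim\calN(\bar\param,\bSigma)$, derive $\bar\param-\paramls = -\alpha(\bfA+\alpha\bfI)^{-1}\paramls$, exploit simultaneous diagonalizability of $\bfA$ and $(\bfA+\alpha\bfI)^{-1}$, and take the $\bvareps$-expectation conditioned on $\bfX$. The only cosmetic difference is in the quadratic cross term: the paper re-expands $\paramls$ in terms of $\bfy$ and applies $\bbE_\bvareps[\bfy^\top\bfM\bfy]=(\bfX\paramteach)^\top\bfM\,\bfX\paramteach+\sigma^2\tr(\bfM)$, whereas you work directly with the conditional distribution $\paramls\mid\bfX\sim\calN(\paramteach,\tfrac{\sigma^2}{N}\bfA^{-1})$; these are equivalent since $\paramls$ is a linear function of $\bfy$, and your version is marginally more streamlined.
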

\begin{proof}
From \citet{petersen2012cookbook} (equation 318)
\begin{align*}
    \trainloss \rb{\param_\infty} &= \frac{1}{2} \bbE \rb{\param - \paramls}^\top \bfA \rb{\param - \paramls} + C_S \\
    & = \frac{1}{2} \mathrm{Tr} \rb{\bfA \bSigma} + \frac{1}{2} \rb{\Bar{\param} - \paramls}^\top \bfA \rb{\Bar{\param} - \paramls} + C_S \,.
\end{align*}
For the second term, notice that 
\begin{align*}
    \Bar{\param} - \paramls &= \rb{\beta \bSigma \bfA - \bfI} \paramls \\
    & = \rb{\beta \bSigma \bfA + \lambda \bSigma - \lambda \bSigma - \bfI} \paramls \\
    & = \rb{\bSigma \underset{=\bSigma^{-1}}{\underbrace{\beta \rb{\bfA + \alpha \bfI}}} - \lambda \bSigma - \bfI} \paramls \\
    & = -\lambda \bSigma \paramls = - \alpha \rb{\bfA + \alpha \bfI}^{-1} \paramls \,. \\
\end{align*}
$\bfA$ and $\bSigma$ are simultaneously diagonalizable. 
To see this, let $\bfA = \bfQ \bLambda \bfQ^\top$ be a spectral decomposition of $\bfA$, then $\bfA + \alpha \bfI = \bfQ \rb{\bLambda + \alpha \bfI} \bfQ^\top$, so $\bSigma = \beta^{-1} \bfQ \rb{\bLambda + \alpha \bfI}^{-1} \bfQ^\top$.
This means that $\bfA$, $\bSigma$, and their inverses all multiplicatively commute.
Therefore,
\begin{align*}
    \trainloss \rb{\param_\infty} &= \frac{1}{2} \mathrm{Tr} \rb{\bfA \bSigma} + \frac{\alpha^2}{2} \paramls^\top \rb{\bfA + \alpha \bfI}^{-1} \bfA \rb{\bfA + \alpha \bfI}^{-1} \paramls + C_S \\
    & = \frac{1}{2} \mathrm{Tr} \rb{\bfA \bSigma} + \frac{\alpha^2}{2N^2} \bfy^\top \bfX \bfA^{-1} \rb{\bfA + \alpha \bfI}^{-1} \bfA \rb{\bfA + \alpha \bfI}^{-1} \bfA^{-1} \bfX^\top \bfy + C_S \\
    & = \frac{1}{2\beta} \mathrm{Tr} \rb{\bfA \rb{\bfA + \alpha \bfI}^{-1}} + \frac{\alpha^2}{2N^2} \bfy^\top \bfX \rb{\bfA + \alpha \bfI}^{-2} \bfA^{-1} \bfX^\top \bfy + C_S \,,
\end{align*}
Conditioned on $\bfX$, standard results about the residuals in linear regression imply that, 
\begin{align*}
    \bbE \bb{C_S \mid \bfX} = \frac{\sigma^2}{2} \rb{1 - \frac{\PARAMDIM}{N}} \,.
\end{align*}
In addition, for any symmetric matrix $\bfM$ we have 
\begin{align*}
    \bbE_\bvareps \bb{ \bfy^\top \bfM \bfy } & = \bbE_{\bvareps} \bb{ \rb{\bfX \paramteach + \bvareps}^\top \bfM \rb{\bfX \paramteach + \bvareps} } \\
    & = \rb{\bfX \paramteach}^\top \bfM \bfX \paramteach + \bbE_{\bvareps} \bb{ \bvareps^\top \bfM \bvareps } \\
    & = \rb{\bfX \paramteach}^\top \bfM \bfX \paramteach + \sigma^2 \mathrm{Tr} \rb{\bfM} \,.
\end{align*}
In particular, 
\begin{align*}
    \bbE & \bb{\bfy^\top \bfX \rb{\bfA + \alpha \bfI}^{-2} \bfA^{-1} \bfX^\top \bfy \mid \bfX} \\
    & = \param^{\star \top} \bfX^\top \bfX \rb{\bfA + \alpha \bfI}^{-2} \bfA^{-1} \bfX^\top \bfX \paramteach + \sigma^2 \mathrm{Tr} \rb{ \bfX \rb{\bfA + \alpha \bfI}^{-2} \bfA^{-1} \bfX^\top} \\
    & = \param^{\star \top} N \bfA \rb{\bfA + \alpha \bfI}^{-2} \bfA^{-1} N \bfA \paramteach + \sigma^2 \mathrm{Tr} \rb{ \bfX^\top \bfX \rb{\bfA + \alpha \bfI}^{-2} \bfA^{-1} } \\
    & = N^2 \param^{\star \top} \rb{\bfA + \alpha \bfI}^{-2} \bfA \paramteach + N \sigma^2 \mathrm{Tr} \rb{ \rb{\bfA + \alpha \bfI}^{-2} } \,,
\end{align*}
where we used the definition of $\bfA$, the joint diagonalizability of $\bfA$ and $\bSigma$, and the cyclicality of the trace.
In total, the expected training loss, conditioned on the data is
\begin{align*}
    \bbE_{\bvareps} \trainloss \rb{\param_\infty} & = \frac{1}{2\beta} \tr \rb{\bfA \rb{\bfA + \alpha \bfI}^{-1}} + \frac{\alpha^2}{2} \param^{\star \top} \rb{\bfA + \alpha \bfI}^{-2} \bfA \paramteach  \\
    & \quad + \frac{\sigma^2 \alpha^2}{2N} \mathrm{Tr} \rb{ \rb{\bfA + \alpha \bfI}^{-2} } + \frac{\sigma^2}{2} \rb{1 - \frac{\PARAMDIM}{N}} \,.
\end{align*}
\end{proof}

\begin{remark} \label{app-rem: taylor simplification of training loss}
    We intuitively derive the asymptotic behavior of \cref{app-claim: linear regression expected train loss conditioned on X}.
    Let $\lambda$ be constant, and let $\beta$ grow (so $\alpha$ shrinks).
    We can decompose $\rb{\bfA + \alpha \bfI}^{-1}$ as 
    \begin{align*}
        \rb{\bfA + \alpha \bfI}^{-1} & = \bfA^{-1} - \alpha \bfA^{-2} + \alpha^2 \bfA^{-2} \rb{\bfA + \alpha \bfI}^{-1} \,.
    \end{align*}
    This can be readily verified as
    \begin{align*}
        \bfA^{-1} & - \alpha \bfA^{-2} + \alpha^2 \bfA^{-2} \rb{\bfA + \alpha \bfI}^{-1} \\
        & = \bfA^{-2} \rb{\bfA + \alpha \bfI}^{-1} \rb{\bfA \rb{\bfA + \alpha \bfI} - \alpha \rb{\bfA + \alpha \bfI} + \alpha^2 \bfI} \\
        & = \bfA^{-2} \rb{\bfA + \alpha \bfI}^{-1} \rb{ \bfA^2 + \alpha \bfA - \alpha \bfA - \alpha^2 \bfI + \alpha^2 \bfI } \\
        & = \bfA^{-2} \rb{\bfA + \alpha \bfI}^{-1} \bfA^2 \\
        & = \rb{\bfA + \alpha \bfI}^{-1} \,,
    \end{align*}
    where we used the multiplicative commutativity, as before.
    Notice that since $\bfA \succ 0$, $\bfA + \alpha \bfI \succ \bfA$, so $\rb{\bfA + \alpha \bfI}^{-k} \prec \bfA^{-k}$ for any $k \in \bbN$.
    Denote
    \begin{align*}
        R_2 \rb{\alpha} = \alpha^2 \bfA^{-2} \rb{\bfA + \alpha \bfI}^{-1} \,,
    \end{align*}
    then $\norm{R_2 \rb{\alpha}}_2 \le \frac{\alpha^2}{\lambda_{\min} \rb{\bfA}^3}$, where $\lambda_{\min} \rb{\bfA}$ is the minimal eigenvalue of $\bfA$.
    As the elements of $\bfX$ are i.i.d. with mean $0$ and variance $1$, the limiting distribution of the spectrum of $\bfA$ as $N, \PARAMDIM \to \infty$ with $\PARAMDIM / N \to \gamma \in \rb{0, 1}$ is the Marchenko–Pastur distribution, which is supported on $\bb{\rb{1 - \sqrt{\gamma}}^2, \rb{1 + \sqrt{\gamma}}^2}$.
    In particular, as $N, \PARAMDIM \to \infty$, $\lambda_{\min} \rb{\bfA} \ge \rb{1 - \sqrt{\PARAMDIM / N}}^2$, so for $\varepsilon > 0$,
    \begin{align*}
        \norm{R_2 \rb{\alpha}}_2 \le \frac{\alpha^2}{\rb{1 - \sqrt{\PARAMDIM / N} - \varepsilon}^6} \,
    \end{align*}
    with high probability.
    Therefore, in the following we shall treat the remainder as $R_2 \rb{\alpha} = O \rb{\alpha^2}$, even when taking the expectation over $\bfX$.
    
    Since $\alpha = \lambda / \beta$ and $\lambda \propto d$, then for $d\leq \beta$, we have $\alpha / \beta = O \rb{\alpha^2}$, and we conclude that 
    \begin{align*}
        \bbE \bb{\trainloss \rb{\param_\infty} \mid \bfX} & =  \frac{\PARAMDIM}{2} \rb{\frac{1}{\beta} + \sigma^2\left(\frac{1}{d}- \frac{1}{N}\right)} + O \rb{\alpha^2} \,.
    \end{align*}
    Therefore, the added noise does not significantly hurt the training loss when $\frac{1}{\beta} \lessapprox  \sigma^2\left(\frac{1}{d}- \frac{1}{N}\right)$, or equivalently, $\beta \gtrapprox \frac{Nd}{(N-d)\sigma^2}$.
    In particular, this holds when $d\ll\beta\ll N$, which is a regime where our generalization bound \cref{cor: generalization for ngf box and regularization} also becomes small (since $\beta \ll N$). This shows that the randomness required by \cref{cor: generalization for ngf box and regularization} can allow for successful optimization of the training loss.
\end{remark}

Moving on to the population loss, we define $\poploss$ in the usual way
\begin{align*}
    \poploss \rb{\param_t} = \frac{1}{2} \bbE_{\bfx, \varepsilon} \rb{\bfx^\top \param_t - y}^2 = \frac{1}{2} \bbE \rb{\bfx^\top \param_t - \bfx^\top \paramteach - \varepsilon}^2 \,.
\end{align*}
Due to the independence between $\bfx$ and $\varepsilon$, 
\begin{align*}
    \poploss \rb{\param} = \frac{1}{2} \bbE \rb{\bfx^\top \rb{\param - \paramteach}}^2 + \frac{\sigma^2}{2} = \frac{1}{2} \norm{\param - \paramteach}^2 + \frac{\sigma^2}{2} \,.
\end{align*}

\begin{claim} \label{app-claim: linear regression expected pop loss conditioned on X}
    With some abuse of notation, denote $\poploss \rb{\param_\infty} = \bbE_{\param \sim \posteriorstationary} \poploss \rb{\param}$.
    Then
    \begin{align*}
        \bbE \bb{\poploss \rb{\param_\infty} \mid \bfX} & = \frac{1}{2\beta} \mathrm{Tr} \rb{ \rb{\bfA + \alpha \bfI}^{-1}} + \frac{1}{2} \param^{\star \top} \bfA^2 \rb{\bfA + \alpha \bfI}^{-2} \paramteach \\
        & + \frac{\sigma^2}{2N} \tr \rb{\bfA \rb{\bfA + \alpha \bfI}^{-2} } - \param^{\star \top} \bfA \rb{\bfA + \alpha \bfI}^{-1} \paramteach + \frac{1}{2} \norm{\paramteach}^2 + \frac{\sigma^2}{2} \,.
    \end{align*}
\end{claim}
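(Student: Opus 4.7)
The plan is to use the fact, established just before the claim, that the stationary distribution of the CLD is the Gaussian $\posteriorstationary = \calN(\bar{\param}, \bSigma)$ with $\bar{\param} = \frac{1}{N}(\bfA+\alpha\bfI)^{-1}\bfX^\top\bfy$ and $\bSigma = \beta^{-1}(\bfA+\alpha\bfI)^{-1}$. Since $\poploss(\param) = \tfrac{1}{2}\|\param-\paramteach\|^2 + \tfrac{\sigma^2}{2}$ is quadratic, we have the standard bias-variance decomposition
\begin{equation*}
    \bbE_{\param \sim \posteriorstationary} \poploss(\param) = \tfrac{1}{2}\tr(\bSigma) + \tfrac{1}{2}\|\bar{\param}-\paramteach\|^2 + \tfrac{\sigma^2}{2}.
\end{equation*}
The variance contribution immediately gives the term $\tfrac{1}{2\beta}\tr((\bfA+\alpha\bfI)^{-1})$ in the claim.

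Next I would handle the bias $\|\bar{\param}-\paramteach\|^2$. Substituting $\bfy = \bfX\paramteach + \bvareps$ into $\bar{\param}$ and using $\frac{1}{N}\bfX^\top\bfX = \bfA$, one obtains
\begin{equation*}
    \bar{\param} - \paramteach = \bigl((\bfA+\alpha\bfI)^{-1}\bfA - \bfI\bigr)\paramteach + \tfrac{1}{N}(\bfA+\alpha\bfI)^{-1}\bfX^\top\bvareps,
\end{equation*}
which splits the bias into a deterministic (given $\bfX$) piece depending only on $\paramteach$ and a linear functional of the noise $\bvareps$. Taking the expectation over $\bvareps$ conditioned on $\bfX$, the cross term vanishes because $\bbE[\bvareps\mid\bfX]=\bfzero$, and the quadratic noise term contributes
\begin{equation*}
    \tfrac{\sigma^2}{N^2}\tr\bigl((\bfA+\alpha\bfI)^{-1}\bfX^\top\bfX(\bfA+\alpha\bfI)^{-1}\bigr) = \tfrac{\sigma^2}{N}\tr\bigl(\bfA(\bfA+\alpha\bfI)^{-2}\bigr),
\end{equation*}
using the cyclicity of the trace and $\bfX^\top\bfX = N\bfA$, together with the simultaneous diagonalizability of $\bfA$ and $(\bfA+\alpha\bfI)^{-1}$ already exploited in the proof of \cref{app-claim: linear regression expected train loss conditioned on X}. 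This matches the $\tfrac{\sigma^2}{2N}\tr(\bfA(\bfA+\alpha\bfI)^{-2})$ term.

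The remaining piece of the bias is the deterministic $\paramteach$-quadratic $\tfrac{1}{2}\|\bigl((\bfA+\alpha\bfI)^{-1}\bfA - \bfI\bigr)\paramteach\|^2$. Expanding $((\bfA+\alpha\bfI)^{-1}\bfA - \bfI)^\top((\bfA+\alpha\bfI)^{-1}\bfA - \bfI)$ via commutativity yields $\bfA^2(\bfA+\alpha\bfI)^{-2} - 2\bfA(\bfA+\alpha\bfI)^{-1} + \bfI$, producing exactly the three $\paramteach$-terms in the stated expression (the $\tfrac{1}{2}\paramteach^\top\bfA^2(\bfA+\alpha\bfI)^{-2}\paramteach$, the $-\paramteach^\top\bfA(\bfA+\alpha\bfI)^{-1}\paramteach$, and the $\tfrac{1}{2}\|\paramteach\|^2$). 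The main (mild) obstacle is purely algebraic: verifying that this expansion reproduces the form written in the claim, which is most cleanly checked by noting the identity $\bfA(\bfA+\alpha\bfI)^{-1} = \bfI - \alpha(\bfA+\alpha\bfI)^{-1}$ and confirming that the three $\paramteach$-terms together equal the more compact $\tfrac{\alpha^2}{2}\paramteach^\top(\bfA+\alpha\bfI)^{-2}\paramteach$ that one might naively expect by analogy with the training-loss computation. Summing all contributions gives exactly the claimed formula.
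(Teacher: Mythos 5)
Your proposal is correct and follows essentially the same route as the paper: both start from the Gaussian second-moment identity $\bbE_{\posteriorstationary}\poploss = \tfrac12\tr(\bSigma) + \tfrac12\|\bar{\param}-\paramteach\|^2 + \tfrac{\sigma^2}{2}$ and then take the expectation over $\bvareps$ conditional on $\bfX$, invoking the commutativity of $\bfA$ with $(\bfA+\alpha\bfI)^{-1}$ and $\bfX^\top\bfX = N\bfA$. The only cosmetic difference is that you split $\bar{\param}-\paramteach$ into a deterministic piece plus a $\bvareps$-linear piece before squaring, whereas the paper expands $\|\bar{\param}-\paramteach\|^2$ as $\bar{\param}^\top\bar{\param} - 2\bar{\param}^\top\paramteach + \|\paramteach\|^2$ and computes the conditional expectation of each term; both organizations yield the same three $\paramteach$-terms (and your closing observation that they collapse to $\tfrac{\alpha^2}{2}\paramteach^\top(\bfA+\alpha\bfI)^{-2}\paramteach$ is a nice simplification the paper leaves implicit).
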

\begin{proof}
Taking the expectation w.r.t $\param \sim \calN \rb{\Bar{\param}, \bSigma}$ we get from \citet{petersen2012cookbook}
\begin{align*}
    \poploss \rb{\param_\infty} & = \frac{1}{2} \mathrm{Tr} \rb{\bSigma} + \frac{1}{2} \norm{\Bar{\param} - \paramteach}^2 + \frac{\sigma^2}{2} \\
    & = \frac{1}{2\beta} \mathrm{Tr} \rb{ \rb{\bfA + \alpha \bfI}^{-1}} + \frac{1}{2} \Bar{\param}^\top \Bar{\param} - \Bar{\param}^\top \paramteach + \frac{1}{2} \norm{\paramteach}^2 + \frac{\sigma^2}{2} \,.
\end{align*}
We can simplify some of the terms when taking the expectation conditioned on $\bfX$.
\begin{align*}
    \bbE_{\bvareps} \bb{ \Bar{\param}^\top \Bar{\param} } &= \frac{1}{N^2} \bbE \bb{ \bfy^\top \bfX \rb{\bfA + \alpha \bfI}^{-1} \rb{\bfA + \alpha \bfI}^{-1} \bfX^\top \bfy } \\
    & = \frac{1}{N^2} \bbE \bb{ \rb{\bfX \paramteach + \bvareps}^\top \bfX \rb{\bfA + \alpha \bfI}^{-2} \bfX^\top \rb{\bfX \paramteach + \bvareps} } \\
    & = \frac{1}{N^2} \param^{\star \top} \bfX^\top \bfX \rb{\bfA + \alpha \bfI}^{-2} \bfX^\top \bfX \paramteach + \frac{1}{N^2} \bbE_{\bvareps} \bb{ \bvareps^\top \bfX \rb{\bfA + \alpha \bfI}^{-2} \bfX^\top \bvareps } \\
    & = \param^{\star \top} \bfA^2 \rb{\bfA + \alpha \bfI}^{-2} \paramteach + \frac{\sigma^2}{N^2} \tr \rb{\bfX \rb{\bfA + \alpha \bfI}^{-2} \bfX^\top} \\
    & = \param^{\star \top} \bfA^2 \rb{\bfA + \alpha \bfI}^{-2} \paramteach + \frac{\sigma^2}{N} \tr \rb{\bfA \rb{\bfA + \alpha \bfI}^{-2} } \,.
\end{align*}
In addition,
\begin{align*}
    \bbE_{\bvareps} \bb{ \Bar{\param}^\top \paramteach } & = \frac{1}{N} \bbE_{\bvareps} \bb{ \rb{\bfX \paramteach + \bvareps}^\top \bfX \rb{\bfA + \alpha \bfI}^{-1} \paramteach } \\
    & = \frac{1}{N} \param^{\star \top} \bfX^\top \bfX \rb{\bfA + \alpha \bfI}^{-1} \paramteach + \frac{1}{N} \bbE_{\bvareps} \bb{ \bvareps^\top \bfX \rb{\bfA + \alpha \bfI}^{-1} \paramteach } \\
    & = \param^{\star \top} \bfA \rb{\bfA + \alpha \bfI}^{-1} \paramteach \,.
\end{align*}
Combining these we get the desired result.
\end{proof}

\begin{remark} \label{app-rem: taylor simplification population loss}
    As we have done for the training loss in \cref{app-rem: taylor simplification of training loss}, we can estimate the expected population loss in some asymptotic regimes.    
    Let $\lambda$ be constant, and let $\beta$ grow (so $\alpha$ shrinks).
    As in \cref{app-rem: taylor simplification of training loss}, we use the approximation $\rb{\bfA + \alpha \bfI}^{-1} = \bfA^{-1} - \alpha \bfA^{-2} + O \rb{\alpha^2 \bfI}$, which also implies $\rb{\bfA + \alpha \bfI}^{-2} = \bfA^{-2} - 2 \alpha \bfA^{-3} + O \rb{\alpha^2 \bfI}$, and treat the remainders as $O \rb{\alpha^2}$ even when taking the expectation w.r.t. $\bfX$.
    Then,
    \begin{align*}
        \bbE \bb{\poploss \rb{\param_\infty} \mid \bfX} &= \frac{1}{2\beta} \tr \rb{\bfA^{-1} - \alpha \bfA^{-2} + O \rb{\alpha^2 \bfI}} + \frac{1}{2} \param^{\star \top} \bfA^2 \rb{\bfA^{-2} - 2\alpha \bfA^{-3} + O \rb{\alpha^2 \bfI}} \paramteach \\
        & \quad + \frac{\sigma^2}{2N} \tr \rb{\bfA \rb{\bfA^{-2} - 2\alpha \bfA^{-3} + O \rb{\alpha^2 \bfI}}} \\
        & \quad - \param^{\star \top} \bfA \rb{\bfA^{-1} - \alpha \bfA^{-2} + O \rb{\alpha^2 \bfI}} \paramteach + \frac{1}{2} \norm{\paramteach}^2 + \frac{\sigma^2}{2} \\
        & = \frac{1}{2} \rb{\frac{1}{\beta} + \frac{\sigma^2}{N}} \tr \rb{\bfA^{-1}} + \frac{\sigma^2}{2} \\
        & \quad - \frac{\alpha}{2\beta} \tr \rb{\bfA^{-2} + O \rb{\alpha \bfI}} - \alpha \param^{\star \top} \rb{\bfA^{-1} + O \rb{\alpha \bfI}} \paramteach \\
        & \quad - \frac{\sigma^2 \alpha}{N} \tr \rb{\bfA^{-2} + O \rb{\alpha \bfI}} + \alpha \param^{\star \top} \rb{\bfA^{-1} + O \rb{\alpha \bfI}} \paramteach \,.
    \end{align*}
    Simplifying, we arrive at
    \begin{align*}
        \bbE \bb{\poploss \rb{\param_\infty} \mid \bfX} & = \frac{1}{2} \rb{\frac{1}{\beta} + 
        \frac{\sigma^2}{N}} \tr \rb{\bfA^{-1}} + \frac{\sigma^2}{2} - \alpha \rb{\frac{1}{2\beta} + \frac{\sigma^2}{N}} \tr \rb{\bfA^{-2}} + O \rb{\alpha^2} \,.
    \end{align*}
    \textbf{Assuming} that $\bfx$ are i.i.d. $\calN \rb{\bfzero, \bfI}$, $N \cdot \bfA \sim \calW_{\PARAMDIM} \rb{N, \bfI}$, \ie has a Wishart distribution.
    According to Theorem 3.3.16 of \citep{gupta1999matrix}, if $N > \PARAMDIM + 3$ then 
    \begin{align*}
        \bbE \bfA^{-1} & = \frac{N}{N - \PARAMDIM - 1} \bfI \,, \\
        \bbE \bfA^{-2} & = N^2 \cdot \frac{\tr \rb{\bfI} \bfI}{ \rb{N - \PARAMDIM} \rb{N - \PARAMDIM - 1} \rb{N - \PARAMDIM - 3}} + N^2 \cdot \frac{\bfI}{\rb{N - \PARAMDIM} \rb{N - \PARAMDIM - 3}} \\
        & = \frac{N^2 \PARAMDIM + N^2 \rb{N - \PARAMDIM - 1}}{\rb{N - \PARAMDIM} \rb{N - \PARAMDIM - 1} \rb{N - \PARAMDIM - 3}} \bfI \,.
    \end{align*}
    Then, the expectation over $\bfX$ and if $\frac{\sigma^2}{N} \lessapprox \alpha$ (which is true for $\lambda\propto d$ and $\beta\ll N$ like we assume here),
    \begin{align*}
        \bbE \poploss \rb{\param_\infty} & = \frac{1}{2} \rb{\frac{1}{\beta} + \sigma^2 \rb{\frac{1}{N} + \frac{N - \PARAMDIM - 1}{N \PARAMDIM}}} \cdot \frac{N \PARAMDIM}{N - \PARAMDIM - 1} + O \rb{\alpha^2} \\
        & = \frac{1}{2} \rb{\frac{1}{\beta} + \sigma^2 \cdot \frac{N - 1}{N \PARAMDIM}} \cdot \frac{N \PARAMDIM}{N - \PARAMDIM - 1} + O \rb{\alpha^2} \,.
    \end{align*}
    This result is similar to the one in \cref{app-rem: taylor simplification of training loss} --- for the expected population loss not to be significantly hurt by the added noise, it must hold that $\beta \gtrapprox \frac{N \PARAMDIM}{\rb{N - 1} \sigma^2}$. In particular, this holds when $d\ll\beta\ll N$, which is a regime where our generalization bound \cref{cor: generalization for ngf box and regularization} also becomes small (since $\beta \ll N$).
    This shows that the randomness required by \cref{cor: generalization for ngf box and regularization} does not harm the expected population loss.
\end{remark}

\removed{
\paragraph{Generalization gap for Gaussian data.}
Assume that, in addition to the previous assumptions, the data points are i.i.d $\bfx \sim \calN \rb{\bfzero, \bfI}$.
We can now look at the difference between the expected test and train losses, conditioned on $\bfX$.
Substituting the results from \cref{app-claim: linear regression expected train loss conditioned on X,app-claim: linear regression expected pop loss conditioned on X} we get 
\begin{align*}
    \bbE & \bb{\poploss \rb{\param_\infty} - \trainloss \rb{\param_\infty} \mid \bfX} \\
    & = \frac{1}{2\beta} \tr \rb{\rb{\bfA + \alpha \bfI}^{-1}} + \frac{1}{2} \param^{\star \top} \bfA^2 \rb{\bfA + \alpha \bfI}^{-2} \paramteach \\
    & \quad + \frac{\sigma^2}{2N} \tr \rb{\bfA \rb{\bfA + \alpha \bfI}^{-2} } - \param^{\star \top} \bfA \rb{\bfA + \alpha \bfI}^{-1} \paramteach + \frac{1}{2} \norm{\paramteach}^2 + \frac{\sigma^2}{2} \\
    & \quad - \frac{1}{2\beta} \tr \rb{\bfA \rb{\bfA + \alpha \bfI}^{-1}} - \frac{\alpha^2}{2} \param^{\star \top} \rb{\bfA + \alpha \bfI}^{-2} \bfA \paramteach  \\
    & \quad - \frac{\sigma^2 \alpha^2}{2N} \mathrm{Tr} \rb{ \rb{\bfA + \alpha \bfI}^{-2} } - \frac{\sigma^2}{2} \rb{1 - \frac{\PARAMDIM}{N}} \,.
\end{align*}
Rearranging, this becomes
\begin{align*}
    \bbE & \bb{\poploss \rb{\param_\infty} - \trainloss \rb{\param_\infty} \mid \bfX} \\
    & = \frac{1}{2\beta} \tr \rb{\rb{\bfA + \alpha \bfI}^{-1} - \bfA \rb{\bfA + \alpha \bfI}^{-1} } \\
    & \quad + \frac{1}{2} \param^{\star \top} \rb{\bfA^2 \rb{\bfA + \alpha \bfI}^{-2} - 2 \bfA \rb{\bfA + \alpha \bfI}^{-1} + \bfI - \alpha^2 \bfA \rb{\bfA + \alpha \bfI}^{-2}} \paramteach \\
    & \quad + \frac{\sigma^2}{2N} \tr \rb{ \bfA \rb{\bfA + \alpha \bfI}^{-2} - \alpha^2 \rb{\bfA + \alpha \bfI}^{-2} } + \frac{\sigma^2 \PARAMDIM}{2N} \\
    & = \frac{1}{2} \tr \rb{\rb{\beta \bfA + \lambda \bfI}^{-1} \rb{\bfI - \bfA} } \\
    & \quad + \frac{1}{2} \param^{\star \top} \rb{ \bfA \rb{\bfA + \alpha \bfI}^{-2} \rb{ 2 \alpha \bfI - \bfA} + \bfI - \bfA \rb{\frac{1}{\alpha} \bfA + \bfI}^{-2}} \paramteach \\
    & \quad + \frac{\sigma^2}{2N} \tr \rb{ \bfA \rb{\bfA + \alpha \bfI}^{-2} - \rb{\frac{1}{\alpha} \bfA + \bfI}^{-2} } + \frac{\sigma^2 \PARAMDIM}{2N} \,.
\end{align*}

We can now easily look at the asymptotic behavior of this quantity at different regimes.
\begin{lemma}
    Let $\lambda > 0$ be fixed.
    \begin{itemize}
        \item As $\beta \downarrow 0$ and $\alpha = \lambda / \beta \to \infty$,
        \begin{align*}
            \lim_{\beta \downarrow 0, \alpha = \lambda / \beta} \bbE_{\bfX, \bvareps} \bb{\poploss \rb{\param_\infty} - \trainloss \rb{\param_\infty}} = 0 \,.
        \end{align*}
        \item As $\beta \to \infty$ and $\alpha = \lambda / \beta \downarrow 0$,
        \begin{align*}
            \lim_{\beta \to \infty, \alpha = \lambda / \beta} \bbE_{\bfX, \bvareps} \bb{\poploss \rb{\param_\infty} - \trainloss \rb{\param_\infty}} = \frac{\sigma^2 \PARAMDIM}{2N} \cdot \frac{N}{N - \PARAMDIM - 1} + \frac{\sigma^2 \PARAMDIM}{2N} = O \rb{\frac{\sigma^2 \PARAMDIM}{N}} \,.
        \end{align*}
    \end{itemize}
\end{lemma}
\begin{proof}
We start from $\beta \downarrow 0$.
Since $\alpha \to \infty$,
\begin{align*}
    \frac{1}{2} \tr \rb{\rb{\beta \bfA + \lambda \bfI}^{-1} \rb{\bfI - \bfA} } & \to \frac{1}{2\lambda} \tr \rb{\bfI - \bfA} \,,
\end{align*}
\begin{align*}
    \frac{1}{2} \param^{\star \top} \rb{ \bfA \rb{\bfA + \alpha \bfI}^{-2} \rb{ 2 \alpha \bfI - \bfA} + \bfI - \bfA \rb{\frac{1}{\alpha} \bfA + \bfI}^{-2}} \paramteach & \to \frac{1}{2} \param^{\star \top} \rb{\bfI - \bfA} \paramteach \,,
\end{align*}
and
\begin{align*}
    \frac{\sigma^2}{2N} & \tr \rb{ \bfA \rb{\bfA + \alpha \bfI}^{-2} - \rb{\frac{1}{\alpha} \bfA + \bfI}^{-2} } + \frac{\sigma^2 \PARAMDIM}{2N} \to \frac{\sigma^2}{2N} \tr \rb{ - \bfI } + \frac{\sigma^2 \PARAMDIM}{2N} = 0 \,.
\end{align*}
Together,
\begin{align*}
    \lim_{\beta \downarrow 0, \alpha = \lambda / \beta} \bbE \bb{\poploss \rb{\param_\infty} - \trainloss \rb{\param_\infty} \mid \bfX} & = \frac{1}{2\lambda} \tr \rb{\bfI - \bfA} + \frac{1}{2} \param^{\star \top} \rb{\bfI - \bfA} \paramteach \,.
\end{align*}
The empirical covariance is distributed according to scaled Wishart distribution $N \bfA \sim \calW_\PARAMDIM \rb{\bfI, N}$.
Some standard properties of the Wishart distribution are
\begin{align} \label{app-eq: properties of Wishart dist}
    \bbE \bfA = \bfI \,, \quad \bbE \bfA^{-1} = \frac{N}{N - \PARAMDIM - 1} \bfI \,,\;\text{for} \, N > \PARAMDIM + 1 \,,
\end{align}
so overall, using the law of total expectation, and the \todo{convergence theorem}
\begin{align*}
    \lim_{\beta \downarrow 0, \alpha = \lambda / \beta} \bbE_{\bfX, \bvareps} \bb{\poploss \rb{\param_\infty} - \trainloss \rb{\param_\infty}} = 0 \,.
\end{align*}

Moving on to $\beta \to \infty$, $\alpha \to 0$,
\begin{align*}
    \frac{1}{2} \tr \rb{\rb{\beta \bfA + \lambda \bfI}^{-1} \rb{\bfI - \bfA} } & \to 0 \,,
\end{align*}
\begin{align*}
    \frac{1}{2} \param^{\star \top} \rb{ \bfA \rb{\bfA + \alpha \bfI}^{-2} \rb{ 2 \alpha \bfI - \bfA} + \bfI - \bfA \rb{\frac{1}{\alpha} \bfA + \bfI}^{-2}} \paramteach & \to \frac{1}{2} \param^{\star \top} \rb{\bfI - \bfA} \paramteach \,,
\end{align*}
and
\begin{align*}
    \frac{\sigma^2}{2N} & \tr \rb{ \bfA \rb{\bfA + \alpha \bfI}^{-2} - \rb{\frac{1}{\alpha} \bfA + \bfI}^{-2} } + \frac{\sigma^2 \PARAMDIM}{2N} \\
    & \to \frac{\sigma^2}{2N} \tr \rb{\bfA^{-1}} + \frac{\sigma^2 \PARAMDIM}{2N} \,.
\end{align*}
Using \eqref{app-eq: properties of Wishart dist} again,
\begin{align*}
    \lim_{\beta \to \infty, \alpha = \lambda / \beta} \bbE_{\bfX, \bvareps} \bb{\poploss \rb{\param_\infty} - \trainloss \rb{\param_\infty}} = \frac{\sigma^2}{2N} \frac{N \PARAMDIM}{N - \PARAMDIM - 1} + \frac{\sigma^2 \PARAMDIM}{2N} \le \frac{\sigma^2 \PARAMDIM}{N} \,.
\end{align*}
\end{proof}
}
\newpage
\section{Numerical Experiments}\label{app-sec: experiments} 

\subsection{Experimental results}
The following are results of training with SGLD (a discretized version of the CLD in 
\eqref{eq: train loss sde regularized}) on a few benchmark datasets.
Notice we use the regularized version where regularization coefficient is $\lambda \cdot \beta^{-1}$ and the $\lambda$ hyperparameter is dictated by the initialization from the normal distribution $p_0 = \calN \rb{\bfzero, \lambda^{-1} \bfI_{\PARAMDIM}}$.
We used a common initialization of $\calN \rb{\bfzero, \frac{1}{d_{\mathrm{in}}}}$, \ie $\lambda=d_{\mathrm{in}}$.

We use several different values of $\beta$ relative to $N$ (the number of training samples).
For simplicity, we focused on binary classification cases.
In all datasets with more than 2 classes, we constructed a binary classification task by partitioning the original label set into $2$ disjoint sets of the same size.

The results demonstrate that learning with SGLD is possible with various values of $\beta$.
In fact, in several instances, the injected noise appears to improve the \emph{generalization gap}, e.g, in SVHN \citep{netzer2011reading}, in all the tested $\beta$ values between $0.4 \cdot N$ and $2 \cdot N$ the average test error remained almost the same while the training error decreased as $\beta$ increased (\ie the generalization gap increased).
Notably, we also observe that for sufficiently large levels of noise, the generalization bounds are non-vacuous.

\begin{table}[!h]
\small
\centering
\caption{\label{app-tab:MNIST}
\textbf{MNIST} (binary classification)
}
\begin{tabular}{ccccc}
\toprule
$\beta$ & $\trainfail$ & $\expectedfail$ & $ \expectedfail - \trainfail $ & Bound \eqref{eq: cld bound in expectation} w.p 0.99 \\
\midrule
$0.01 \cdot N$ & $0.2279 \space (\pm 0.0021)$ & $0.1972 \space (\pm 0.0243)$ & -0.0307 & 0.06124 \\
$0.03 \cdot N$ & $0.1161 \space (\pm 0.0028)$ & $0.1074 \space (\pm 0.0035)$ & -0.0087 & 0.10498 \\
$0.1 \cdot N$ & $0.0618 \space (\pm 0.001)$ & $0.062 \space (\pm 0.0041)$ & 0.0002 & 0.19096 \\
$0.15 \cdot N$ & $0.0497 \space (\pm 0.0014)$ & $0.0494 \space (\pm 0.0031)$ & -0.0003 & 0.23376 \\
$0.4 \cdot N$ & $0.0281 \space (\pm 0.0002)$ & $0.0358 \space (\pm 0.0029)$ & 0.0077 & 0.38147 \\
$0.7 \cdot N$ & $0.0202 \space (\pm 0.0006)$ & $0.0284 \space (\pm 0.0024)$ & 0.0082 & 0.50456 \\
$N$ & $0.0162 \space (\pm 0.0006)$ & $0.0278 \space (\pm 0.0023)$ & 0.0116 & 0.60302 \\
$2 \cdot N$ & $0.0092 \space (\pm 0.0004)$ & $0.0262 \space (\pm 0.0016)$ & 0.017 & 0.85273 \\
$\infty$ & $0.0001 \space (\pm 0)$ & $0.0229 \space (\pm 0.0004)$ & 0.0228 & $>1$ \\
\bottomrule
\end{tabular}
\end{table}

\begin{table}[!h]
\small
\centering
\caption{ \label{app-tab:FMNIST}
\textbf{fashionMNIST} (binary classification)
}
\begin{tabular}{ccccc}
\toprule
$\beta$ & $\trainfail$ & $\expectedfail$ & $ \expectedfail - \trainfail $ & Bound \eqref{eq: cld bound in expectation} w.p 0.99 \\
\midrule
$0.01 \cdot N$ & $0.1215 \space ( \pm 0.0027)$& $0.1251 \space (\pm 0.0087)$ & 0.0036 & 0.06833 \\
$0.03 \cdot N$ & $0.0999 \space ( \pm 0.001)$& $0.1087 \space (\pm 0.0167)$ & 0.0088 & 0.11738 \\
$0.1 \cdot N$ & $0.0821 \space ( \pm 0.0012)$& $0.086 \space (\pm 0.001)$ & 0.0039 & 0.21368 \\
$0.15 \cdot N$ & $0.0765 \space ( \pm 0.0009)$ & $0.0803 \space (\pm 0.0015)$ & 0.0038 & 0.26159 \\
$0.4 \cdot N$ & $0.0635 \space ( \pm 0.0005)$ & $0.0722 \space (\pm 0.002)$& 0.0087 & 0.42695 \\
$0.7 \cdot N$ & $0.0567 \space ( \pm 0.0006)$ & $0.0691 \space (\pm 0.0019)$ & 0.0124 & 0.56473 \\
$N$ & $0.0525 \space (\pm 0.0005)$ & $0.0675 \space (\pm 0.0013)$ & 0.015 & 0.67495 \\
$2 \cdot N$ & $0.043 \space (\pm 0.0007)$ & $0.0672 \space (\pm 0.0023)$ & 0.0242 & 0.95446 \\
$\infty$ & $0.0248 \space (\pm 0.001)$ & $0.0675 \space (\pm 0.0033)$ & 0.0427 & $>1$ \\
\bottomrule
\end{tabular}
\end{table}

\begin{table}[!h]
\small
\centering
\caption{\label{app-tab:SVHN}
\textbf{SVHN} (binary classification)
}
\begin{tabular}{ccccc}
\toprule
$\beta$ & $\trainfail$ & $\expectedfail$ & $ \expectedfail - \trainfail $ & Bound \eqref{eq: cld bound in expectation} w.p 0.99 \\
\midrule
$0.01 \cdot N$ & $0.0746 \space (\pm 0.0012)$ & $0.1033 \space (\pm 0.0032)$ & 0.0287 & 0.05898 \\ 
$0.03 \cdot N$ & $0.0441 \space (\pm 0.0004)$ & $0.067 \space (\pm 0.0026)$ & 0.0229 & 0.10203  \\
$0.1 \cdot N$ & $0.0282 \space (\pm 0.0008)$ & $0.0476 \space (\pm 0.007)$ & 0.0194 & 0.1862  \\
$0.15 \cdot N$ & $0.0251 \space (\pm 0.0005)$ & $0.0445 \space (\pm 0.002)$ & 0.0194 & 0.22803  \\
$0.4 \cdot N$ & $0.0182 \space (\pm 0.0005)$ & $0.0374 \space (\pm 0.0017)$ & 0.0192 & 0.37235  \\
$0.7 \cdot N$ & $0.0146 \space (\pm 0.0004)$ & $0.0363 \space (\pm 0.002)$ & 0.0217 & 0.49256  \\
$N$ & $0.0124 \space (\pm 0.0002)$ & $0.0342 \space (\pm 0.0014)$ & 0.0218 & 0.58872  \\
$2\cdot N$ & $0.0085 \space (\pm 0)$ & $0.0371 \space (\pm 0.001)$ & 0.0286 & 0.83256 \\

\bottomrule
\end{tabular}
\end{table}

\begin{figure}[!ht]
\centering

\begin{minipage}[b]{0.25\textwidth}
\includegraphics[width=\textwidth]{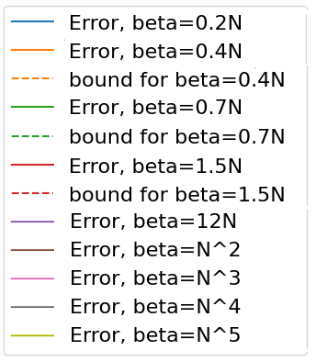}
\end{minipage}
\hfill

\begin{minipage}[b]{1\textwidth}
\includegraphics[width=\textwidth]{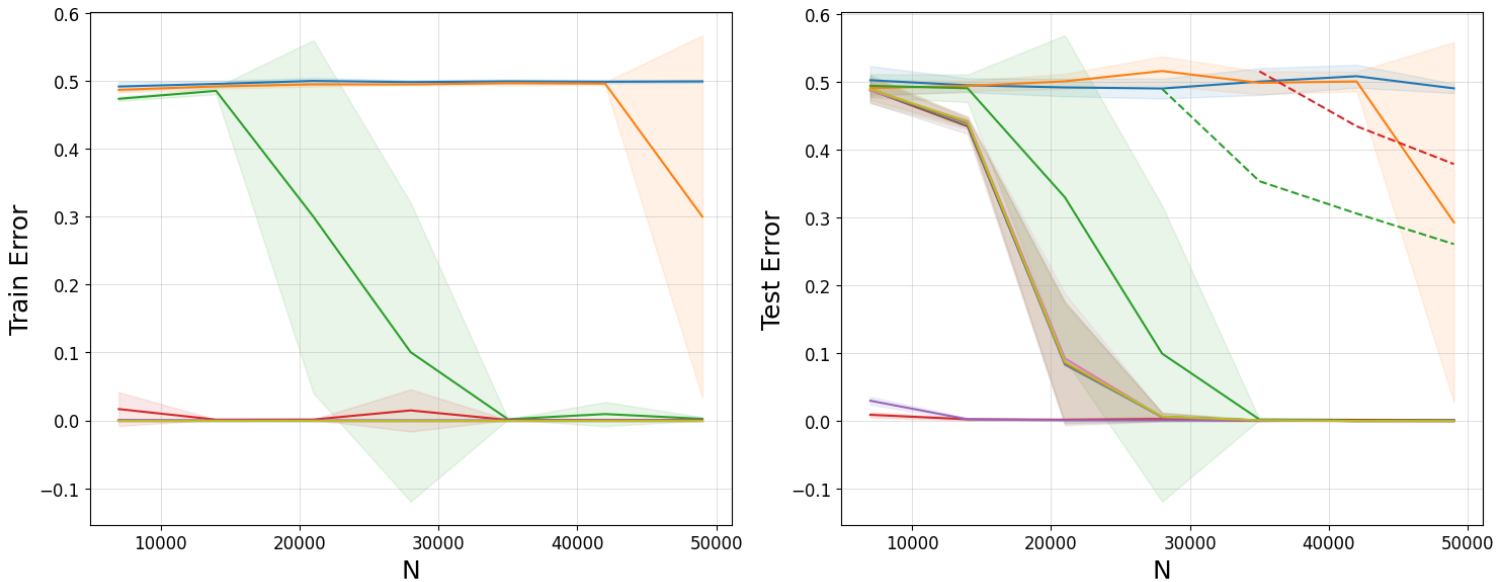}
\end{minipage}

\caption{\label{app-fig:parity}
\textbf{Parity Results.}
Left: Training error. Right: test error and generalization bound.
}
\end{figure}

\subsection{Training details}

\paragraph{MNIST and fashionMNIST.} 
We trained a fully connected network with 4 hidden layers of sizes $[256, 256, 256, 128]$ and ReLU activation, $lr=0.01$, for 60 epochs.

\paragraph{SVHN.} 
The network was trained with a convolutional neural network with 5 convolutional layers, $lr=0.01$, for 80 epochs. The complete architecture:

\begin{itemize}
  \item Two convolutional layers (3×3 kernel, padding 1) with 32 channels, followed by ReLU activations and a 2×2 max pooling.
  \item Two convolutional layers (3×3 kernel, padding 1) with 64 channels, followed by ReLU activations and a 2×2 max pooling.
  \item A 3×3 convolution with 128 channels, ReLU, and 2×2 max pooling.
  \item 2 A linear layer $2048 \rightarrow 512$, followed by ReLU and another $512 \rightarrow 1$ linear layer
\end{itemize}

\paragraph{Parity.} 
In this experiment, we consider a synthetic binary classification task where each input is a binary vector of length 70 and the target label is defined as the parity of 3 randomly selected input dimensions. 
We train a neural network using SGLD with varying values of the inverse temperature parameter $\beta$ and different sample sizes.

The network was trained with a fully connected network with 4 hidden layers of sizes $[512, 1028, 2064, 512]$ and ReLU activation, $lr=0.05$, for 100 epochs.

The results show that injecting noise can improve the generalization gap: specifically, the case of $\beta\geq N^2$ leads to overfitting, while smaller values of $\beta$ (e.g., $1.5 \cdot N$ to $12 \cdot N$) yield better generalization.
Moreover, as well as in the benchmark datasets, in this setting, our generalization bound is non-vacuous in several cases.

\subsection{Comparison with the bound of \texorpdfstring{\citet{mou2018generalization}}{}} 
The bound proposed by \citet{mou2018generalization} has demonstrated non-vacuous results. 
To further assess the effectiveness of our bound and evaluate its relative tightness, we conducted a series of numerical experiments on the MNIST binary classification task (see Tables \ref{Tab:Mou comparison 1}-\ref{Tab:Mou comparison 4}).

It is worth emphasizing that our bound offers a distinct advantage: it can be evaluated directly at initialization, whereas the bound of \citet{mou2018generalization} depends on gradients and therefore cannot be computed before training. When testing their bound we used the continuous version, i.e.

\begin{equation*}
    \bbE_{p_T} \! \bb{\expectedfail \! \rb{\param} \! - \trainfail \! \rb{\param}} \leq s \left(
    \frac{\beta}{2n}\int_{0}^{T} e^{\frac{\lambda}{2}(T-t)}
    \mathbb{E}_{p_{t}}\!\left[\Vert\nabla \trainloss (\param) \Vert^{2}\right] dt 
    + \frac{\log (1/\delta) + \log \log M}{n}
    \right)^{0.5} \,.
\end{equation*}

For simplicity, we omitted the term involving $M$ (which makes the bound more favorable). 
In addition, we set $s = 0.5$ since the zero–one loss (denoted here by $f(w)$, unlike \citep{mou2018generalization}) is bounded within the interval $[0,1]$. 
We observed that the relative tightness of the two bounds varies across different values of $\beta$ 
and at different points in time. 
Consequently, in some instances, the bound of \citet{mou2018generalization} is tighter,
while in others our bound performs better, and we could not draw any further conclusions. 

\begin{table}[!h]
\small
\centering
\caption{\label{Tab:Mou comparison 1}
\textbf{20 training epochs}
}
\begin{tabular}{cccccc}
\toprule
$\beta$ & Train Error & Test Error & Generalization Gap & \citet{mou2018generalization} & Our bound \\
\midrule
$0.03N=1800$  & $0.1224$ & $0.137$  & $0.0146$  & $0.0539$  & $0.1144$  \\
$0.15N=9000$  & $0.0515$ & $0.0747$ & $0.0232$  & $0.1279$  & $0.2548$ \\
$0.4N=24000$  & $0.0335$ & $0.058$  & $0.0245$  & $0.2845$  & $0.4157$ \\
$0.7N=42000$  & $0.0278$ & $0.0498$ & $0.0220$  & $0.4930$  & $0.5499$ \\
$N=60000$     & $0.0249$ & $0.0428$ & $0.0179$  & $0.7032$  & $0.6572$ \\
$2N=120000$   & $0.0209$ & $0.0356$ & $0.0147$  & $1.4044$  & $0.9294$ \\
\hline
\end{tabular}
\end{table}

\begin{table}[!h]
\small
\centering
\caption{\label{Tab:Mou comparison 2}
\textbf{50 training epochs}
}
\begin{tabular}{cccccc}
\toprule
$\beta$ & Train Error & Test Error & Generalization Gap & \citet{mou2018generalization} & Our bound \\
\midrule
$0.03N=1800$  & $0.1156$ & $0.1697$ & $0.0541$ & $0.0637$  & $0.1144$  \\
$0.15N=9000$  & $0.0491$ & $0.0615$ & $0.0124$ & $0.1324$  & $0.2548$ \\
$0.4N=24000$  & $0.0295$ & $0.0348$ & $0.0053$ & $0.2992$  & $0.4157$ \\
$0.7N=42000$  & $0.0217$ & $0.0283$ & $0.0066$ & $0.4903$  & $0.5499$ \\
$N=60000$     & $0.0173$ & $0.0277$ & $0.0104$ & $0.6827$  & $0.6572$ \\
$2N=120000$   & $0.0108$ & $0.0265$ & $0.0157$ & $1.3153$  & $0.9294$ \\
\hline
\end{tabular}
\end{table}

\begin{table}[!h]
\small
\centering
\caption{\label{Tab:Mou comparison 3}
\textbf{250 training epochs}
}
\begin{tabular}{cccccc}
\toprule
$\beta$ & Train Error & Test Error & Generalization Gap & \citet{mou2018generalization} & Our bound \\
\midrule
$0.03N=1800$  & $0.122$  & $0.1049$ & $-0.0171$ & $0.1273$  & $0.1144$  \\
$0.15N=9000$  & $0.0502$ & $0.0476$ & $-0.0026$ & $0.1503$  & $0.2548$ \\
$0.4N=24000$  & $0.0284$ & $0.0296$ & $0.0011$  & $0.2853$  & $0.4157$ \\
$0.7N=42000$  & $0.0178$ & $0.0247$ & $0.0069$  & $0.4595$  & $0.5499$ \\
$N=60000$     & $0.0127$ & $0.0240$ & $0.0113$  & $0.6478$  & $0.6572$ \\
$2N=120000$   & $0.0050$ & $0.0234$ & $0.0184$  & $1.2158$  & $0.9294$ \\
\hline
\end{tabular}
\end{table}

\begin{table}[!h]
\small
\centering
\caption{\label{Tab:Mou comparison 4}
\textbf{400 training epochs}
}
\begin{tabular}{cccccc}
\toprule
$\beta$ & Train Error & Test Error & Generalization Gap & \citet{mou2018generalization} & Our bound \\
\midrule
$0.03N=1800$  & $0.1224$ & $0.1105$ & $-0.0119$ & $0.1900$  & $0.1144$  \\
$0.15N=9000$  & $0.0499$ & $0.0556$ & $0.0057$  & $0.1774$  & $0.2548$ \\
$0.4N=24000$  & $0.0261$ & $0.0357$ & $0.0096$  & $0.3005$  & $0.4157$ \\
$0.7N=42000$  & $0.0161$ & $0.0271$ & $0.0110$  & $0.4548$  & $0.5499$ \\
$N=60000$     & $0.0112$ & $0.0255$ & $0.0143$  & $0.6247$  & $0.6572$ \\
$2N=120000$   & $0.0038$ & $0.0249$ & $0.0211$  & $1.1455$  & $0.9294$ \\
\hline
\end{tabular}
\end{table}

\newpage
\section{Mild Overparametrization Prevents Uniform Convergence}  \label{app-sec: no_uc}

In this section, we consider fully-connected ReLU networks, where the weights are bounded, such that for each layer $j$ the absolute values of all weights are bounded by $\frac{1}{\sqrt{d_{j-1}}}$, where $d_{j-1}$ is the width of layer $j-1$. Moreover, we assume that the input $\bfx$ is such that each coordinate $x_i$ is bounded in $[-1,1]$. We show that $m$ training examples do not suffice for learning constant depth networks with $O(m)$ parameters. Thus, even a mild overparameterization prevents uniform convergence in our setting.

Our result follows by bounding the fat-shattering dimension, defined as follows:
\begin{definition}
	Let $\mathcal{F}$ be a class of real-valued functions from an input domain $\mathcal{X}$. We say that $\mathcal{F}$ shatters $m$ points $\{\bfx_i\}_{i=1}^m \subseteq \mathcal{X}$ with margin $\epsilon>0$ if there are $r_1,\ldots,r_m \in \mathbb{R}$ such that for all $y_1,\ldots,y_m \in \{0,1\}$ there exists $f \in \mathcal{F}$ such that 
	\[
		\forall i \in [m],\;\; f(\bfx_i) \leq r_i - \epsilon \;\text{ if }\; y_i=0 \;\text{ and }\; f(\bfx_i) \geq r_i + \epsilon \; \text{ if }\; y_i=1~.
	\]
	The fat-shattering dimension of $\mathcal{F}$ with margin $\epsilon$ is the maximum cardinality $m$ of a set of points in $\mathcal{X}$ for which the above holds.
\end{definition}

The fat-shattering dimension of $\mathcal{F}$ with margin $\epsilon$ lower bounds the number of samples needed to learn $\mathcal{F}$ within accuracy $\epsilon$ in the distribution-free setting (see, e.g., \cite[Part III]{anthony2009neural}).
Hence, to lower bound the sample complexity by some $m$ it suffices to show that we can shatter a set of $m$ points with a constant margin.

\begin{theorem} \label{thm:shattering}
    We can shatter $m$ points $\{\bfx_i\}_{i=1}^m$ where $\norm{\bfx_i}_{\infty} \leq 1$, with margin $1$, using ReLU networks of constant depth and $O(m)$ parameters, such that for each layer $j$ the absolute values of all weights are bounded by $\frac{1}{\sqrt{d_{j-1}}}$, where $d_{j-1}$ is the width of layer $j-1$.
\end{theorem}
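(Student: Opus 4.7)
The plan is a constructive one: I would pick a convenient configuration of $m$ points and, for each target labeling $\bvareps = (\epsilon_1, \dots, \epsilon_m) \in \{\pm 1\}^m$, explicitly build a ReLU network of constant depth with $O(m)$ parameters, respecting the entrywise bound $|W^{(j)}_{ik}| \le 1/\sqrt{d_{j-1}}$, whose value at $\bfx_i$ has sign $\epsilon_i$ and magnitude at least $1$ (with the centers $r_i = 0$). The cleanest choice is one-dimensional: take input dimension $n = 1$ with domain $[-1,1]$ and place $\bfx_i = i/m$, so $\norm{\bfx_i}_\infty \le 1$ and consecutive points are separated by exactly $1/m$. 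Shattering with margin $1$ then reduces to realizing, for every $\bvareps$, a piecewise-linear function of the scalar input that takes the prescribed values $\pm 1$ at the $m$ locations.

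The central quantitative observation is that the entrywise bound $1/\sqrt{d_{j-1}}$ only constrains the operator norm of the $j$-th weight matrix to at most $\sqrt{d_j}$, so the end-to-end Lipschitz factor of a depth-$L$ network is at most $\sqrt{d_1 d_2 \cdots d_L}$. With $d_0 = d_L = 1$ and interior widths $d_1 = \cdots = d_{L-1} = \sqrt{m}$, the parameter count $\sum_j d_{j-1} d_j = (L-2)m + 2\sqrt{m}$ is $O(m)$ for constant $L$, while the Lipschitz budget is $m^{(L-1)/4}$. Taking $L = 5$ (or slightly larger to absorb constants) exactly produces Lipschitz $\Omega(m)$, which is the minimum needed to change $f$ by $2$ across a gap of width $1/m$.

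I would realize the shattering function $f_{\bvareps}(x) = \sum_{i=1}^m \epsilon_i\, \phi_i(x)$, with $\phi_i$ a unit-height triangular bump supported on an interval of width $\Theta(1/m)$ around $x_i$, in two stages. An \emph{amplification} stage uses the first few hidden layers, each with roughly uniform weight matrices of entries $\pm 1/\sqrt{d_{j-1}}$ and chosen biases that keep pre-activations positive, to transform a layer-$1$ feature $\mathrm{ReLU}(x - c)$ (obtainable since $|w^{(1)}| \le 1$) into an $m$-amplified version, exploiting the positive homogeneity $\mathrm{ReLU}(\alpha u) = \alpha\, \mathrm{ReLU}(u)$. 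A \emph{combination} stage devotes the final layers to placing $\Theta(m)$ sharp ramps at shifts $x_i$ and $x_i \pm \Theta(1/m)$, and mixes them with output coefficients $\pm 1/\sqrt{d_{L-1}}$ whose sign pattern encodes $\bvareps$. The bias placements are the only labeling-independent design parameters; the signs in the output layer carry the entire dependence on $\bvareps$.

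The main technical obstacle is the tight joint constraint: $O(m)$ parameters, constant depth, and the entrywise weight bound $1/\sqrt{d_{j-1}}$ together force the width schedule $d_j \asymp \sqrt{m}$ with $L \ge 5$, and this schedule's Lipschitz budget matches the required $\Omega(m)$ slope only up to a constant factor. Correctly distributing the scarce amplification budget between ``per-layer amplification of a single scalar feature'' and ``parallel production of $m$ distinct localized features'' is the delicate step — one cannot simply dedicate all layers to each task independently, since each use of depth consumes both the parameter budget and the amplification budget. Once the amplification is provisioned so that the layer preceding the output contains enough independent sharp ramps whose positions are determined by biases, the rest is bookkeeping: checking that the bumps have disjoint support, that their heights equal $\sqrt{d_{L-1}}$ so that the entry bound $1/\sqrt{d_{L-1}}$ on output weights yields magnitude $1$ at each $\bfx_i$, and that the construction varies with $\bvareps$ only through the output signs, so the weight constraints hold uniformly across labelings.
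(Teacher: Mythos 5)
Your proposal gets the high-level picture right (scalar input, points on a $1/m$ grid, the observation that the entrywise bound $1/\sqrt{d_{j-1}}$ only limits the per-layer operator norm to $\sqrt{d_j}$, and the idea of trading depth for amplification), but the specific architecture you describe cannot work, for a counting reason.

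You fix all interior widths at $\sqrt{m}$ and stipulate that the \emph{only} parameters depending on the labeling $\bvareps \in \{\pm 1\}^m$ are the $d_{L-1} = \sqrt{m}$ sign choices in the output layer; biases and all interior weights are labeling-independent. With only $\sqrt{m}$ binary choices varying over labelings, your family of networks contains at most $2^{\sqrt{m}}$ distinct functions, hence realizes at most $2^{\sqrt{m}}$ distinct sign patterns on the $m$ fixed points. Since $2^{\sqrt{m}} < 2^m$, the $m$ points cannot be shattered. The obstruction is not about Lipschitz budget or bump support disjointness; it is that a width-$\sqrt{m}$ output layer simply has too few degrees of freedom to encode $m$ independent bits. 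Whatever you arrange in the earlier layers, the penultimate layer has only $\sqrt{m}$ activations, so at most $\sqrt{m}$ linearly independent ``bump'' features can be exposed to the output layer — you cannot have $m$ of them.

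The paper resolves exactly this tension by putting the labeling-dependent degrees of freedom into a \emph{wide first hidden layer}: a one-hidden-layer ReLU network of width $\le 2m-1$ that already shatters $\{i/m\}_{i=1}^m$ with margin $\Theta(1/m)$, where the choice of which transition neurons to include (and with what signs) encodes $\bvareps$. Rescaling its $O(m)$ output weights by $1/\sqrt{2m-1}$ enforces the entrywise bound and shrinks the margin to $\Theta(m^{-3/2})$. Then a stack of \emph{labeling-independent}, narrow (width $\sqrt{m}$) layers, each multiplying the signal by $m^{1/4}$ (entry $m^{-1/4}$ times fan-in $\sqrt{m}$), amplifies the margin back above $1$. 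The amplification stage contributes $O(m)$ parameters per layer times constant depth, so the total stays $O(m)$. In short: a wide first layer to store the $m$ label bits, narrow deep layers to boost the margin. Your sketch has the margin-boosting idea but misallocates where the $\bvareps$-dependent capacity must go. A secondary issue: with a width-$\sqrt{m}$ profile you estimate a Lipschitz budget of $m^{(L-1)/4}$ and propose $L = 5$, but after the mandatory $1/\sqrt{\mathrm{width}}$ rescaling of the wide label-encoding layer, the margin you must recover is $\Theta(m^{-3/2})$, not $\Theta(m^{-1})$, so even with the corrected architecture you would need amplification of order $m^{3/2}$; the paper uses $8$ amplification layers (total amplification $m^2$) to leave slack.
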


\begin{proof}
    Consider input dimension $d_0=1$. For $1 \leq i \leq m$, consider the points $x_i = \frac{i}{m}$, and let $\{y_i\}_{i=1}^m \subseteq \{0,1\}$. Consider the following one-hidden-layer ReLU network $N$, which satisfies $N(x_i)=\frac{y_i}{m}$ for all $i$. First, the network $N$ includes a neuron with weight $0$ and bias $\frac{y_1}{m}$, i.e., $[0 \cdot x + \frac{y_1}{m}]_+$. Now, for each $i$ such that $y_i = 0$ and $y_{i+1}=1$ we add two neurons: $[x - y_i]_+ - [x - y_{i+1}]_+$, and for $i$ such that $y_i = 1$ and $y_{i+1}=0$ we add $-[x - y_i]_+ + [x - y_{i+1}]_+$. It is easy to verify that this construction has width at most $2m - 1$ and allows us to shatter $m$ points with margin $\frac{1}{2m}$. However, the output weights of the neurons are $\pm 1$, and thus it does not satisfy the theorem's requirement. Consider the network $N'(x) = N(x) \cdot \frac{1}{\sqrt{2m-1}}$ obtained from $N$ by modifying the output weights. The network $N'$ satisfies the theorem's requirement on the weight magnitudes, and allows for shattering with margin $\frac{1}{2m\sqrt{2m-1}}$. We will now show how to increase this margin to $1$ using a constant number of additional layers.

    Let $\tilde{N}$ be a network obtained from $N'$ as follows. First, we add a ReLU activation to the output neuron of $N'$. Since for every $x_i$ we have $N'(x_i) \geq 0$, it does not affect these outputs. Next, we add $L=8$ additional layers (layers $3,\ldots,3+L-1$) of width $\sqrt{m}$ and without bias terms, where the incoming weights to layer $3$ are all $1$ and the weights in layers $4,\ldots,3+L-1$ are $\frac{1}{m^{1/4}}$. Finally, we add an output neuron (layer $3+L$) with incoming weights $\frac{1}{m^{1/4}}$. The network $\tilde{N}$ satisfies the theorem's requirements on the weight magnitudes, and it has depth $3+L=11$ and $O(m)$ parameters. Now, suppose that all neurons in a layer $3 \leq j \leq 3+L-1$ have values (i.e., activations) $z \geq 0$, then the values of all neurons in layer $j+1$ are $z \cdot \frac{1}{m^{1/4}} \cdot \sqrt{m} = z \cdot m^{1/4}$. Hence, if the value of the neuron in layer $2$ is $\frac{1}{2m\sqrt{2m-1}}$, then the output of the network $\tilde{N}$ is $\frac{1}{2m\sqrt{2m-1}} \cdot (m^{1/4})^{L} = \frac{m^{L/4}}{2m\sqrt{2m-1}} = \frac{m^2}{2m\sqrt{2m-1}} \geq 2$ for large enough $m$. If the value of the neuron in layer $2$ is $0$ then the output of $\tilde{N}$ is also $0$. Hence, this construction allow for shattering $m$ points with margin at least $1$, using $O(m)$ parameters and weights that satisfy the theorem's conditions.
\end{proof}

\newpage
\section{Background on Stochastic Differential Equations with Reflection} \label{app-sec: background}

We supply an introduction to the theory of stochastic differential equations with reflection (SDERs), then proceed to characterize the stationary distribution of a family of SDERs in a box.
The background of standard (non-reflective) SDEs is similar and more common, and is therefore not included here.
See for example \citep{Oksendal2003} for more.

\subsection{SDEs with reflection} \label{app-sec: sder}
One of the main analytical tools of this work is the characterization of stationary distributions of SDER in bounded domains (see \citealp{sder-pilipenko-2014,Schuss2013}, for an introduction). 

The purpose of this section is to present more rigorously the setting of the paper, and supply the relevant definitions and results required to arrive at \cref{app-lem: langevin in a box stationary}.
As \cref{app-lem: langevin in a box stationary} is considered a well-known result, this section is mainly intended for completeness. Specifically, in the following we present some relevant definitions and results by \citet{kang2014characterization,kang2017submartingale}, and specifically, ones that relate solutions to SDERs (Definition 2.4 in \citep{kang2017submartingale}), to solutions to sub-martingale problems (Definition 2.9 in \citep{kang2017submartingale}), and that characterize the stationary distributions of such solutions. 
For simplicity, we sometimes do not state the results in full generality.

\paragraph{Setting.}
Let $\DOM \subset \bbR^\PARAMDIM$ be a domain (non-empty, connected, and open).
Let the drift term $\drift : \bbR^\PARAMDIM \to \bbR^\PARAMDIM$ and dispersion coefficient $\dispcoef : \bbR^\PARAMDIM \to \bbR^{\PARAMDIM \times \PARAMDIM}$ be measurable and locally bounded.
We also denote the diffusion coefficient by $\diffcoef \rb{\cdot} = \dispcoef \rb{\cdot} \dispcoef \rb{\cdot}^\top = \rb{\diffij \rb{\cdot}}_{i,j=1}^\PARAMDIM$, and denote its columns by $\diffcol_i \rb{\cdot}$.
We say that the diffusion coefficient is uniformly elliptic if there exists $\sigma > 0$ such that
\begin{align} \label{app-eq: unifom ellipticity}
    \forall \bfv \in \bbR^\PARAMDIM \,, \; \forall \bfx \in \DOMCLOS \quad \bfv^\top \diffcoef \rb{\bfx} \bfv > \sigma \norm{\bfv} \,.
\end{align}
Let $\BOUNDFIELD$ be a set valued mapping of allowed reflection directions defined on $\DOMCLOS$ such that $\BOUNDFIELD \rb{\bfx} = \cb{\vect{0}}$ for $\bfx \in \DOM$, and $\BOUNDFIELD \rb{\bfx}$ is a non-empty, closed and convex cone in $\bbR^\PARAMDIM$ such that $\cb{\vect{0}} \subseteq \BOUNDFIELD \rb{\bfx}$ for $\bfx \in \partial \DOM$, and furthermore assume that the set $\cb{\rb{\bfx, \bfv} \, : \, \bfx \in \DOMCLOS, \bfv \in \BOUNDFIELD\rb{\bfx}}$ is closed in $\bbR^{2 \PARAMDIM}$.
In addition, for $\bfx \in \partial \DOM$ let $\NORMFIELD \rb{\bfx}$ be the set of inwards normals to $\DOM$ at $\bfx$, 
\begin{align*}
    \NORMFIELD \rb{\bfx} = \bigcup_{r > 0} \, \NORMFIELD_r \rb{\bfx} \,,
\end{align*}
\vspace{-4mm}
\begin{align*}
    \NORMFIELD_r \rb{\bfx} = \cb{\NORMAL \in \bbR^\PARAMDIM \setst \norm{\NORMAL} = 1,\, B_r \rb{\bfx - r\NORMAL} \cap \DOM = \emptyset} \,.
\end{align*}
Then, denote the set of boundary points with inward pointing cones
\begin{align*}
    \calU \triangleq \cb{\bfx \in \partial \DOM \, \mid \, \exists \NORMAL \in \NORMFIELD \rb{\bfx} \, : \, \forall \BOUNDVEC \in \BOUNDFIELD \rb{\bfx} \; \innerprod{\NORMAL}{\BOUNDVEC} > 0} \,,
\end{align*}
and let $\calV \triangleq \partial \DOM \setminus \calU$. 
For example, if $\DOM$ is a convex polyhedron and $\BOUNDFIELD \rb{\bfx}$ is the cone defined by the positive span of $\NORMFIELD \rb{\bfx}$ we get that $\calV = \emptyset$.

Throughout this section and the rest of the paper, the \emph{stochastic differential equation with reflection} (SDER) in $\rb{\DOM, \BOUNDFIELD}$
\begin{align} \label{app-eq: sder general}
    d\bfx_t = \drift \rb{\bfx_t} dt + \dispcoef \rb{\bfx_t} d\bfw_t + d\reflectionprocess \,,
\end{align}
where $\bfw_t$ is a Wiener process, and $\reflectionprocess$ is a reflection process with respect to some filtration, is understood as in Definition 2.4 of \citep{kang2017submartingale}, and the \emph{submartingale problem} associated with $\rb{\DOM, \BOUNDFIELD}$, $\calV$, $\drift$ and $\dispcoef$, refers to Definition 2.9 of \citep{kang2017submartingale}.
In addition, we use the following definition.

\begin{definition}[Piecewise $\calC^2$ with continuous reflection; Definition 2.11 in \citep{kang2017submartingale}] \label{app-def: piecesie c2}
    The pair $\rb{\DOM, \BOUNDFIELD}$ is said to be piecewise $\calC^2$ with continuous reflection if it satisfies the following properties:
    \begin{enumerate}
        \item $\DOM$ is a non-empty domain in $\bbR^\PARAMDIM$ with representation 
        \begin{align*}
            \DOM = \bigcap_{i\in \calI} \DOM^i \,,
        \end{align*}
        where $\calI$ is a finite set and for each $i \in \calI$, $\DOM^i$ is a non-empty domain with $\calC^2$ boundary in the sense that for each $\bfx \in \partial \DOM$, there exist a neighborhood $\neighb{\bfx}$ of $\bfx$, and functions $\varphi^i_{\bfx} \in \calC^2 \rb{\bbR^\PARAMDIM}$, $i \in \calI \rb{\bfx} = \cb{i \in \calI \setst \bfx \in \partial \DOM^i}$, such that
        \begin{align*}
            \neighb{\bfx} \cap \DOM^i = \cb{\bfz \in \neighb{\bfx} \setst \varphi^i_{\bfx} \rb{\bfz} > 0}\,, \; \neighb{\bfx} \cap \partial \DOM^i = \cb{\bfz \in \neighb{\bfx} \setst \varphi^i_{\bfx} \rb{\bfz} = 0} \,,
        \end{align*}
        and $\nabla \varphi^i_{\bfx} \neq \bfzero$ on $\neighb{\bfx}$.
        For each $\bfx \in \partial \DOM^i$ and $i \in \calI \rb{\bfx}$, let 
        \begin{align*}
            \NORMAL^i \rb{\bfx} = \frac{\nabla \varphi^i_{\bfx}}{\norm{\nabla \varphi^i_{\bfx}}}
        \end{align*}
        denote the unit inward normal vector to $\partial \DOM^i$ at $\bfx$.
        \item The (set-valued) direction ``vector field'' $\BOUNDFIELD : \DOMCLOS \to \bbR^\PARAMDIM$ is given by 
        \begin{align} \label{app-eq: piecewise continuous reflection boundary field}
            \BOUNDFIELD \rb{\bfx} = \begin{cases}
                \cb{\bfzero} & \bfx \in \DOM \,, \\
                \cb{\sum_{i \in \calI \rb{\bfx}} \alpha_i \BOUNDVEC^i \rb{\bfx} \setst \alpha_i \ge 0\,, \; i \in \calI \rb{\bfx}} & \bfx \in \partial \DOM \,,
            \end{cases}
        \end{align}
        where for each $i \in \calI$, $\BOUNDVEC^i \rb{\cdot}$ is a continuous unit vector field defined on $\partial \DOM^i$ that satisfies for all $\bfx \in \partial \DOM^i$
        \begin{align*}
            \innerprod{\NORMAL^i \rb{\bfx}}{\BOUNDVEC^i \rb{\bfx}} > 0 \,.
        \end{align*}
        If $\BOUNDFIELD^i \rb{\cdot}$ is constant for every $i \in \calI$, the the pair $\rb{\DOM, \BOUNDFIELD}$ is said to be piecewise $\calC^2$ with constant reflection. 
        If, in addition, $\NORMAL^i \rb{\cdot}$ is constant for every $i \in \calI$, then the pair $\rb{\DOM, \BOUNDFIELD}$ is said to be polyhedral with piecewise constant reflection.
    \end{enumerate}
\end{definition}

In addition, let $\calS$ denote the smooth parts of $\partial \DOM$.

\begin{remark} \label{app-rem: examples of piecewise c2}
    It is clear from the definition that if $\DOM$ is polyhedral, \ie if all $\DOM^i$'s are half-spaces, and $\BOUNDFIELD$ consists of inward normal reflections, then $\rb{\DOM, \BOUNDFIELD}$ is polyhedral with piecewise constant reflection.
\end{remark}

\begin{theorem}[Theorem 3 in \citep{kang2014characterization}, simplified] \label{app-thm: submartingale stationary}
    Suppose that the pair $\rb{\DOM, \BOUNDFIELD}$ is piecewise $\calC^2$ with continuous reflection, for all $i\in \calI$ and $\bfx \in \partial \DOM^i$, $\innerprod{\NORMAL^i \rb{\bfx}}{\BOUNDVEC^i \rb{\bfx}} = 1$, $\calV = \emptyset$, $\drift \rb{\cdot} \in \calC^1 \rb{\DOMCLOS}$ and $\diffcoef \in \calC^2 \rb{\DOMCLOS}$ (elementwise), and the submartingale problem associated with $\rb{\DOM, \BOUNDFIELD}$ and $\calV$ is well posed.
    Furthermore, suppose there exists a nonnegative function $p \in \calC^2 \rb{\DOMCLOS}$ with $Z_p = \int_{\DOMCLOS} p\rb{\bfx} d\bfx < \infty$ that solves the PDE defined by the following three relations:
    \begin{enumerate}
        \item For $\bfx \in \DOM$:
        \begin{align} \label{app-eq: general stationarity 1 FPE}
            0 = \frac{1}{2} \sum_{i,j = 1}^\PARAMDIM \frac{\partial^2}{\partial x_i \partial x_j} \rb{\diffij \rb{\bfx} p\rb{\bfx}} - \sum_{i=1}^\PARAMDIM \frac{\partial}{\partial x_i} \rb{\drifti \rb{\bfx} p\rb{\bfx}}\,.
        \end{align}
        \item For each $i \in \calI$ and $\bfx \in \partial \DOM \cap \calS$, 
        \begin{align} \label{app-eq: general stationarity 2 boundary 1}
            0 = -2 p \rb{\bfx} \innerprod{\NORMAL^i \rb{\bfx}}{\drift \rb{\bfx}} + \NORMAL^i \rb{\bfx}^\top \diffcoef \rb{\bfx} \nabla p \rb{\bfx} - \nabla \cdot \rb{p \rb{\bfx} \bfq^i \rb{\bfx}} + p\rb{\bfx} K_i \rb{\bfx} \,,
        \end{align}
        where 
        \begin{align*} 
            \bfq^i \rb{\bfx} \triangleq \NORMAL^i \rb{\bfx}^\top \diffcoef \rb{\bfx} \NORMAL^i \rb{\bfx} \BOUNDVEC^i \rb{\bfx} - \diffcoef \rb{\bfx} \NORMAL^i \rb{\bfx}
        \end{align*}
        and
        \begin{align*}
            K_i \rb{\bfx} \triangleq \innerprod{\NORMAL^i \rb{\bfx}}{\nabla \cdot \diffcoef \rb{\bfx}} = \sum_{k=1}^\PARAMDIM n^i \rb{\bfx}_k \sum_{j=1}^\PARAMDIM \frac{\partial \diffkj}{\partial x_j} \rb{\bfx} \,.
        \end{align*}
        \item For each $i, j \in \calI$, $i\neq j$, and $\bfx \in \partial \DOM^i \cap \partial \DOM^j \cap \partial \DOM$,
        \begin{align} \label{app-eq: general stationarity 2 boundary 2}
            p\rb{\bfx} \rb{\innerprod{\bfq^i \rb{\bfx}}{\NORMAL^j \rb{\bfx}} + \innerprod{\bfq^j \rb{\bfx}}{\NORMAL^i \rb{\bfx}}} = 0\,.
        \end{align}
    \end{enumerate}
    Then the probability measure on $\DOMCLOS$ defined by 
    \begin{align} \label{app-eq: general stationary}
        \stationarydist \rb{A} \triangleq \frac{1}{Z_p} \intop_{A} p \rb{\bfx} d\bfx \,, \quad A \in \calB \rb{\DOMCLOS} \,, 
    \end{align}
    is a stationary distribution for the well-posed submartingale problem.
\end{theorem}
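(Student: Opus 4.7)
The plan is to verify that $\stationarydist$ defined by $d\stationarydist = Z_p^{-1} p \, d\bfx$ satisfies the stationarity condition for the well-posed submartingale problem, namely that $\int_{\DOMCLOS} \calL f \, d\stationarydist = 0$ for every test function $f \in \calC^2 \rb{\DOMCLOS}$ with $\innerprod{\BOUNDVEC^i \rb{\bfx}}{\nabla f \rb{\bfx}} = 0$ on every smooth piece $\partial \DOM^i \cap \calS$, where $\calL f = \tfrac12 \sum_{i,j} \diffij \partial_i \partial_j f + \sum_i \drifti \partial_i f$ is the formal interior generator. Well-posedness of the submartingale problem is what lets this interior/boundary reformulation characterize stationary measures, and by linearity it is equivalent to the standard submartingale requirement $\int \calL f \, d\stationarydist \ge 0$ whenever $\innerprod{\BOUNDVEC^i}{\nabla f} \ge 0$ on the boundary pieces.

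The engine of the argument is a double application of the divergence theorem on $\DOM$. Writing $\int_{\DOM} \calL f \cdot p \, d\bfx$ and moving both derivatives off $f$ onto $p$ produces an interior term $\int_{\DOM} f \cdot \calL^{\ast} p \, d\bfx$ that vanishes by hypothesis \eqref{app-eq: general stationarity 1 FPE}, together with boundary integrals supported on the smooth pieces $\partial \DOM^i \cap \calS$. Each such surface integral couples $f$ and $\nabla f$ to $p$, $\nabla p$, $\drift$, and $\diffcoef$, and in particular to the normal directional derivative $\innerprod{\NORMAL^i}{\nabla f}$. The key algebraic step is that $\innerprod{\NORMAL^i}{\BOUNDVEC^i} = 1$ together with $\innerprod{\BOUNDVEC^i}{\nabla f} = 0$ forces $\innerprod{\diffcoef \NORMAL^i}{\nabla f} = -\innerprod{\bfq^i}{\nabla f}$, and by construction $\bfq^i$ is tangent to $\partial \DOM^i$. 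This converts the remaining normal-derivative contribution into a purely tangential one amenable to a surface-intrinsic integration by parts.

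Performing tangential integration by parts on each smooth piece then transfers the last derivative of $f$ onto $p \bfq^i$, which is exactly where $\nabla \cdot \rb{p \bfq^i}$ and the normal-divergence contribution $p K_i$ appear, assembling precisely the combination that hypothesis \eqref{app-eq: general stationarity 2 boundary 1} annihilates on $\partial \DOM^i \cap \calS$. The cost of performing tangential integration by parts on a manifold with corners is a codimension-two contribution at the intersections $\partial \DOM^i \cap \partial \DOM^j \cap \partial \DOM$, and a direct computation would show this contribution involves exactly $p \rb{\innerprod{\bfq^i}{\NORMAL^j} + \innerprod{\bfq^j}{\NORMAL^i}}$, killed by hypothesis \eqref{app-eq: general stationarity 2 boundary 2}. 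The main obstacle I anticipate is carefully deriving the tangential surface-divergence identity with the correct $K_i$ factor and verifying that the corner contributions take the stated symmetric form; once that identity is in place, matching it against the three hypotheses is essentially algebraic bookkeeping.
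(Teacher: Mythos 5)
The paper does not prove this theorem: it is stated explicitly as a cited result (Theorem 3 of Kang and Ramanan, 2014), and the paper uses it as a black box in \cref{app-cor: combine theorems from kang and ramanan} and \cref{app-lem: FP with Neumann stationarity}. Your attempt to reconstruct the proof from scratch is therefore a genuinely different and more ambitious route than what the paper does. The broad architecture you describe is consistent with how the Kang--Ramanan argument actually goes: the basic adjoint relation, double integration by parts to turn $\int_\DOM (\calL f)\, p\, \diff\bfx$ into the Fokker--Planck term plus boundary terms, the observation that $\innerprod{\bfq^i}{\NORMAL^i}=0$ (so $\bfq^i$ is tangential) and that $\innerprod{\NORMAL^i}{\BOUNDVEC^i}=1$ with $\innerprod{\BOUNDVEC^i}{\nabla f}=0$ gives $\innerprod{\diffcoef\NORMAL^i}{\nabla f}=-\innerprod{\bfq^i}{\nabla f}$, and the corner terms being controlled by \eqref{app-eq: general stationarity 2 boundary 2}. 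These identities check out.

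That said, two things keep this from being a proof. First, you yourself flag that the tangential surface-divergence identity (producing exactly the $-\nabla\cdot(p\bfq^i)+pK_i$ combination) and the codimension-two corner computation are not carried out; these are where nearly all the real work in Kang--Ramanan lives, and without them the sketch doesn't close. Second, and more structurally: the basic adjoint relation for a reflected diffusion generically involves a separate boundary measure $\nu$ on $\partial\DOM$ (tracking the local time), and the statement $\int_{\DOMCLOS}\calL f\,\diff\stationarydist = 0$ for test functions with $\innerprod{\BOUNDVEC^i}{\nabla f}=0$ is only one half of that relation. Passing from ``the BAR holds'' to ``$\stationarydist$ is stationary for the well-posed submartingale problem'' is not automatic and is a substantive part of the cited theorem; well-posedness by itself gives uniqueness of the law of the process, not directly the equivalence you invoke. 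So the sketch under-specifies precisely the step where the heavy lifting occurs. Given that the paper deliberately offloads all of this to the citation, a proof by citation would be the faithful route here; if you do want to reconstruct it, the boundary measure and the well-posedness-to-stationarity step are the two places that need to be made explicit.
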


We are now ready to state a characterization of stationary distributions of \eqref{app-eq: sder general}.
Note that for simplicity, we do not maintain full generality.
\begin{corollary} [Stationary distribution of weak solutions to SDERs] \label{app-cor: combine theorems from kang and ramanan}
    Suppose that, 
    $\DOM$ is convex and bounded,
    $\drift \in \calC^1 \rb{\DOMCLOS}$ and $\diffcoef \in \calC^2 \rb{\DOMCLOS}$,
    $\rb{\DOM, \BOUNDFIELD}$ is piecewise $\calC^2$ with continuous reflection,
    $\diffcoef$ is uniformly elliptic (see \eqref{app-eq: unifom ellipticity}),
    and $\calV = \emptyset$.
    Then $p \in \calC^2$ satisfying the conditions in  \cref{app-thm: submartingale stationary} defines a  stationary distribution for \eqref{app-eq: sder general}.
\end{corollary}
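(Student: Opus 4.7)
The plan is to deduce the corollary from \cref{app-thm: submartingale stationary} by transferring the stationary distribution established for the submartingale problem to weak solutions of the SDER \eqref{app-eq: sder general}. Broadly, we need to check (i) that all the hypotheses of \cref{app-thm: submartingale stationary} are satisfied under the assumptions of the corollary, (ii) that the submartingale problem associated with $(\DOM,\BOUNDFIELD)$, $\calV=\emptyset$, $\drift$, and $\diffcoef$ is well posed, and (iii) that any weak solution of \eqref{app-eq: sder general} in the sense of Definition 2.4 of \citet{kang2017submartingale} induces a solution of that submartingale problem, so a measure that is stationary for the latter is also stationary for the former.

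For step (i), boundedness of $\DOM$ together with $\drift\in\calC^{1}(\DOMCLOS)$ and $\diffcoef\in\calC^{2}(\DOMCLOS)$ immediately yields the local regularity assumed in \cref{app-thm: submartingale stationary}. The piecewise $\calC^2$ continuous-reflection hypothesis is assumed directly, and $\calV=\emptyset$ is also assumed; the only mild issue is the normalization $\innerprod{\NORMAL^i(\bfx)}{\BOUNDVEC^i(\bfx)}=1$ in \cref{app-thm: submartingale stationary}, but this is harmless because \cref{app-def: piecesie c2} already imposes $\innerprod{\NORMAL^i(\bfx)}{\BOUNDVEC^i(\bfx)}>0$ along each smooth piece and $\BOUNDFIELD(\bfx)$ is a cone, so we can rescale each $\BOUNDVEC^i$ to meet the normalization without changing the cone $\BOUNDFIELD(\bfx)$ in \eqref{app-eq: piecewise continuous reflection boundary field} or the reflection process.

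For step (ii), well-posedness of the submartingale problem under our hypotheses is the substantive input. Convexity of $\DOM$ together with uniform ellipticity of $\diffcoef$, the piecewise-$\calC^2$ structure of the boundary with continuous reflection field pointing strictly into $\DOM$, the condition $\calV=\emptyset$, and the regularity $\drift\in\calC^{1}(\DOMCLOS)$, $\diffcoef\in\calC^{2}(\DOMCLOS)$ are exactly the kind of assumptions under which \citet{kang2017submartingale} (see their existence/uniqueness results for submartingale problems on piecewise smooth domains) establish well-posedness. I would cite their theorem rather than reprove it.

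For step (iii), I invoke the equivalence between weak solutions of the SDER \eqref{app-eq: sder general} and solutions of the associated submartingale problem established in \citet{kang2017submartingale} (their Definitions 2.4 and 2.9 together with the corresponding equivalence theorem). Under this equivalence, the law of the solution at each time matches the law under the submartingale problem, so the stationary measure $\stationarydist$ furnished by \cref{app-thm: submartingale stationary} via \eqref{app-eq: general stationary} is also stationary for the SDER. The hardest part, conceptually, is step (ii)--(iii): the verification of well-posedness and of the SDE-to-submartingale equivalence on a piecewise smooth boundary is delicate, but all the needed machinery is already developed in \citet{kang2014characterization, kang2017submartingale}, so in the proof I will just cite their results and check that the stated hypotheses of the corollary imply the hypotheses of those theorems.
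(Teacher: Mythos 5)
Your overall architecture — verify the hypotheses of \cref{app-thm: submartingale stationary}, establish that the submartingale problem is well posed, then transfer the stationary measure back to weak solutions of the SDER via the equivalence in \citet{kang2017submartingale} — is the right shape, and your step~(i) observation about rescaling the $\BOUNDVEC^i$ to achieve $\innerprod{\NORMAL^i}{\BOUNDVEC^i}=1$ is a legitimate point that the paper leaves implicit.

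The gap is in step~(ii). You assert that \citet{kang2017submartingale} directly provides ``existence/uniqueness results for submartingale problems on piecewise smooth domains'' under the corollary's hypotheses, but that paper is primarily about \emph{equivalences} among formulations of reflected diffusions (submartingale problem, weak solutions of the SDER, constrained martingale problems) and about characterizing when well-posedness of one implies well-posedness of another; it does not by itself supply the existence and uniqueness. The paper's proof closes this gap differently: compactness of $\DOMCLOS$ together with $\drift\in\calC^1(\DOMCLOS)$, $\diffcoef\in\calC^2(\DOMCLOS)$ makes the coefficients globally Lipschitz, which lets one invoke Exercise~2.5.1 and Theorem~2.5.4 of \citet{sder-pilipenko-2014} to get a unique strong solution of the SDER \eqref{app-eq: sder general}; then Theorems~1 and~3 of \citet{kang2017submartingale} transfer this SDER well-posedness to well-posedness of the associated submartingale problem, at which point \cref{app-thm: submartingale stationary} applies. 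Without a concrete existence/uniqueness input like Pilipenko's (or an equivalent Skorokhod-map / Lions--Sznitman argument), your step~(ii) is circular: the equivalence theorems you invoke in step~(iii) presuppose, rather than deliver, the well-posedness of at least one of the two formulations. So the fix is to replace the unsupported direct appeal to \citet{kang2017submartingale} in step~(ii) with an explicit strong-existence result for the SDER under Lipschitz coefficients on a bounded convex domain, and then use the Kang--Ramanan equivalence only to move between formulations.
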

\begin{proof}
    Assumptions compactness of the domain, and continuous differentiability of the drift and dispersion coefficient imply that they are Lipschitz, hence Exercise 2.5.1 and Theorem 2.5.4 of \citep{sder-pilipenko-2014} imply that there exists a unique strong solution to the SDER \eqref{app-eq: sder general}.
    Then, piecewise $\calC^2$ with continuous reflection, the uniform ellipticity assumption, Theorems 1 and 3 of \citep{kang2017submartingale}, and \cref{app-thm: submartingale stationary} imply that if there exists $p \in \calC^2$ satisfying \eqref{app-eq: general stationarity 1 FPE}-\eqref{app-eq: general stationarity 2 boundary 2}, then \eqref{app-eq: general stationary} is a stationary distributions of \eqref{app-eq: sder general}.
\end{proof}
In the next subsection we use this to derive explicit expressions for the stationary distribution in the setting of this paper.

\subsection{SDER with isotropic diffusion in a box} \label{app-sec: SDER in a box}

We proceed to assume that the diffusion term is a scalar matrix of the form $\diffcoef \rb{\bfx} = 2 \sigma^2 \rb{\bfx} \bfI_\PARAMDIM$, and that $\DOM$ is a bounded box in $\bbR^\PARAMDIM$, \ie there exist $\cb{m_i < M_i}_{i=1}^\PARAMDIM$ such that 
\begin{align} \label{app-eq: box dom 1}
    \DOM = \prod_{i=1}^\PARAMDIM \rb{m_i, M_i} = \bigcap_{i=1}^\PARAMDIM \rb{\DOM^i_m \cap \DOM^i_M} \,,
\end{align}
where 
\begin{align} \label{app-eq: box dom 2}
    \DOM^i_m \triangleq \cb{\bfx \in \bbR^\PARAMDIM \setst x_i > m_i}\,, \; \Omega^i_M \triangleq \cb{\bfx \in \bbR^\PARAMDIM \setst x_i < M_i} \,,
\end{align}
and that the reflecting field is normal to the boundary, \ie given by \eqref{app-eq: piecewise continuous reflection boundary field} with
\begin{align} \label{app-eq: box reflecting field}
    \BOUNDVEC^i_m \equiv \NORMAL^i_m \equiv \bfe_i \,, \; \text{and} \; \BOUNDVEC^i_M \equiv \NORMAL^i_M \equiv - \bfe_i
\end{align}
for $i = 1, \dots, \PARAMDIM$.
In this setting, we can considerably simplify the conditions in \cref{app-thm: submartingale stationary}, as done in the following corollary.

\begin{lemma}[Stationarity condition for SDER in a box with normal reflection] \label{app-lem: FP with Neumann stationarity}
    Let $\drift \rb{\cdot} \in \calC^1$, and let $\sigma\rb{\cdot} \in \calC^2$ be uniformly bounded away from 0, \ie  there exists $\sigma^2 > 0$ such that for all $\bfx \in \DOMCLOS$, $\sigma^2 \rb{\bfx} > \sigma^2$. 
    If there exists $p \in \calC^2$ such that
    \begin{align} \label{app-eq: FP with Neumann}
        \begin{cases}
            0 = \nabla \cdot \rb{\nabla \rb{ \sigma^2 \rb{\bfx} p\rb{\bfx}} - \drift \rb{\bfx} p\rb{\bfx}} & \bfx \in \DOM \,, \\
            0 = \innerprod{\nabla \rb{ \sigma^2 \rb{\bfx} p\rb{\bfx}} - \drift \rb{\bfx} p\rb{\bfx}}{\NORMAL \rb{\bfx}} & \bfx \in \partial \DOM \,,
        \end{cases}
    \end{align}
    and $\int_{\DOMCLOS} p \rb{\bfx} d \bfx = 1$, then $p$ is a stationary distribution of 
    \begin{align} \label{app-eq: specific SDER}
        d \bfx_t = \drift \rb{\bfx_t} dt + \sqrt{2 \sigma^2 \rb{\bfx_t}} d\bfw_t + d\reflectionprocess
    \end{align}
    in $\DOM$.
\end{lemma}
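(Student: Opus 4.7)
The plan is to derive the lemma directly from \Cref{app-cor: combine theorems from kang and ramanan} (with Remark C.3 handling the piecewise $\calC^2$ structure of the box), by specializing the three general stationarity conditions \eqref{app-eq: general stationarity 1 FPE}--\eqref{app-eq: general stationarity 2 boundary 2} to the isotropic diffusion $\bfA(\bfx) = 2\sigma^2(\bfx) \bfI_d$ and the normal reflection \eqref{app-eq: box reflecting field}.

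First I would verify the hypotheses of \Cref{app-cor: combine theorems from kang and ramanan}. The box $\DOM$ is convex and bounded by construction; the drift is $\calC^1$ by assumption and $\bfA = 2\sigma^2 \bfI_d \in \calC^2$ since $\sigma^2 \in \calC^2$; the uniform ellipticity condition \eqref{app-eq: unifom ellipticity} follows from the uniform lower bound $\sigma^2(\bfx) > \sigma^2 > 0$; and Remark C.3 tells us that the polyhedral box with normal reflection is piecewise $\calC^2$ with (even piecewise constant) reflection. The only non-automatic verification is $\calV = \emptyset$: at any boundary point $\bfx$ at which the active index set consists of faces $\{x_i = m_i : i \in I\}\cup\{x_i = M_i : i \in J\}$ for disjoint $I,J$, the reflection cone $\BOUNDFIELD(\bfx)$ is generated by $\{\bfe_i : i\in I\}\cup\{-\bfe_i : i \in J\}$, and the unit vector $\NORMAL = |I\cup J|^{-1/2}\bigl(\sum_{i \in I} \bfe_i - \sum_{i \in J}\bfe_i\bigr)$ lies in $\NORMFIELD(\bfx)$ and has strictly positive inner product with every nonzero element of $\BOUNDFIELD(\bfx)$, so $\bfx \in \calU$.

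Next I would translate conditions \eqref{app-eq: general stationarity 1 FPE}--\eqref{app-eq: general stationarity 2 boundary 2} into our isotropic/normal-reflection setting. For \eqref{app-eq: general stationarity 1 FPE}, since $\diffij = 2\sigma^2\delta_{ij}$, the quadratic part becomes $\sum_i \partial_i^2(\sigma^2 p) = \nabla\cdot \nabla(\sigma^2 p)$ and the first-order part becomes $\nabla\cdot(\drift p)$, so \eqref{app-eq: general stationarity 1 FPE} collapses to exactly the interior PDE in \eqref{app-eq: FP with Neumann}. For \eqref{app-eq: general stationarity 2 boundary 1}, on each smooth face with inward normal $\NORMAL = \pm \bfe_i$ and $\BOUNDVEC = \NORMAL$, a short computation gives $\NORMAL^\top \bfA \NORMAL = 2\sigma^2$ and $\bfA \NORMAL = 2\sigma^2 \NORMAL$, hence $\bfq^i \equiv \bfzero$. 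Further, $(\nabla\cdot \bfA)_k = \sum_j \partial_j(2\sigma^2\delta_{kj}) = 2\partial_k \sigma^2$, so $K_i = 2\innerprod{\NORMAL}{\nabla \sigma^2}$. Substituting these in, \eqref{app-eq: general stationarity 2 boundary 1} reduces to $\innerprod{\NORMAL}{-p\drift + \sigma^2\nabla p + p\nabla\sigma^2} = \innerprod{\NORMAL}{\nabla(\sigma^2 p) - \drift p} = 0$, which is precisely the boundary condition in \eqref{app-eq: FP with Neumann}. Finally, at lower-dimensional faces where two indices are active simultaneously, \eqref{app-eq: general stationarity 2 boundary 2} holds trivially because $\bfq^i = \bfq^j = \bfzero$.

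I expect no serious obstacles; the derivation is essentially bookkeeping. The mildly delicate step is the $\calV = \emptyset$ verification at corners, where one must exhibit a concrete inward normal simultaneously compatible with all active faces. The computation of $\bfq^i$ and $K_i$ must also be handled with care regarding the factor $2$ between $\bfA = 2\sigma^2 \bfI$ and the physical diffusion, but once these are done the two nontrivial PDE conditions in \Cref{app-thm: submartingale stationary} collapse exactly to the two lines of \eqref{app-eq: FP with Neumann}, concluding the proof.
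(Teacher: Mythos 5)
Your proposal is correct and follows essentially the same route as the paper: invoke \Cref{app-cor: combine theorems from kang and ramanan}, then specialize \eqref{app-eq: general stationarity 1 FPE}--\eqref{app-eq: general stationarity 2 boundary 2} to $\bfA = 2\sigma^2\bfI$ with normal reflection, observing $\bfq^i\equiv\bfzero$ and $K_i = 2\innerprod{\NORMAL}{\nabla\sigma^2}$ so that both nontrivial conditions collapse to the two lines of \eqref{app-eq: FP with Neumann}. You are in fact slightly more careful than the paper on two small points --- you give an explicit corner-by-corner verification of $\calV=\emptyset$ (where the paper simply cites the earlier remark about convex polyhedra), and you track the factor $2$ in $\bfA = 2\sigma^2\bfI$ consistently (the paper's intermediate lines drop it, though the conclusion is unaffected since $\bfq^i_m=\bfzero$ either way).
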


\begin{remark}
    \eqref{app-eq: FP with Neumann} is exactly the stationarity condition derived from the Fokker-Planck equation with Neumann boundary conditions ensuring conservation of mass.
\end{remark}

\begin{proof}
Under the assumptions we see that the conditions of \cref{app-cor: combine theorems from kang and ramanan} are satisfied, and we can use \eqref{app-eq: general stationarity 1 FPE}-\eqref{app-eq: general stationarity 2 boundary 2} to find stationary distributions of \eqref{app-eq: specific SDER}.
First, notice that \eqref{app-eq: general stationarity 1 FPE} simplifies to 
\begin{align*}
    0 &= \frac{1}{2} \sum_{i,j = 1}^\PARAMDIM \frac{\partial^2}{\partial x_i \partial x_j} \rb{\diffij \rb{\bfx} p\rb{\bfx}} - \sum_{i=1}^\PARAMDIM \frac{\partial}{\partial x_i} \rb{\drifti \rb{\bfx} p\rb{\bfx}} \\
    &= \frac{1}{2} \sum_{i = 1}^\PARAMDIM \frac{\partial^2}{\partial x_i^2} \rb{2 \sigma^2 \rb{\bfx} p\rb{\bfx}} - \sum_{i=1}^\PARAMDIM \frac{\partial}{\partial x_i} \rb{\drifti \rb{\bfx} p\rb{\bfx}} \\
    &= \nabla \cdot \rb{\nabla \rb{ \sigma^2 \rb{\bfx} p\rb{\bfx}} - \drift \rb{\bfx} p\rb{\bfx}} \,.
\end{align*}

Next, we can considerably simplify the boundary conditions.
First, notice that $\calS$ consists of the interior of the domain's faces so for $\bfx \in \partial \DOM \cap \calS$, the set of active boundary regions $\calI \rb{\bfx}$ is a singleton $\calI \rb{\bfx} = \cb{\rb{i, s}}$, for some $i = 1, \dots, \PARAMDIM$ and $s \in \cb{m, M}$.
We focus on the lower boundaries ($m$), as the conditions for the upper boundaries are symmetric.

For $i=1,\dots, \PARAMDIM$ and $\bfx \in \partial \DOM \cap \calS$, $\BOUNDVEC^i_m \rb{\bfx} = \NORMAL^i_m \rb{\bfx} = \bfe_i$ so 
\begin{align*}
    \bfq^i_m \rb{\bfx} &= \NORMAL^i_m \rb{\bfx}^\top \diffcoef \rb{\bfx} \NORMAL^i_m \rb{\bfx} \BOUNDVEC^i_m \rb{\bfx} - \diffcoef \rb{\bfx} \NORMAL^i_m \rb{\bfx} \\
    &= \sigma^2 \rb{\bfx} \rb{\bfe_i^\top \bfI_\PARAMDIM \bfe_i} \bfe_i - \sigma^2 \rb{\bfx} \bfI_\PARAMDIM \bfe_i \\
    &= \bfzero \,,
\end{align*}
so \eqref{app-eq: general stationarity 2 boundary 2} is satisfied.
In addition, 
\begin{align*}
    K^i_m \rb{\bfx} = \nabla \cdot \diffcol_i \rb{\bfx} = \frac{\partial}{\partial x_i} \sigma^2 \rb{\bfx} \,,
\end{align*}
so \eqref{app-eq: general stationarity 2 boundary 1} becomes, for all $i=1, \dots, \PARAMDIM$, 
\begin{align*}
    0 &= -2 p \rb{\bfx} \innerprod{\NORMAL^i_m \rb{\bfx}}{\drift \rb{\bfx}} + \NORMAL^i_m \rb{\bfx}^\top \diffcoef \rb{\bfx} \nabla p \rb{\bfx} - \nabla \cdot \rb{p \rb{\bfx} \bfq^i_m \rb{\bfx}} + p\rb{\bfx} K^i_m \rb{\bfx} \\
    0 &= -2 p\rb{\bfx} \drifti \rb{\bfx} + \diffcol_i^\top \nabla p \rb{\bfx} + p \rb{\bfx} \frac{\partial}{\partial x_i} \sigma^2 \rb{\bfx} \\
    0 &= -2 p\rb{\bfx} \drifti \rb{\bfx} + \sigma^2 \rb{\bfx} \frac{\partial}{\partial x_i} p \rb{\bfx} + p \rb{\bfx} \frac{\partial}{\partial x_i} \sigma^2 \rb{\bfx} \,,
\end{align*}
which is
\begin{align*}
    0 &= - p\rb{\bfx} b_i \rb{\bfx} + \frac{1}{2} \frac{\partial}{\partial x_i} \rb{p\rb{\bfx} \sigma^2 \rb{\bfx}} \\
    &= \innerprod{\nabla \rb{ \sigma^2 \rb{\bfx} p\rb{\bfx}} - \drift \rb{\bfx} p\rb{\bfx}}{\NORMAL \rb{\bfx}} \,.
\end{align*}
\end{proof}

\subsubsection{Reflected Langevin dynamics in a box} \label{app-sec: langevin dynamics}
In this section, we derive some useful properties of the SDER
\begin{align} \label{app-eq: general reflected langevin complementary}
    d\bfx_t = - \nabla L \rb{\bfx_t} + \sqrt{2 \beta^{-1} \sigma^2 \rb{\bfx_t}} d\bfw_t + d\reflectionprocess \,,
\end{align}
in a box domain as defined in \eqref{app-eq: box dom 1}-\eqref{app-eq: box reflecting field}, where $L \ge 0$ is some (loss/potential) function, and $\beta > 0$ is an inverse temperature parameter.
First, we characterize the stationary distribution of this process.

\begin{recall}[\cref{app-lem: langevin in a box stationary}] \label{app-lem: langevin in a box stationary with proof}
    If $L, \sigma^2 \in \calC^2$, $\sigma^2 \rb{\cdot} > 0$ is uniformly bounded away from 0 in $\DOMCLOS$,  
    \begin{align*}
        Z = \intop_{\DOMCLOS} \frac{1}{\sigma^{2}\rb{\bfx}} \exp\rb{-\beta\intop\frac{\nabla L\rb{\bfx}}{\sigma^{2}\rb{\bfx}}d\bfx} < \infty \,,
    \end{align*}
    the integrals exist, and the field $\nabla L / \sigma^2$ is conservative (curl-free),
    then 
    \begin{align} \label{app-eq: stationary langevin complementary}
        \stationarydist \rb{\bfx} = \frac{1}{Z} \frac{1}{\sigma^{2}\rb{\bfx}} \exp\rb{-\beta\intop\frac{\nabla L\rb{\bfx}}{\sigma^{2}\rb{\bfx}}d\bfx} \,
    \end{align}
     is a stationary distribution of \eqref{app-eq: general reflected langevin complementary}.
\end{recall}
\begin{proof}
The drift term in this setting is $\drift = -\beta \nabla L$.
Therefore, from \cref{app-lem: FP with Neumann stationarity}, we get that any distribution that satisfies
\begin{align*}
    0 = \nabla \rb{\sigma^2 \rb{\bfx} \stationarydist \rb{\bfx}} + \beta \stationarydist \rb{\bfx} \nabla L \rb{\bfx}
\end{align*}
on $\DOMCLOS$, is a stationary distribution.
We can solve this PDE as
\begin{align*}
0 & =\beta\nabla L \rb{\bfx} \stationarydist \rb{\bfx} + \stationarydist \rb{\bfx} \nabla \sigma^{2} \rb{\bfx} + \sigma^{2} \rb{\bfx} \nabla \stationarydist \rb{\bfx}\\
 & =\stationarydist\rb{\bfx}\left(\beta\nabla L\rb{\bfx}+\nabla\sigma^{2}\rb{\bfx}\right)+\sigma^{2}\rb{\bfx}\nabla \stationarydist\rb{\bfx}
\end{align*}
\[
-\sigma^{2}\rb{\bfx}\nabla \stationarydist\rb{\bfx}=\stationarydist\rb{\bfx}\left(\beta\nabla L\rb{\bfx}+\nabla\sigma^{2}\rb{\bfx}\right)
\]
\[
\frac{\nabla \stationarydist}{\stationarydist}=-\frac{\beta\nabla L+\nabla\sigma^{2}}{\sigma^{2}}
\]
\[
\nabla\ln \stationarydist=-\frac{\beta\nabla L}{\sigma^{2}}-\nabla\ln\sigma^{2}
\]
\[
\nabla\ln\left(\stationarydist\cdot\sigma^{2}\right)=-\frac{\beta\nabla L}{\sigma^{2}}\,.
\]
Then, 
\[
\ln\left(\stationarydist\cdot\sigma^{2}\right)=-\beta\intop\frac{\nabla L}{\sigma^{2}}+C
\]
where we used the assumption that the integral on the RHS exists, and is well defined.
Hence  
\[
\stationarydist\rb{\bfx}\propto\frac{1}{\sigma^{2}\rb{\bfx}}\exp\left(-\beta\intop\frac{\nabla L\rb{\bfx}}{\sigma^{2}\rb{\bfx}}d\bfx\right)\,.
\]
\end{proof}

When the integral in \eqref{app-eq: stationary langevin complementary} is solvable, we can find an explicit expression for the stationary distribution, as was done in \cref{app-sec: stationary distirbutions of cld}.

\newpage


\end{document}